\theoremstyle{plain}
\newtheorem{theorem}{Theorem}[section]
\newtheorem{proposition}[theorem]{Proposition}
\newtheorem{lemma}[theorem]{Lemma}
\theoremstyle{definition}
\newtheorem{definition}[theorem]{Definition}
\theoremstyle{remark}
\renewcommand{\bar}{\overline}
\pgfplotsset{}
\tikzset{
    font={\fontsize{12pt}{12}\selectfont},
}
\pgfplotsset{
    compat=1.5.1,
    primary/.style={color=black, style=solid, line width=1.5pt}, 
    secondary/.style={color=red, style=solid, line width=1.5pt}, 
}
\pgfplotsset{
	compat=1.16,
	oracle/.style={color=red, style=solid, line width=1.5pt},
	bound/.style={color=blue, style=solid, line width=1.5pt},
	objective/.style={color=black, style=solid, line width=1.5pt},
}
\definecolor{BurntOrange}{HTML}{CC5500}
\definecolor{DarkFern}{HTML}{407428}
\definecolor{CBRed}{HTML}{994F00}
\definecolor{CBBlue}{HTML}{006CD1}
\colorlet{Fern}{DarkFern!85!white}
\colorlet{LightFern}{DarkFern!20!white}
\colorlet{LightCerulean}{CBBlue!20!white}
\def\bfI{\mathbf{I}}
\def\bbN{\mathbb{N}}
\def\calB{\mathcal{B}}
\def\calC{\mathcal{C}}
\def\calG{\mathcal{G}}
\def\calI{\mathcal{I}}
\def\calJ{\mathcal{J}}
\DeclarePairedDelimiter{\ceil}{\lceil}{\rceil}
\newcommand{\abs}[1]{\left\vert #1\right\vert}
\newcommand{\rbr}[1]{\left(#1\right)}
\newcommand{\sbr}[1]{\left[#1\right]}
\newcommand{\cbr}[1]{\left\{#1\right\}}
\newcommand{\abr}[1]{\left\langle#1\right\rangle}
\def\norm#1{\|#1\|}
\def\argmin{\mathop{\rm arg\,min}}
\newcommand{\sign}{\text{sign}}
\def\half{\frac 1 2}
\newcommand{\R}{\mathbb{R}}
\newcommand{\yk}{y_k}
\newcommand{\ykk}{y_{k+1}}
\newcommand{\vk}{v_k}
\newcommand{\vkk}{v_{k+1}}
\newcommand{\wopt}{w^*}
\newcommand{\x}{x}
\newcommand{\xk}{x_k}
\newcommand{\xkk}{x_{k+1}}
\newcommand{\xopt}{x^*}
\newcommand{\xbar}{\xbar{w}}
\newcommand{\zk}{z_{k}}
\newcommand{\etamin}{\eta_{\text{min}}}
\newcommand{\etak}{\eta_k}
\newcommand{\etakk}{\eta_{k+1}}
\newcommand{\grad}{\nabla f}
\DeclarePairedDelimiterX{\infdivx}[2]{\lparen}{\rparen}{%
  #1\;\delimsize\|\;#2%
}
\newcommand\ev[1]{\left \langle #1 \right \rangle}
\newcommand{\pr}[1][]{
  \ifthenelse { \equal{#1}{} }
  { \ensuremath{\mathbb{P}} }
  { \ensuremath{\mathbb{P}\left(#1\right)} }
}
\renewcommand{\R}{\mathbb{R}}
\newcommand{\sqn}[1]{{\left\lVert#1\right\rVert}^2}
\newcommand{\eqdef}{\overset{\text{def}}{=}}
  \def\\{}%
  \def\texttt#1{<#1>}%
\title{Directional Smoothness and Gradient Methods: Convergence and Adaptivity}
\author{%
    Aaron Mishkin\thanks{Equal contribution.}\\
    Stanford University\\
    \texttt{amishkin@cs.stanford.edu}
   \And
   Ahmed Khaled\footnotesize{$^*$}\\
   Princeton University \\
   \texttt{ahmed.khaled@princeton.edu}
   \And
    Yuanhao Wang\\
   Princeton University \\
   \texttt{yuanhaoa@princeton.edu}
   \And
    Aaron Defazio\\
    FAIR, Meta AI\\
    \texttt{adefazio@meta.com}
   \And
   Robert M.~Gower \\
   CCM, Flatiron Institute \\
   \texttt{gowerrobert@gmail.com}
}
\begin{document}

\maketitle

\begin{abstract}
    We develop new sub-optimality bounds for gradient descent (GD) that depend on
    the conditioning of the objective along the path of optimization rather than
    on global, worst-case constants.
    Key to our proofs is directional smoothness, a measure of gradient variation
    that we use to develop upper-bounds on the objective.
    Minimizing these upper-bounds requires solving implicit equations to obtain a
    sequence of strongly adapted step-sizes; we show that these equations are
    straightforward to solve for convex quadratics and lead to new guarantees for
    two classical step-sizes.
    For general functions, we prove that the Polyak step-size and normalized GD
    obtain fast, path-dependent rates despite using no knowledge of the
    directional smoothness.
    Experiments on logistic regression show our convergence guarantees are tighter
    than the classical theory based on \( L \)-smoothness.
\end{abstract}

\section{Introduction}%
\label{sec:introduction}

Gradient methods for differentiable functions are typically analyzed under the
assumption that \( f \) is \( L \)-smooth, meaning \( \nabla f \) is
\( L \)-Lipschitz continuous.
This condition implies \( f \) is upper-bounded by a quadratic and guarantees
that gradient descent (GD) with step-size \( \eta < 2 / L \) decreases the
optimality gap at each iteration \citep{bertsekas1997nonlinear}.
However, experience shows that GD can still decrease the objective when
\( f \) is not \( L \)-smooth, particularly for deep neural networks
\citep{bengio2012pratical, li2020reconciling, cohen2021stability}.
Even for functions verifying smoothness, convergence rates are often
pessimistic and fail to predict optimization speed in practice
\citep{paquette2023halting}.

One alternative to global smoothness is local Lipschitz continuity of the
gradient (``local smoothness'').
Local smoothness assumes different Lipschitz constants hold for different
neighbourhoods, which avoids global assumptions and improves rates.
However, such analyses typically rely on boundedness of the iterates and then
use local smoothness to obtain \( L \)-smoothness over a compact set
\citep{malitsky2020descent}.
Boundedness is guaranteed in several ways: \citet{zhang2020first} break
optimization into stages, \citet{patel2022gradient} use stopping-times,
and \citet{lu2023accelerated} employ a line-search.
Unfortunately, these approaches modify the underlying optimization algorithm,
require local smoothness oracles \citep{park2021preconditioned}, or rely on
highly complex arguments.

In contrast, we prove simple rates for GD without global smoothness by
deriving bounds of the form,
\begin{equation}
    \label{eq:quadratic-bound}
    \begin{aligned}
        f(\xkk) & \leq f(\xk) + \abr{\grad(\xk), \xkk - \xk}
        + \frac{M(\xkk, \xk)}{2} \norm{\xkk - \xk}_2^2,
    \end{aligned}
\end{equation}
where the \emph{directional smoothness function} \( M(\xkk, \xk) \) depends only on
properties of \( f \) along the chord between \( \xk \) and \( \xkk \).
Our sub-optimality bounds provide a path-dependent perspective on GD and are tighter than
conventional analyses when the step-size sequence is adapted to the
directional smoothness, meaning \( \etak < 2 / M(\xkk, \xk) \).
See \cref{fig:rate-comparison} for two real-data examples highlighting our
improvement over classical rates.
We summarize all our contributions as follows.

\begin{figure*}[t]
    \centering
    \includegraphics[width=0.98\textwidth]{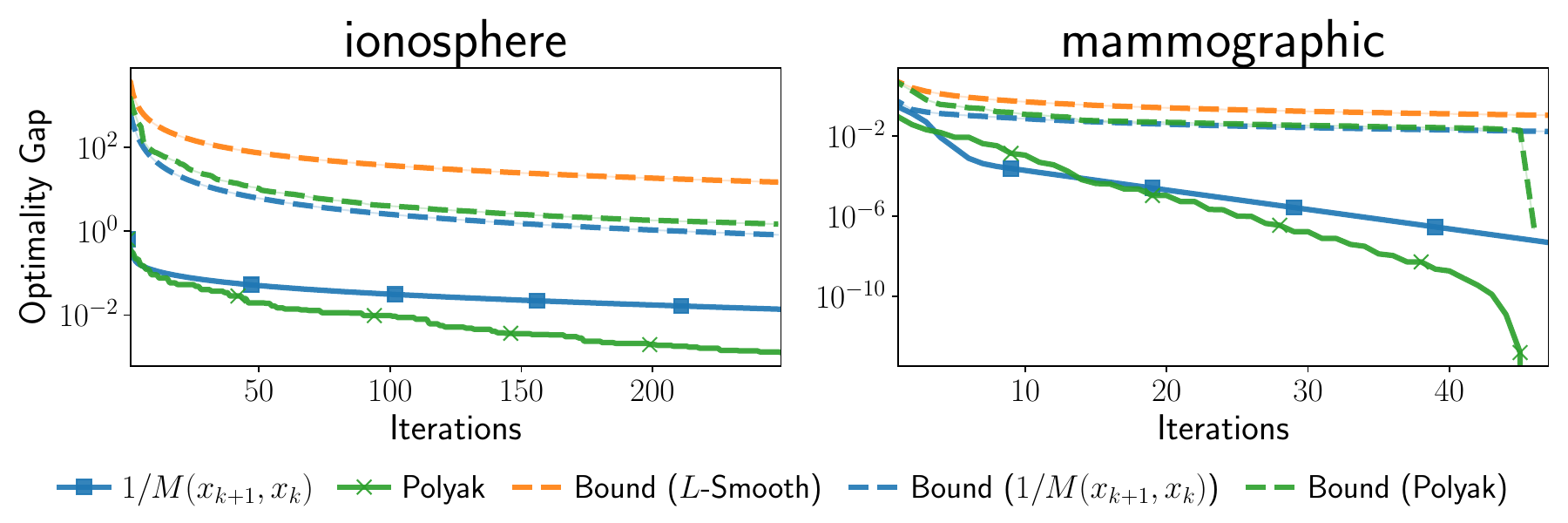}
    \vspace{-2ex}
    \caption{
        Comparison of actual (solid lines) and theoretical (dashed lines)
        convergence rates for GD with (i) step-sizes strongly adapted to the
        directional smoothness (\( \etak = 1 / M(\xkk, \xk) \)) and (ii) the
        Polyak step-size.
        Both problems are logistic regressions on UCI repository datasets
        \citep{asuncion2007uci}.
        Our bounds using directional smoothness are tighter than those based on
        global \( L \)-smoothness of \( f \) and adapt to the optimization path.
        For example, on \texttt{mammographic} our theoretical rate for the Polyak
        step-size concentrates rapidly exactly when the optimizer shows fast
        convergence.
    }%
    \label{fig:rate-comparison}
    \vskip -0.2in
\end{figure*}


\textbf{Directional Smoothness.}
We introduce three constructive directional smoothness functions \( M(x, y) \).
The first, point-wise smoothness, depends only on the end-points \( x, y \) and
is easily computed, while the second, path-wise smoothness, yields a tighter
bound, but depends on the chord \( \calC = \cbr{\alpha x + (1- \alpha)y :
    \alpha \in [0, 1]} \).
The last function, which we call the optimal point-wise smoothness, is both
easy-to-evaluate and provides the tightest possible quadratic upper bound.

\textbf{Sub-optimality bounds.}
We leverage directional smoothness functions to prove new sub-optimality bounds
for GD on convex functions.
Our bounds are localized to the GD trajectory, hold for any step-size sequence,
and are tighter than the classic analysis using \( L \)-smoothness.
They are also more general since we do not need to assume that $f$ is globally
$L$-smooth to show progress;
all we require is a sequence of step-sizes adapted to the directional
smoothness function.
Furthermore, our approach extends naturally to acceleration, allowing us
to prove optimal rates for (strongly)-convex functions.

\textbf{Adaptive Step-Sizes in the Quadratic Case.}
In the general setting, computing step-sizes which are adapted to the
directional smoothness requires solving a challenging non-linear root-finding
problem.
For quadratic problems, we show that the ideal step-size that satisfies \(
\etak = 1 / M(\xkk, \xk) \) is the Rayleigh quotient and is connected to the
hedging algorithm \citep{altschuler2023hedging}.
%

\textbf{Exponential Search.}
Moving beyond quadratics, we prove that the equation \( \etak = 1 / M(\xkk,
\xk) \) admits a solution under mild conditions, meaning ideal
step-sizes can be computed using Newton's method.
Since computing these step-sizes is typically impractical, we adapt
exponential search \citep{yair2022parameter} to obtain similar path-dependent
complexities up to a log-log penalty.

\textbf{Polyak and Normalized GD.}
More importantly, we show that the Polyak step-size
\citep{polyak1987introduction} and normalized GD achieve fast, path-dependent
rates \emph{without} knowledge of the directional smoothness.
Our analysis reveals that the Polyak step-size adapts to \emph{any}
directional smoothness to obtain the tightest possible convergence rate.
This property is not shared by constant step-size GD and may explain the
superiority of the Polyak step-size in many practical settings.



\subsection{Additional Related Work}\label{sec:related-work}

Directional smoothness is a relaxation of non-uniform smoothness
\citep{mei21_lever_non_unifor_first_order}, which restricts the smoothness
function \( M \) to depend only on \( x \), the origin point.
\citet{mei21_lever_non_unifor_first_order} leverage
non-uniform smoothness and a non-uniform Łojasiewicz inequality
to break lower-bounds for first-order optimization.
Similarly, \citet{berahas23_non_unifor_smoot_gradien_descen} show that a weak
local smoothness oracle can break lower bounds for gradient methods.
A major advantage of our work over such oracle-based approaches is that we
construct explicit directional smoothness functions that are easy to evaluate.

Similar to non-uniform smoothness, \citet{grimmer2019convergence} and
\citet{orabona23_normal_gradien_all} consider H{\"o}lder-type growth conditions
with constants that depend on a neighbourhood of \( x \).
Since directional smoothness is stronger than and implies these H{\"o}lder
error bounds, our \( M \) functions can be leveraged to make their results
fully explicit (the H{\"o}lder bounds are non-constructive).
Finally, while they also analyze normalized GD, our rates are anytime and do
not use online-to-batch reductions like \citet{orabona23_normal_gradien_all}.

Directional smoothness is also related to \( (L_0, L_1) \)-smoothness
\citep{zhang2020clipping, zhang2020improved}, which can be interpreted as a
directional smoothness function with exponential dependence on the distance
between \( x \) and \( y \).
The extension of \( (L_0, L_1) \)-smoothness to \( (r, l) \)-smoothness by
\citet{li2023generalized} shows how to bound sequences of such directional
smoothness functions, even for accelerated methods.
These approaches are complementary to ours and showcase a setting where
directional smoothness leads to concrete convergence rates.

Our work is most closely connected to that by \citet{malitsky2020descent}, who use
a smoothed version of \( M(x, y) \) to set the step-size.
\citet{vladarean2021adaptivity} apply a similar smoothed step-size scheme to
primal-dual hybrid gradient methods, while \citet{zhao2024adaptive}
relate directional smoothness to Barzilai-Borwein updates
\citep{barzilai1988two} and
\citet{vainsencher2015local} use local smoothness over neighbourhoods of the
global minimizer to set the step-size for SVRG.\

Finally, we note that adaptivity to directional smoothness is different from
adaptivity to the sequence of observed gradients obtained by methods such as
Adagrad ~\citep{duchi11_adagrad,streeter10_less_regret_via_onlin_condit}.
Adagrad and its variants are most useful when the gradients are bounded, such
as in Lipschitz optimization, although they can also be used to obtain rates
for smooth functions~\citep{levy17_onlin_to_offlin_conver_univer}.
We do not address adaptivity to gradients in this work.


\section{Directional Smoothness}%
\label{sec:local-direct-smoothn}


We say that a convex function $f$ is \( L \)-smooth if for all $x, y \in
    \mathbb{R}^d$,
\begin{equation}\label{eq:smoothness-upper-bound}
    f(y) \leq f(x) + \abr{\grad(x), y - x} +\frac{L}{2}\norm{y - x}_2^2.
\end{equation}
Minimizing this quadratic upper bound in \( y \)
gives the classical GD update with step-size \( \etak = 1 / L \).
However, this viewpoint leads to rates which depend on the global, worst-case
growth of \( f \).
This is both counter-intuitive and undesirable because the iterates of GD,
\( \xkk = \xk - \etak \grad(\xk), \)
depend only on local properties of \( f \).
Ideally, the analysis should also depend only on the local conditioning
along the path \( \cbr{x_1, x_2, \ldots} \).
Towards this end, we generalize the smoothness upper-bound as follows.

\begin{definition}\label{def:directional-smoothness-function}
    We call
    $M: \mathbb{R}^{d,d} \rightarrow \mathbb{R}_+$
    a \emph{directional smoothness function} for
    $f$ if for all $x, y \in \mathbb{R}^d$,
    \begin{align}
        \label{eq:directional-smoothness-function-definition}
        f(y) \leq f(x) \!+\! \ev{ \nabla f(x) , y \!-\! x }
        +\frac{M(x, y)}{2} \sqn{y \!-\! x}.
    \end{align}
\end{definition}

If a function is $L$-smooth, then $M(x, y) = L$ is a trivial choice of
directional smoothness function.
In the rest of this section, we construct different \( M \) functions that
provide tighter bounds on \( f \) while still being possible to evaluate.
The first is the \emph{point-wise directional smoothness},
\begin{equation}\label{eq:directional-smoothness}
    D(x, y) := \frac{2 \norm{\grad(y) - \grad(x)}_2}{\norm{y - x}_2}.
\end{equation}
Point-wise smoothness is a directional estimate of \( L \) and
satisfies \( D(x, y) \leq 2L \).
Indeed, $L$ can be equivalently  defined as the supremum of $D(x, y) / 2$
over the domain  of \( f \) \citep{beck17_first_order_methods_opt}.
If $f$ is convex and differentiable, then $D(x, y)$ is a directional smoothness function according to
\Cref{def:directional-smoothness-function}.
\begin{restatable}{lemma}{directionalSmoothness}\label{lemma:directional-smoothness}
    If \( f \) is convex and differentiable, then the point-wise directional
    smoothness satisfies,
    \begin{equation}\label{eq:directional-smoothness-bound}
        f(y) \leq f(x) + \abr{\grad(x), y - x} + \frac{D(x, y)}{2} \norm{y - x}_2^2.
    \end{equation}
\end{restatable}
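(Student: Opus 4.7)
The plan is to integrate the directional derivative of $f$ along the chord from $x$ to $y$, then exploit convexity to bound the integrand by an endpoint quantity, and finally recognize the result in terms of $D(y,x)$ via Cauchy--Schwarz. Concretely, first I would define $g(t) = f(x + t(y-x))$ for $t \in [0,1]$ and write $f(y) - f(x) = \int_0^1 g'(t)\, dt$, where $g'(t) = \langle \nabla f(x + t(y-x)), y-x\rangle$. Subtracting the first-order term gives
\begin{equation*}
    f(y) - f(x) - \langle \nabla f(x), y-x\rangle = \int_0^1 \bigl(g'(t) - g'(0)\bigr)\, dt.
\end{equation*}

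Next, I would invoke convexity of $f$: the restriction $g$ is convex on $[0,1]$, so $g'$ is non-decreasing. Hence $g'(t) - g'(0) \leq g'(1) - g'(0)$ for every $t \in [0,1]$, and integrating this pointwise inequality yields
\begin{equation*}
    f(y) - f(x) - \langle \nabla f(x), y-x\rangle \;\leq\; g'(1) - g'(0) \;=\; \langle \nabla f(y) - \nabla f(x),\, y-x\rangle.
\end{equation*}
Finally, applying Cauchy--Schwarz to the right-hand side gives $\langle \nabla f(y) - \nabla f(x), y-x\rangle \leq \|\nabla f(y) - \nabla f(x)\|_2 \|y-x\|_2$, and multiplying and dividing by $\|y-x\|_2$ rewrites this bound as $\tfrac{D(y,x)}{2}\|y-x\|_2^2$, which is the desired inequality.

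There is no serious obstacle: the one subtlety is the degenerate case $x = y$, where $D(y,x)$ is formally $0/0$. This case is handled trivially because the claimed inequality reduces to $f(x) \leq f(x)$, so the bound can be interpreted by continuity (or by defining $D(x,x)$ arbitrarily and noting both sides collapse). The main conceptual point I would emphasize is that convexity is used in exactly one place, to upgrade the monotonicity of the one-dimensional derivative $g'$; this is what allows us to replace an integrand that ranges over the whole chord with the single endpoint quantity $\langle \nabla f(y) - \nabla f(x), y-x\rangle$, and it is also why an extra factor of $2$ appears in the definition of $D(y,x)$ relative to the global smoothness constant.
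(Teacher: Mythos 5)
Your proof is correct, and it reaches the same key inner-product bound as the paper but by a different route. The paper's argument applies the first-order convexity (gradient) inequality once, then rearranges algebraically: from $f(y)\le f(x)+\langle\nabla f(y),\,y-x\rangle$ it writes $\langle\nabla f(y),\,y-x\rangle=\langle\nabla f(x),\,y-x\rangle+\langle\nabla f(y)-\nabla f(x),\,y-x\rangle$ and applies Cauchy--Schwarz to the last term. You instead represent $f(y)-f(x)-\langle\nabla f(x),\,y-x\rangle$ as $\int_0^1\bigl(g'(t)-g'(0)\bigr)\,dt$ and use the monotonicity of the one-dimensional derivative $g'$ (the scalar manifestation of convexity) to bound the integrand pointwise by its endpoint value $g'(1)-g'(0)$, arriving at the identical intermediate inequality $f(y)-f(x)-\langle\nabla f(x),\,y-x\rangle\le\langle\nabla f(y)-\nabla f(x),\,y-x\rangle$ before Cauchy--Schwarz. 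Both use convexity exactly once, and the paper's version is a bit shorter. Your version has a pedagogical advantage: it makes visible exactly where the factor of $2$ in $D(y,x)$ enters --- replacing the integrand by its value at $t=1$ throws away the $\int_0^1 t\,dt=\tfrac12$ that an $L$-Lipschitz-gradient bound $g'(t)-g'(0)\le Lt\|y-x\|_2^2$ would retain --- which also foreshadows why Proposition 2.3 shows the factor of $2$ is unavoidable. Your handling of the degenerate case $x=y$ is a sensible addition that the paper leaves implicit.
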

See \cref{app:directional-smoothness-proofs}
(we defer all proofs to the relevant appendices).
In the worst-case, the point-wise directional smoothness $D$ is weaker
than the standard upper-bound $M(x, y) = L$ by a factor of two.
This is \emph{not} an artifact of the analysis and is generally
unavoidable, as the next proposition shows.
\begin{restatable}[]{proposition}{dirSmoothTight}\label{prop:dirSmoothTight}
    There exists a convex, differentiable \( f \)
    and \( x, y \in \R^d \) such that if \( t < 2 \), then
    \begin{align}
        \label{eq:dir-smoothness-tight-prop}
        \begin{split}
            f(y) > f(x) & + \ev{\nabla f(x), y-x} +
            \frac{t\norm{\nabla f(x) - \nabla f(y)}}{2\norm{y-x}_2} \norm{y-x}^2_2.
        \end{split}
    \end{align}
\end{restatable}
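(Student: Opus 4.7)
My plan is to exhibit an explicit convex, $C^1$ function $f : \R \to \R$ together with a one-parameter family of point pairs $(x, y)$ at which the proposed inequality forces $t \geq 2 - o(1)$. The natural candidate is a (one-sided) Huber-type function, designed so that $\nabla f$ transitions from $0$ up to its maximal value inside a narrow window near $x$ and then remains essentially constant along a long chord. This is exactly the regime in which the factor of $2$ in the definition \cref{eq:directional-smoothness} of $D(y,x)$ becomes sharp.

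Concretely, I take $f(s) = 0$ for $s \leq 0$, $f(s) = s^2/2$ for $s \in [0,1]$, and $f(s) = s - 1/2$ for $s \geq 1$. A short check shows $f \in C^1$ with $f'(s) = \max(0, \min(s, 1))$, and that $f$ is convex because the three pieces have non-decreasing slopes meeting continuously. I then evaluate at $x = 0$ and $y = R$ for arbitrary $R > 1$: this yields $f(y) - f(x) - \abr{\nabla f(x), y-x} = R - 1/2$, $\norm{\nabla f(y) - \nabla f(x)}_2 = 1$, and $\norm{y - x}_2 = R$. Substituting into the right-hand side of \cref{eq:dir-smoothness-tight-prop} gives the scalar constraint $R - 1/2 \leq tR/2$, equivalently $t \geq 2 - 1/R$. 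Given any $t < 2$, choosing $R > 1/(2-t)$ makes $2 - 1/R > t$, so the inequality is violated.

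There is no real technical obstacle; the one nontrivial point is the design of $f$ itself. The guiding observation is that the factor of $2$ appearing in \cref{lemma:directional-smoothness} comes from two places in that proof: a Cauchy-Schwarz step on the Bregman integral, and the monotonicity bound $\norm{\nabla f(x + s(y-x)) - \nabla f(x)}_2 \leq \norm{\nabla f(y) - \nabla f(x)}_2$ for $s \in [0,1]$. The Huber function above is the canonical convex $C^1$ example in which both inequalities are simultaneously near-tight along long chords starting at the origin, so the construction is essentially forced once one traces the equality cases. Verifying convexity, $C^1$-smoothness, and the Bregman arithmetic at $(0, R)$ are then all routine.
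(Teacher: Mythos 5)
Your proof is correct. Let me verify the arithmetic: with $f$ the one-sided Huber function, $x=0$, $y=R>1$, we have $f(0)=0$, $f'(0)=0$, $f(R)=R-1/2$, $f'(R)=1$, so the inequality reduces to $R-1/2 \leq tR/2$, i.e.\ $t \geq 2 - 1/R$, which excludes every fixed $t<2$ once $R > 1/(2-t)$. The function is convex and $C^1$ (the pieces match in value and in slope at $0$ and at $1$, and $f'$ is non-decreasing), so all hypotheses of the proposition are met.

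Your route is genuinely different from the paper's, though it rests on the same underlying phenomenon. The paper constructs a smooth one-parameter family $f_\epsilon(s) = \sqrt{s^2 + \epsilon^2}$ approximating $|s|$, fixes the points $x=1$, $y=0$, and derives the contradiction in the limit $\epsilon \to 0$, with all bounds carrying $O(\epsilon)$ error terms to control the approximation to $|s|$. You instead fix a single $C^1$ function (the Huber-type cap) and vary the endpoint $R \to \infty$; this trades the $\epsilon$-approximation argument for a longer chord, and the bookkeeping becomes an exact one-line identity rather than a chain of inequalities with slack. Your version is shorter and has no error terms, at the cost of working with a merely $C^1$ example where the paper's example is $C^\infty$; since the proposition only requires differentiability, this is not a restriction. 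The conceptual content is the same in both: the factor of $2$ is saturated when the gradient transitions rapidly (across a narrow window near the base point) and then stays flat along most of the chord, which is exactly the geometry of a smoothed $|s|$.
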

While the point-wise smoothness is easy to compute, this additional factor of
two can make \cref{eq:directional-smoothness-bound} looser than
\( L \)-smoothness --- on isotropic quadratics, for example.
As an alternative, we define the \emph{path-wise directional smoothness},
\begin{equation}\label{eq:path-smoothness}
    A(x,y)
    := \sup_{t\in [0,1]} \frac{\abr{\grad(x \!+\! t(y \!-\! x)) \!-\! \grad(x), y \!-\! x} }
    {t\norm{y \!-\! x}^2}, \hspace{-0.2cm}
\end{equation}
and show it verifies the quadratic upper-bound and satisfies
\Cref{def:directional-smoothness-function} even without convexity.
\begin{restatable}{lemma}{pathSmoothness}\label{path-smoothness}
    For any differentiable function $f$, the path-wise smoothness~\eqref{eq:path-smoothness} satisfies
    \begin{equation}\label{eq:path-smoothness-bound}
        f(y) \leq f(x) + \abr{\grad(x), y - x} + \frac{A(x, y)}{2} \norm{y - x}_2^2.
    \end{equation}
\end{restatable}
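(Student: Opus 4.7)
The plan is a two-step reduction. First, apply the fundamental theorem of calculus to write the Bregman-type deviation $f(y) - f(x) - \abr{\grad(x), y-x}$ as an integral along the chord from $x$ to $y$. Second, use the supremum built into the definition of $A$ to dominate the integrand pointwise in $t$ and then integrate. No convexity of $f$ is needed, in contrast with \Cref{lemma:directional-smoothness}, which is consistent with the statement asking only for differentiability.

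For the first step, since $f$ is differentiable, I would write
\begin{equation*}
f(y) - f(x) = \int_0^1 \abr{\grad(x + t(y-x)),\, y-x} \, dt,
\end{equation*}
and subtract $\abr{\grad(x), y-x} = \int_0^1 \abr{\grad(x), y-x}\, dt$ from both sides to obtain
\begin{equation*}
f(y) - f(x) - \abr{\grad(x), y-x} = \int_0^1 \abr{\grad(x + t(y-x)) - \grad(x),\, y-x} \, dt.
\end{equation*}

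For the second step, the factor of $t$ in the denominator of definition~\eqref{eq:path-smoothness} is placed precisely so that, for every $t \in [0,1]$,
\begin{equation*}
\abr{\grad(x + t(y-x)) - \grad(x),\, y-x} \leq t \cdot A(y,x) \cdot \norm{y-x}_2^2,
\end{equation*}
where the arguments of $A$ have been arranged to match the convention in the lemma statement. Substituting this pointwise bound into the integral and evaluating $\int_0^1 t\, dt = 1/2$ delivers the stated inequality.

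There is no real obstacle here; the only subtle design choice is the $1/t$ normalization inside the supremum in \eqref{eq:path-smoothness}. Its purpose is to ensure that the pointwise bound above holds at every $t$ while the integration against $t\, dt$ still collapses to the familiar $1/2$ prefactor. Without that normalization one would instead dominate the integrand by a constant on $[0,1]$ and pick up an extra factor of two, reproducing exactly the type of looseness shown to be unavoidable for the point-wise smoothness $D$ in \Cref{prop:dirSmoothTight}.
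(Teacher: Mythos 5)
Your proof is correct and follows exactly the same two-step route as the paper's: apply the fundamental theorem of calculus to rewrite $f(y) - f(x) - \abr{\grad(x), y-x}$ as $\int_0^1 \abr{\grad(x + t(y-x)) - \grad(x),\, y-x}\, dt$, then use the definition of $A$ (with the $1/t$ normalization) to bound the integrand by $t\,A\,\norm{y-x}_2^2$ and integrate to get the $1/2$ prefactor. Your closing remark about why the $1/t$ normalization is there — and that omitting it would reintroduce the factor of two from \Cref{prop:dirSmoothTight} — is a correct and useful observation that the paper leaves implicit.
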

Path smoothness is tighter than point-wise smoothness since
\( A(x, y) \leq D(x, y) \), but hard to compute because it depends on
the chord between \( x \) and \( y \).
That is, it depends on the properties of $f$ on the line
\( \left\{ tx + (1-t)y : t \in [0, 1] \right\} \)
rather than solely on the points $x$ and $y$ like the point-wise smoothness.

Point-wise and path-wise smoothness are constructive, but
they may not yield the tightest bounds in all situations.
The tightest directional smoothness function, which we call the \emph{optimal
    point-wise smoothness}, is the smallest number for which the
quadratic upper bound holds,
\begin{align}
    H(x, y) & = \frac{\abs{f(y)-f(x)-\ev{ \nabla f(x) , y-x }}}{\frac{1}{2} \sqn{y-x}} 
    \label{eq:optimal-directional-smoothness}
\end{align}
By definition, $H$ is the tightest possible directional
smoothness function; it lower bounds any constant $C$ that
satisfies the quadratic bound~\eqref{eq:smoothness-upper-bound}.
Thus, $H(x, y) \leq M(x, y)$ for any smoothness function $M$.

The directional smoothness functions introduced in this section represent
different trade-offs between computability and tightness.
The optimal point-wise smoothness $H(x, y)$ requires access to both the
function and gradient values, whereas the point-wise directional-smoothness
$D(x, y)$ requires only access to the gradients and convexity.
In contrast, the path-wise direction smoothness $A(x, y)$ satisfies
\Cref{path-smoothness} with or without convexity, but may be hard to evaluate.

\section{Path-Dependent Sub-Optimality Bounds}%
\label{sec:path-dependent-rates}


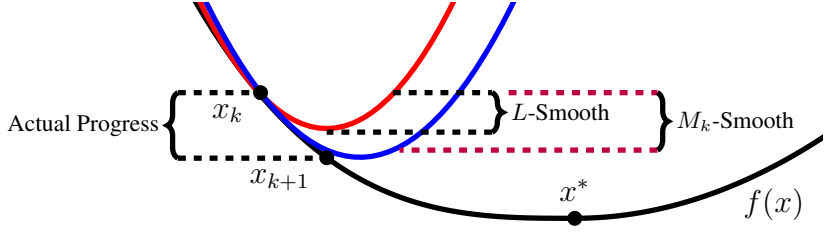
\begin{figure*}[t]
    \centering

\begin{tikzpicture}[
      declare function={
        objective(\x)=  (\x<=0) * pow(-\x, 3) / 2 + (\x>0) * pow(x, 2);
        upperBound(\x)=     6 * pow(\x + 2, 2) - 6 * (\x + 2) + 4; 
        locaUpperBound(\x)=     5 * pow(\x + 1.9, 2) - 6 * (\x + 1.9) + 3.5; 
      }
    ]
    \begin{axis}[
      width=0.9\textwidth,
      height=6cm,
      axis x line=none, axis y line=none,
      ymin=-3.25, ymax=6, ytick={-5,...,5}, ylabel=$y$,
      xmin=-3.5, xmax=1.5, xtick={-4,...,2}, xlabel=$x$,
    ]

    \addplot[name path=function, domain=-3.5:5.23, samples=200, objective, line width=2pt]{objective(x)};
    \addplot[name path=upperBound, domain=-3.5:7, samples=200, oracle, line width=2pt]{upperBound(x)};
    \addplot[name path=localUpperBound, domain=-3.5:7, samples=200, bound, line width=2pt]{locaUpperBound(x)};

    \node[label={195:$\xk$},circle,fill,inner sep=2pt] at (axis cs:-1.9, 3.5) {};
    \node[label={195:$\xkk$},circle,fill,inner sep=2pt] at (axis cs:-1.5, 1.6875) {};
    \node[label={90:$\xopt$},circle,fill,inner sep=2pt] at (axis cs:0,0) {};
    
    \node[label={180:$f(x)$}] at (axis cs:1.5,0.4) {};

    \draw [line width=2pt, style=dashed, color=black] (-2.4,1.675) -- (-1.5, 1.675);
    \draw [line width=2pt, style=dashed] (-2.4,3.5) -- (-1.9, 3.5);
    \draw [decorate, decoration={brace, amplitude=4pt}, line width=2pt] (-2.4,1.675) -- (-2.4,3.5) node [midway, anchor=east, xshift=-1mm, outer sep=1pt,font=\small]{Actual Progress};

    \draw [line width=2pt, style=dashed, color=black] (-0.5,2.4) -- (-1.5, 2.4);
    \draw [line width=2pt, style=dashed] (-1.1,3.5) -- (-0.5, 3.5);
    \draw [decorate, decoration={brace, amplitude=4pt, mirror}, line width=2pt] (-0.5, 2.4) -- (-0.5,3.5) node [midway, anchor=west, xshift=1mm, outer sep=1pt,font=\small]{\( L \)-Smooth};

    \draw [line width=2pt, style=dashed, color=purple] (0.5,1.9) -- (-1.06, 1.9);
    \draw [line width=2pt, style=dashed, color=purple] (-0.4,3.5) -- (0.5, 3.5);
    \draw [decorate, decoration={brace, amplitude=4pt, mirror}, line width=2pt] (0.5, 1.9) -- (0.5, 3.5) node [midway, anchor=west, xshift=1mm, outer sep=1pt,font=\small]{\( M_k \)-Smooth};
    \end{axis}
\end{tikzpicture} 
    \caption{Illustration of GD with \( \etak = 1 / L \).
        Even though this step-size exactly minimizes the upper-bound from \( L
        \)-smoothness, \( M_k \) directional smoothness better predicts the progress
        of the gradient step because \( M_k \ll L \).
        Our rates improve on \( L \)-smoothness because of this tighter bound.
    }
    \label{fig:quadratic-bound}
\end{figure*}

Using directional smoothness, we obtain a descent lemma which depends only on
local geometry,
\begin{equation}
    \label{eq:local-descent-lemma}
    f(\xkk) \leq f(\xk) -
    \rbr{\etak - \frac{\etak^2 M( \xk, \xkk)}{2}} \norm{\grad(\xk)}_2^2.
\end{equation}
See \Cref{lemma:local-descent-lemma}.
If \( \etak < 2 / M( \xk, \xkk) \), then GD is guaranteed to decrease the
function value and we call \( \etak \) \emph{adapted} to $M( \xk, \xkk)$.
However, computing adapted step-sizes is not always straightforward.
For instance, finding \( \etak = 1 / M(\xk, \xkk(\etak)) \) requires solving a
non-linear equation.


The rest of this section leverages directional smoothness to derive new
guarantees for GD with arbitrary step-sizes.
We emphasize that these results are \emph{sub-optimality bounds},
rather than convergence rates;
a sequence of adapted step-sizes is required to convert our propositions into a
convergence theory.
As a trade-off, our bounds reflect the locality of GD, rather than treating it
as a global method.

We start with the case when \( f \) has lower curvature.
Instead of using strong convexity or the PL-condition \citep{karimi2016linear},
we propose the directional strong convexity constant,
\begin{equation}
    \label{eq:directional-sc-constant}
    \mu(x, y)
    \!=\! \inf_{t\in [0,1]} \frac{\abr{\grad(x \!+\! t(y \!-\! x)) \!-\! \grad(x), y \!-\! x} }
    {t\norm{y - x}_2^2}.
\end{equation}
If \( f \) is convex, then \( \mu(x, y) \geq 0 \) and it verifies the
standard lower-bound from strong convexity,
\begin{equation}
    \label{eq:directional-sc}
    f(y) \geq f(x) + \abr{\grad(x), y - x} + \frac{\mu(x, y)}{2}\norm{y - x}_2^2.
\end{equation}
Moreover, we have \( \mu(x, y) \geq \mu \) when \( f \) is $\mu$--strongly
convex.
We prove two bounds for convex functions using directional strong convexity.
For brevity, we denote \( M_i := M(\x_{i}, \x_{i+1}) \), \( \mu_i :=
\mu_i(x_i, \xopt) \), \( \delta_i = f(x_i) - f(\xopt) \), and \( \Delta_i =
\norm{x_i - \xopt}_2^2 \), where \( \xopt \) is a minimizer of \( f \).

\begin{restatable}{proposition}{scSplitAnalysis}\label{prop:sc-split-analysis}
    If \( f \) is convex and differentiable, then GD with step-size
    sequence \( \cbr{\etak} \) satisfies,
    \begin{equation}
        \label{eq:sc-split-analysis}
        \begin{split}
            \delta_k
             & \leq \sbr{\prod_{i \in \calG} \rbr{1 + \eta_i \lambda_i \mu_i}}
            \delta_0
            + \sum_{i \in \calB} \sbr{\prod_{j > i, j \in \calG} \rbr{1 + \eta_j \lambda_j \mu_j}}
            \frac{\eta_i \lambda_i}{2} \norm{\grad(\x_i)}_2^2,
        \end{split}
    \end{equation}
    where \( \lambda_i \!=\! \eta_i M_i \!-\! 2 \),
    \( \calG = \cbr{i : \eta_i \!<\! 2/M_i} \),
    and \( \calB = [k] \!\setminus\! \calG \).
\end{restatable}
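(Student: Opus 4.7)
The plan is to telescope a one-step inequality of the form $\delta_{i+1} \le c_i \delta_i + b_i$, where the pair $(c_i, b_i)$ is chosen according to whether $i \in \calG$ or $i \in \calB$. Unrolling gives $\delta_k \le \prod_{i=0}^{k-1} c_i \cdot \delta_0 + \sum_{i=0}^{k-1} \bigl(\prod_{j=i+1}^{k-1} c_j\bigr) b_i$, and \eqref{eq:sc-split-analysis} falls out provided $c_i = 1$ for $i \in \calB$ (so that only $\calG$-factors survive each product) and $b_i = 0$ for $i \in \calG$.

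The local descent lemma~\eqref{eq:local-descent-lemma}, rewritten as $\delta_{i+1} \le \delta_i + \tfrac{\eta_i \lambda_i}{2}\norm{\grad(x_i)}_2^2$, immediately supplies the $i \in \calB$ case with $c_i = 1$ and $b_i = \tfrac{\eta_i \lambda_i}{2}\norm{\grad(x_i)}_2^2$ (note $\lambda_i \ge 0$ on $\calB$, so $b_i \ge 0$). For $i \in \calG$, $\lambda_i < 0$, and I would strengthen the descent lemma into the contractive bound $\delta_{i+1} \le (1 + \eta_i \lambda_i \mu_i) \delta_i$. Since $\eta_i \lambda_i < 0$, this reduces to establishing a directional PL-type inequality $\norm{\grad(x_i)}_2^2 \ge 2 \mu_i \delta_i$, which I would derive by combining the directional strong-convexity lower bound~\eqref{eq:directional-sc} with a Young-type completion of squares applied to $\abr{\grad(x_i), x_i - \xopt}$.

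The main obstacle is exactly this directional PL step. Classical PL from global $\mu$-strong convexity is obtained by minimizing a quadratic lower bound over \emph{all} $y$, but directional SC only holds along specific chords, so the proof has to carefully pick a chord that yields a usable inequality at both $x_i$ and $\xopt$; directional SC along the chord from $x_i$ to $\xopt$ is the natural candidate, and the Young step then converts the SC lower bound into the desired gradient lower bound. Once the PL-type inequality is in hand, the remainder is a clean induction on $k$: contractive factors $1 + \eta_i \lambda_i \mu_i$ appear exactly at indices $i \in \calG$, and the additive residuals $\tfrac{\eta_i \lambda_i}{2}\norm{\grad(x_i)}_2^2$ appear exactly at indices $i \in \calB$, producing the advertised product-plus-sum form.
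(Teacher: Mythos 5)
Your proposal follows essentially the same route as the paper's proof: starting from the local descent lemma, split on \( i \in \calG \) versus \( i \in \calB \), use a directional PL-type bound \( \norm{\grad(\x_i)}_2^2 \geq 2\mu_i\delta_i \) to convert the descent-lemma term into a contraction \( \delta_{i+1} \leq (1 + \eta_i\lambda_i\mu_i)\delta_i \) at good steps (where \( \eta_i\lambda_i < 0 \)), keep the additive term \( \tfrac{\eta_i\lambda_i}{2}\norm{\grad(\x_i)}_2^2 \) at bad steps, and telescope. The paper asserts the PL-type bound in one line; your derivation via the directional strong-convexity lower bound along the chord from \( x_i \) to \( \xopt \), followed by a Young-type completion of squares on \( \abr{\grad(\x_i), x_i - \xopt} \), is the correct way to establish it and fills in what the paper leaves implicit. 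One point worth emphasizing: the PL-type inequality requires \( \mu_i = \mu(\xopt, x_i) \), i.e., the directional strong convexity along the chord to the minimizer, not \( \mu(\x_{i+1}, \x_i) \) as the paper's notational shorthand parallel to \( M_i := M(\x_{i+1}, \x_i) \) would suggest; with the latter choice the inequality can already fail on a skewed quadratic. Your identification of the chord to \( \xopt \) as the relevant one is therefore not merely a ``natural candidate'' but essential for the proposition to be true, and it is consistent with how \( \mu_i \) is actually used in the proof of the companion iterate bound.
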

The analysis splits iterations into good steps $\mathcal{G}$, where \( \etak
\) is adapted to the directional smoothness, and bad steps $\calB$, where the
step-size is too large and GD may increase the optimality gap.
When \( f \) is \( L \)-smooth and \( \mu \)-strongly convex,
using the step-size sequence \( \etak = 1 / L \) gives
\begin{equation}
    \label{eq:sc-tighter-rate}
    f(\xkk) - f(\xopt)
    \leq  \left[ \prod_{i = 0}^{k}
        \rbr{1 - \frac{\mu_i \rbr{2 - M_i / L}}{L}} \right] \rbr{f(\x_0) - f(\xopt)}
\end{equation}
where \( \mu_i \rbr{2 - M_i / L} \geq \mu \).
Thus, \Cref{eq:sc-split-analysis} gives at least as tight a rate
as standard assumptions by localizing to the convergence path using
\emph{any} directional smoothness $M$.
When \( M_i < L \), the gap in constants yields a strictly improved rate
(see \cref{fig:quadratic-bound}).
We also prove a more elegant bound.

\begin{restatable}{proposition}{scIteratesAnalysis}\label{prop:sc-iterates-analysis}
    If $f$ is convex and differentiable, then GD with step-size
    sequence \(\cbr{\etak} \) satisfies,
    \begin{equation}
        \label{eq:sc-iterates-analysis-eq}
        \begin{split}
             & \Delta_{k}
            \leq \sbr{\prod_{i=0}^{k} \frac{\abs{1 - \mu_i \eta_i}}{1 + \mu_{i+1} \eta_i}} \Delta_0
            + \sum_{i=0}^k \sbr{\prod_{j > i} \frac{\abs{1 - \mu_j \eta_j}}{1 + \mu_{j+1}\eta_j}}
            \frac{\rbr{M_i \eta_i^3 - \eta_i^2}}{1 + \mu_{i+1}\eta_i} \norm{\grad(\x_i)}_2^2.
        \end{split}
    \end{equation}
\end{restatable}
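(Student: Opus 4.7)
The plan is to derive a one-step recursion of the form
\[
\Delta_{k+1} \leq (1 - \mu_k \eta_k)\,\Delta_k + (M_k \eta_k^3 - \eta_k^2)\,\|\nabla f(x_k)\|_2^2
\]
and then unroll it by induction on $k$. To begin, I would expand the squared distance using the GD update $x_{k+1} = x_k - \eta_k \nabla f(x_k)$, giving
\[
\Delta_{k+1} = \Delta_k - 2\eta_k \langle \nabla f(x_k), x_k - x^* \rangle + \eta_k^2 \|\nabla f(x_k)\|_2^2.
\]

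Next, I would lower-bound the inner product using the directional strong convexity inequality~\eqref{eq:directional-sc} applied with $y = x^*$ and $x = x_k$, which holds with $\mu_k \geq 0$ because $f$ is convex; this yields $\langle \nabla f(x_k), x_k - x^* \rangle \geq \delta_k + (\mu_k/2)\Delta_k$, and substituting gives $\Delta_{k+1} \leq (1 - \mu_k \eta_k)\Delta_k - 2\eta_k \delta_k + \eta_k^2 \|\nabla f(x_k)\|_2^2$. The key step is then to lower-bound $\delta_k$ via the local descent lemma~\eqref{eq:local-descent-lemma}, which produces $f(x_k) - f(x_{k+1}) \geq \eta_k(1 - M_k \eta_k/2)\|\nabla f(x_k)\|_2^2$; combining with $f(x_{k+1}) \geq f(x^*)$ then yields $\delta_k \geq \eta_k(1 - M_k \eta_k/2)\|\nabla f(x_k)\|_2^2$. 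Substituting this bound into $-2\eta_k\delta_k$ and collecting the coefficient of $\|\nabla f(x_k)\|_2^2$ produces exactly the one-step recursion above, since $-2\eta_k^2(1 - M_k\eta_k/2) + \eta_k^2 = M_k \eta_k^3 - \eta_k^2$.

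Finally, a straightforward induction on $k$ unrolls the recursion into the statement of the proposition, with the product factors $\prod_{j > i}(1 - \mu_j \eta_j)$ arising from iterating the contraction past each residual. I do not anticipate any real obstacle: even when $M_k \eta_k \geq 2$ (so the local descent lemma only yields a trivial negative lower bound on $\delta_k$), all of the inequalities used above remain valid, and the derivation requires no sign assumptions on $(1 - \mu_i \eta_i)$ or on the extra term $(M_i \eta_i^3 - \eta_i^2)\|\nabla f(x_i)\|_2^2$. The only subtle point is recognising that the natural choice of $\mu_k$ in this analysis is the directional strong convexity coefficient along the chord from $x_k$ to $x^*$, which is the quantity for which the lower bound on the inner product above is tightest.
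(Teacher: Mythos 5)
Your proof is correct and essentially mirrors the paper's argument: both derive the same one-step recursion for $\Delta_{k+1}$ by expanding the squared distance, applying directional strong convexity to $\langle \nabla f(x_k), x_k - x^*\rangle$, and using the directional descent lemma (equivalently, the directional smoothness bound on $\langle \nabla f(x_k), x_{k+1}-x_k\rangle$) together with $f(x_{k+1})\geq f(x^*)$ to control the residual, before unrolling. Your closing observation is also on point: the $\mu_k$ entering the inner-product lower bound is the directional strong convexity coefficient along the chord from $x_k$ to $x^*$, not $\mu(x_{k+1},x_k)$ as the paper's shorthand might suggest.
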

Unlike \Cref{prop:sc-split-analysis}, this analysis shows linear progress at
each iteration and does not divide \( k \) into good steps and bad steps.
In exchange, the second term in \Cref{eq:sc-iterates-analysis-eq} reflects how
much convergence is degraded when \( \etak \) is not adapted to the
directional smoothness function $M$.
We conclude this section with a bound for when there is no lower curvature,
meaning \( \mu_i = 0 \).
\begin{restatable}{proposition}{convexAnyStepsize}\label{prop:convex-any-stepsize}
    Let \( \bar x_k \!=\! \sum_{i=0}^k \eta_i x_{i+1} /\! \sum_{i=0}^k \eta_i \).
    If \( f \) is convex and differentiable, then GD satisfies,
    \begin{equation}
        \label{eq:1}
        \begin{split}
            f(\bar x_k) - f(\xopt)
             & \leq \frac{\norm{x_0 - \xopt}_2^2}{2\sum_{i=0}^k \eta_i}
            + \frac{\sum_{i=0}^{k} \eta_i^2 (\eta_i M_i - 1)\norm{\grad(\x_i)}_2^2}{2\sum_{i=0}^k \eta_i}.
        \end{split}
    \end{equation}
\end{restatable}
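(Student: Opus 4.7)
The plan is to combine three standard ingredients — convexity, the directional-smoothness descent lemma, and the classical GD identity for $\|x_{i+1}-\xopt\|^2$ — and then telescope and apply Jensen's inequality in the end. The weighted averaging with weights $\eta_i$ on the iterates $x_{i+1}$ is the hint that suggests this particular combination.

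First I would start from the descent lemma in \cref{eq:local-descent-lemma}, which gives
\begin{equation*}
    f(x_{i+1}) - f(\xopt) \leq f(x_i) - f(\xopt) - \rbr{\eta_i - \tfrac{\eta_i^2 M_i}{2}}\norm{\grad(x_i)}_2^2.
\end{equation*}
Then I would apply convexity, $f(x_i) - f(\xopt) \leq \ev{\grad(x_i), x_i - \xopt}$, and multiply both sides by $\eta_i$ to obtain
\begin{equation*}
    \eta_i\rbr{f(x_{i+1}) - f(\xopt)} \leq \eta_i\ev{\grad(x_i), x_i - \xopt} - \rbr{\eta_i^2 - \tfrac{\eta_i^3 M_i}{2}}\norm{\grad(x_i)}_2^2.
\end{equation*}

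Next I would use the GD update $x_{i+1} = x_i - \eta_i\grad(x_i)$ to derive the standard three-point identity
\begin{equation*}
    \eta_i\ev{\grad(x_i), x_i - \xopt} = \tfrac{1}{2}\norm{x_i - \xopt}_2^2 - \tfrac{1}{2}\norm{x_{i+1} - \xopt}_2^2 + \tfrac{\eta_i^2}{2}\norm{\grad(x_i)}_2^2,
\end{equation*}
and substitute it into the previous display. After collecting the gradient-norm terms, the coefficient becomes exactly $\tfrac{\eta_i^2}{2}(\eta_i M_i - 1)$, matching the second term in \cref{eq:1}.

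Summing over $i=0,\dots,k$ telescopes the $\|x_i-\xopt\|^2$ differences, leaving $\tfrac{1}{2}\norm{x_0-\xopt}_2^2 - \tfrac{1}{2}\norm{x_{k+1}-\xopt}_2^2$, and the second term I would simply drop since it is non-positive. Finally, dividing by $\sum_{i=0}^k \eta_i$ and invoking Jensen's inequality (convexity of $f$) on $\bar x_k$ yields the stated bound. I do not expect any real obstacle here; the only thing to be careful about is that the descent lemma holds for any step-size (not just adapted ones), which is guaranteed by \cref{def:directional-smoothness-function} with no sign restrictions on $M_i\eta_i - 2$, so the bound is truly valid for arbitrary $\cbr{\eta_i}$.
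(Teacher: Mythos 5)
Your proof is correct and arrives at exactly the same one-step inequality as the paper, namely $\eta_i\bigl(f(x_{i+1})-f(\xopt)\bigr)\le \tfrac12\bigl(\Delta_i-\Delta_{i+1}\bigr)+\tfrac{\eta_i^2}{2}\bigl(\eta_i M_i-1\bigr)\norm{\grad(x_i)}_2^2$, using the same three ingredients (convexity, the directional-smoothness bound, and the distance-expansion identity for GD); the paper merely orders them differently, expanding $\Delta_{k+1}-\Delta_k$ first and then inserting smoothness and convexity into the two resulting inner products, whereas you invoke the packaged descent lemma up front and then apply the three-point identity. This is essentially the same approach, and your remark that the descent lemma holds for any step-size regardless of the sign of $\eta_i M_i - 2$ is the right thing to keep in mind.
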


Eq.~\eqref{eq:1} is faster than standard analyses whenever $M_i < L$;
it will be a key tool in the next sections.

\subsection{Path-Dependent Acceleration}\label{sec:acceleration}

Now we show that directional smoothness can also be used to derive
path-dependent sub-optimality bounds for accelerated algorithms --- that is, methods
obtaining optimal rates for smooth, convex optimization.
In particular, we study Nesterov's accelerated gradient descent (AGD)
\citep{nesterov1983method} and prove that directional smoothness leads to
tighter rates given adapted step-sizes.
Throughout this section we assume that \( f \) is \( \mu \)-strongly convex
with \( \mu = 0 \) when \( f \) is merely convex.

Although our analysis uses estimating sequences \citep{nesterov2018lectures},
we state AGD in the following ``momentum'' formulation, where \( \yk \) is the
momentum and \( \alpha_k \) the momentum parameter,
\begin{equation}\label{eq:agd}
    \begin{aligned}
        \xkk
         & = \yk - \etak \grad(\yk)                                                               \\
        \alpha_{k+1}^2
         & = (1 - \alpha_{k+1}) \alpha_k^2 \frac{\eta_{k+1}}{\etak} + \eta_{k+1} \alpha_{k+1} \mu \\
        \ykk
         & = \xkk + \frac{\alpha_k (1 - \alpha_k)}{\alpha_k^2 + \alpha_{k+1}}\rbr{\xkk - \xk}.
    \end{aligned}
\end{equation}
If \( \etak \leq 1/M( \xk, \xkk) \), then \cref{eq:local-descent-lemma} combined
with \( 1 - \etak M( \xk, \xkk)/2 \geq 1/2 \) implies,
\begin{equation}\label{eq:agd-descent-condition}
    f(\xkk) \leq f(\yk) - \frac{\etak}{2} \norm{\grad(\yk)}_2^2.
\end{equation}
Our analysis leverages the fact that this descent condition for \( \xkk \) is
the only connection between the smoothness of \( f \) and the convergence rate
of AGD.\
Since \cref{eq:agd-descent-condition} depends only on the step-size \( \etak \),
we can replace \( L \) within the analysis of AGD with a sequence of adapted step-sizes.
The following theorem controls the effect of these step-sizes to obtain
path-dependent bounds.
\begin{restatable}{theorem}{adaptedAcceleration}\label{thm:adapted-acceleration}
    Suppose \( f \) is differentiable, $\mu$--strongly convex
    and AGD is run with  adapted
    step-sizes \( \etak \leq 1 / M_k \).
    If \( \mu > 0 \) and \( \alpha_0 = \sqrt{\eta_0 \mu} \), then AGD obtains
    the following accelerated rate:
    \begin{equation}\label{eq:sc-accelerated-rate}
        f(\xkk) - f(\xopt) \leq \prod_{i=0}^k \rbr{1 - \sqrt{\mu \eta_i}}
        \sbr{f(\x_0) - f(\xopt) + \frac{\mu}{2}\norm{\x_0 - \xopt}_2^2}.
    \end{equation}
    Let \( \etamin = \min_{i \in [k]} \eta_i \).
    If \( \mu \geq 0 \) and \( \alpha_0 \in (\sqrt{\mu \eta_0}, c) \),
    where \( c \) is the maximum value of \( \alpha_0 \) for which
    \( \gamma_0 = \frac{\alpha_0^2 - \eta_0 \alpha_0 \mu}{\eta_0 (1 - \alpha_0)} \)
    satisfies \( \gamma_0 < 3 / \etamin + \mu \),
    then AGD obtains the following rate:
    \begin{equation}\label{eq:convex-accelerated-rate}
        f(\xkk) - f(\xopt) \leq \frac{4}{\etamin (\gamma_0 - \mu) (k+1)^2}
        \sbr{f(\x_0) - f(\xopt) + \frac{\gamma_0}{2}\norm{\x_0 - \xopt}_2^2}.
    \end{equation}
\end{restatable}

If \( \etak = 1 / M_k > 1 / L \), then these rates are strictly faster than
those obtained under \( L \)-smoothness and \cref{thm:adapted-acceleration}
shows that AGD provably benefits from taking the largest possible steps given
the local geometry of \( f \).
However, obtaining accelerated rates when \( \mu = 0 \) requires prior
knowledge of the minimum step-size;
while this is straightforward for \( L \)-smooth functions, it is not clear how
to extend such result to non-strongly convex acceleration with locally
Lipschitz gradients.
For example, while \citet{li2023generalized} show that the \( (r, l)
\)-smoothness (a valid directional smoothness function) is bounded over the
iterate trajectory, their rate does not adapt to the optimization path.

\section{Adaptive Learning Rates}%
\label{sec:adapt-learn-rates}

Converting our sub-optimality bounds into convergence
rates requires adapted step-sizes satisfying \( \etak < 2 / M( \xk, \xkk)\).
Given an adapted step-size, the directional descent lemma
(\cref{eq:local-descent-lemma}) implies GD decreases \( f \) and
we can obtain fast rates if the step-sizes are bounded below.
However, \( \xkk \) is itself a function of \( \etak \), meaning adapted
step-sizes are not straightforward to compute.

For $L$-smooth \( f \), the different directional smoothness functions \( M \)
introduced in \cref{sec:local-direct-smoothn} satisfy
\( M( \xk, \xkk) \leq 2L \).
This implies \( \etak < \frac{1}{L} \) is trivially adapted.
As such step-sizes don't capture local properties of \( f \),
we introduce the notion of \emph{strongly adapted step-sizes}, which satisfy
\begin{align}\label{eq:strongly-adapted-step-size}
    \eta_k = 1 / M(x_{k+1}(\etak), x_k).
\end{align}
\cref{eq:local-descent-lemma} implies GD with a strongly adapted step-size
makes guaranteed progress as,
\begin{align}
    \label{eq:ad-4}
    f(x_{k+1})
    \leq f(x_k) - \sbr{2M(x_{k+1}, x_k)}^{-1} \sqn{\nabla f(x_k)}_2.
\end{align}
This progress is greater than that guaranteed by \( L \)-smoothness
when \( M( \xk, \xkk) < L \) and holds even when \( f \) is not \( L \)-smooth.
However, it is not clear a priori if (i) strongly adapted step-sizes exist or if (ii)
any iterative method achieves the progress in Eq.~\eqref{eq:strongly-adapted-step-size}.
Surprisingly, we provide a positive answer to both questions.
Strongly adapted \( \etak \) are computable and we also prove GD with
the Polyak step-size adapts to any choice of directional smoothness, including
the optimal point-wise smoothness.
Before presenting this strong result, we consider the illustrative case of
quadratic minimization.


\subsection{Adaptivity in Quadratics}%
\label{sec:quadratic-case}

Now we show that step-sizes adapted to both the point-wise smoothness $M$ and
the path-wise smoothness $A$ exist when \( f \) is quadratic.
Let \( f(x) = x^{\top} B x / 2 - c^{\top} x, \) where $B$ is positive
semi-definite.
Assuming \( \cbr{\etak} \) is strongly adapted to the directional smoothness,
\Cref{eq:1} implies
\begin{equation}
    f(\bar{x}_k) - f(\xopt) \leq \frac{\sqn{x_0 - \xopt}_2}{2 \sum_{i=0}^k \eta_i}
    = \frac{\sqn{x_0 - \xopt}_2}{2 \sum_{i=0}^k \frac{1}{M (x_i, x_{i+1})}}
    \label{eq:quad-1}
    \leq \frac{\sqn{x_0 - \xopt}_2}{2 (k+1)} \frac{\sum_{i=0}^k M (x_i, x_{i+1})}{k+1},
\end{equation}
where we used $\eta_i M_i = 1$ as well as Jensen's inequality.
This guarantee depends solely on the average directional smoothness along the
optimization trajectory \( \cbr{x_0, x_1, \ldots} \).
When \( f \) is quadratic, we can exactly compute these smoothness
constants.
In particular, the point-wise directional smoothness is,
\[
    D(x_i, x_{i+1}) = 2 \norm{B \nabla f(x_i)}_2 / \norm{\nabla f(x_i)}_2.
\]
Notably, \( D(x_i, x_{i+1}) \) has no dependence on \( x_{i+1} \) and the
corresponding strongly adapted step-size is given by
\( \eta_i = \norm{\nabla f(x_i)}_2 / (2 \norm{B \nabla f(x_i)}_2) \) --- see
\Cref{lemma:quadratic-pointwise-direction-smoothness}.
Remarkably, this expression recovers the step-size proposed by
\citet{dai2006computational}, who show it approximates the Cauchy step-size
and converges to the ``edge-of-stability'' \citep{cohen2021stability}
at \( 2 / L \) as \( k \rightarrow \infty \).
Combining this simple expression with \cref{eq:quad-1} gives a fast,
non-asymptotic convergence rate for GD and new theoretical justification for
their work.


We can also compute the path-wise directional smoothness in closed form.
As \Cref{lemma:quadratic-pathwise-direction-smoothness} shows,
\[
    A(x_i, x_{i+1}) = \nabla f(x_i)^{\top} B \nabla f(x_i)
    / \nabla f(x)^{\top} \nabla f(x),
\]
and $\eta_i = {\nabla f(x_i)^{\top} \nabla f(x_i)} / [\nabla f(x_i)^{\top} B \nabla f(x_i)]$
is the well-known Cauchy step-size.
Path-wise directional smoothness thus provides another interpretation (and
convergence guarantee) for the Cauchy step-size, which is traditionally
derived by minimizing $f(x-\eta \nabla f(x))$ in $\eta$.

\begin{figure*}[t]
    \centering
    \includegraphics[width=0.98\textwidth]{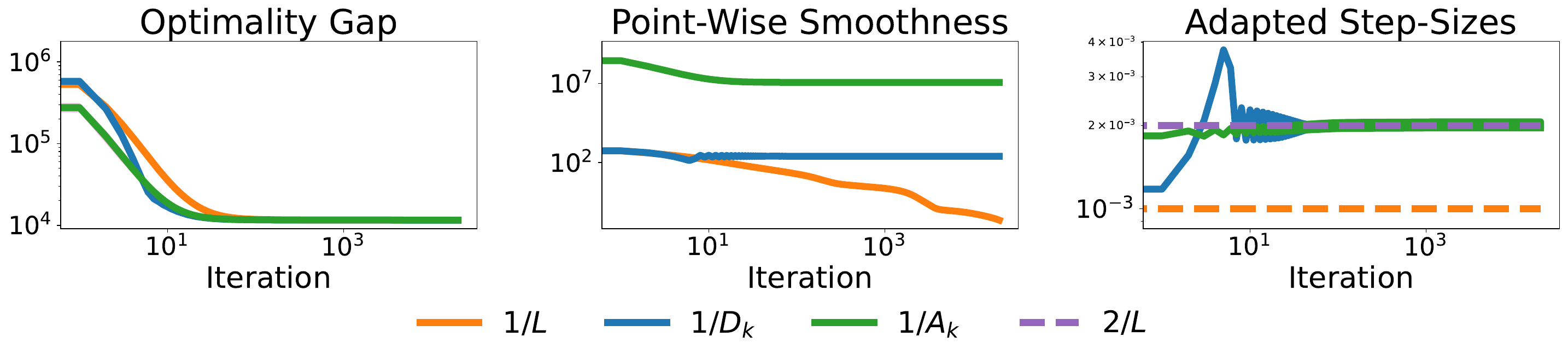}
    \caption{Performance of GD with different step-size rules
        for a synthetic quadratic problem.
        We run GD for 20,000 steps on 20 random quadratic problems with $L=1000$ and
        Hessian skew.
        Left-to-right, the first plot shows the optimality gap \( f(\xk) - f(\xopt)
        \), the second shows the point-wise directional smoothness \( D( \xk, \xkk) \),
        and the third shows step-sizes used by the different methods.
    }
    \label{fig:quadratics}
\end{figure*}


\subsection{Adaptivity for Convex Functions}%
\label{sec:adaptivity-general-case}

In the last subsection, we proved that strongly adapted step-sizes for the
point-wise and path-wise directional smoothness functions have closed-form
expressions when \( f \) is quadratic.
Moreover, these step-sizes recover two classical schemes from the optimization
literature, giving them new justification and fast convergence rates.
Now we consider the existence of strongly adapted step-sizes for general
convex functions.
Our first result gives simple conditions for
\cref{eq:strongly-adapted-step-size} to have at least one solution when \( M
\) is the point-wise directional smoothness.
\begin{restatable}{proposition}{propExistenceD}\label{prop:existence-D}
    If \( f \) is convex and continuously differentiable, then either (i) \( f \)
    is minimized along the ray \( x(\eta) = x - \eta \grad(x) \) or (ii) there
    exists \( \eta > 0 \) satisfying $\eta = 1/D(x, x-\eta \nabla f(x))$.
\end{restatable}
The next proposition uses a similar argument with slightly stronger conditions
to show existence of strongly adapted step-sizes for the path-wise smoothness.
\begin{restatable}{proposition}{propExistenceA}\label{prop:existence-A}
    If \( f \) is convex and twice continuously differentiable, then either (i)
    \( f \) is minimized along the ray \( x(\eta) = x - \eta \grad(x) \) or (ii)
    there exists \( \eta > 0 \) satisfying $\eta = 1/A(x, x-\eta \nabla f(x))$.
\end{restatable}
Propositions~\ref{prop:existence-D} and~\ref{prop:existence-A} do not assume
the global smoothness;
although neither proof is constructive, it is possible to compute strongly
adapted step-sizes for the point-wise directional smoothness using root-finding
methods.
We show in \cref{sec:experiments} that if \( f \) is twice differentiable, then
strongly adapted step-sizes can be found via Newton's method using only
Hessian-vector products, \( \nabla^2 f(x) \nabla f(x) \).

\subsubsection{Exponential Search}

Now we show that the exponential search algorithm developed by
\citet{yair2022parameter} can be used to find step-sizes that adapt
\emph{on average} to the directional smoothness.
Consider a fixed optimization horizon $k$ and denote by $x_i (\eta)$ the
sequence of iterates obtained by running GD from $x_0$ using a fixed step-size
$\eta$.
Define the criterion function,
\begin{align}
    \label{eq:psi}
    \psi (\eta)
    = \frac{\sum_{i=0}^k \norm{\nabla f(x_i (\eta))}_2^2}{\sum_{i=0}^k M(x_i (\eta), x_{i+1} (\eta))
        \norm{\nabla f(x_i (\eta))}_2^2},
\end{align}
and suppose that we have a step-size $\eta$ that satisfies
$\psi(\eta)/ 2 \leq \eta \leq \psi (\eta)$.
Using these bounds in \Cref{prop:convex-any-stepsize} yields
the following convergence rate,
\begin{align}
    \label{eq:exp-5} \textstyle
    f(\bar{x}_k) - f_{*} \leq \left[ \frac{k \sum_{i=0}^k M(x_i, x_{i+1}) \norm{\nabla f(x_i)}_2^2}{\sum_{i=0}^k \norm{\nabla f(x_i)}_2^2} \right] \norm{x_0 - x^{*}}_2^2.
\end{align}
While $\eta$ does not adapt to each directional smoothness $M(x_i, x_{i+1})$
along the path, it adapts to a \emph{weighted average} of the directional
smoothness constants, where the weights are the observed squared gradient
norms.
This is always smaller than the maximum directional smoothness along the
trajectory and can be much smaller than the global smoothness.
Furthermore, we have reduced our problem to finding $\eta \in [\psi(\eta)/2,
\psi(\eta)]$, which is similar to the problem \citet{yair2022parameter} solve
with exponential search.
We adopt their approach as \Cref{alg:exponential-search-gd} and give a
convergence guarantee.

\begin{restatable}{theorem}{exponentialsearch}\label{thm:exponential-search-gd}
    Assume $f$ is convex and $L$-smooth.
    Then Algorithm~\ref{alg:exponential-search-gd} with $\eta_0 > 0$
    requires at most \( 2 K (\log \log(2\eta_0/L) \lor 1) \) iterations of GD
    and in the last run it outputs a step-size $\eta$ and point
    $\bar{x}_K = \frac{1}{K} \sum_{i=0}^{K-1} x_i (\eta)$ such that exactly one of the
    following holds: 
    \begin{align*}
         & \text{Case 1: } \quad \eta = \eta_0 \quad
        \text{ and } \quad
        f(\bar{x}_K) - f(\xopt)
        \leq \frac{\norm{x_0 - x^{*}}_2^2}{2K \eta_0}   \\
         & \text{Case 2: } \quad \eta \neq \eta_0 \quad
        \text{ and } \quad
        f(\bar{x}_K) - f^{*}
        \leq \frac{\norm{x_0 - x^{*}}_2^2}{2K}
        \left[ \frac{\sum_{i=0}^k M_i \norm{\nabla f(x_i^{\prime})}_2^2}
        {\sum_{i=0}^k \norm{\nabla f(x_i^{\prime})}_2^2} \right],
    \end{align*}
    where $M_i \eqdef M(x_i^{\prime}, x_{i+1}^{\prime})$ and
    $x_i^{\prime}$ are the iterates generated by GD
    with step-size $\eta^{\prime} \in \left[\eta, 2 \eta \right]$.
\end{restatable}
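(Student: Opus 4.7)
The plan is to apply \Cref{prop:convex-any-stepsize} to a single run of GD with a carefully chosen constant step-size, then employ the exponential-search framework of \citet{yair2022parameter} to locate that step-size in $\log\log$ rounds. The key observation is that when \( \eta_i \equiv \eta \) is constant, the second term in \cref{eq:1} rearranges to
\[ \frac{\eta}{2T}\sum_{i=0}^{T-1}\rbr{\eta M_i - 1}\norm{\grad(x_i)}_2^2, \]
which is non-positive precisely when \( \eta \leq \psi(\eta) \), while for \( \eta \geq \psi(\eta)/2 \) the first term $\|x_0 - x_*\|_2^2/(2T\eta)$ is within a factor of two of $\|x_0-x_*\|_2^2/(2T\psi(\eta))$. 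So the entire theorem reduces to finding $\eta$ in the feasible set $\mathcal{F} = \{\eta : \psi(\eta)/2 \leq \eta \leq \psi(\eta)\}$, and then producing from it a companion step-size $\eta' \in [\eta, 2\eta]$ at which the two terms in \Cref{prop:convex-any-stepsize} exactly balance to $\|x_0-x_*\|_2^2/(2T\psi(\eta'))$.

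First I would handle Case 1 by checking $\eta_0$ directly: if $\eta_0 \leq \psi(\eta_0)$, then applying \Cref{prop:convex-any-stepsize} with constant step-size $\eta_0$ drops the correction term and yields $f(\hat x_T) - f_* \leq \|x_0-x_*\|_2^2/(2T\eta_0)$ with no search needed, matching Case 1 exactly. Otherwise, we enter the search, and I must certify that $\mathcal{F}$ is nonempty inside a bounded log-range. Here I would use $L$-smoothness: every directional smoothness function we consider satisfies $M(y,x) \leq 2L$, so $\psi(\eta) \geq 1/(2L)$ uniformly in $\eta$. Hence any $\eta \leq 1/(2L)$ automatically obeys $\eta \leq \psi(\eta)$, while $\eta_0 > \psi(\eta_0)$ by assumption, so a continuity/monotonicity argument on $\eta \mapsto \eta - \psi(\eta)$ produces a feasible $\eta^\star \in [1/(2L), \eta_0]$. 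This bounds the initial log-scale bracket width by $\log(2\eta_0 L)$.

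Second I would run the Carmon--Hinder exponential bisection on this bracket: each round picks a trial step-size (the geometric midpoint of the current bracket in log-log coordinates), runs $T$ GD steps to evaluate $\psi$, and shrinks the log-log width by half. This gives at most $\log\log(2\eta_0/L^{-1}) \lor 1$ rounds, each costing $T$ gradient evaluations; the extra factor of two in the stated $2T(\log\log(2\eta_0/L^{-1}) \lor 1)$ budget covers the final verification run that outputs $\hat{x}_T$. On termination the algorithm returns $\eta$ for which the true feasible $\eta'$ lies within a factor of two, i.e.\ $\eta' \in [\eta, 2\eta]$; plugging the iterates from this $\eta'$ into \Cref{prop:convex-any-stepsize} and invoking $\eta' \leq \psi(\eta')$ removes the correction term and yields exactly the Case 2 bound.

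The main obstacle will be the bookkeeping around the companion step-size $\eta'$: I need the bisection invariant to guarantee that the returned $\eta$ is either itself feasible (giving the bound directly) or paired with a nearby $\eta'$ whose $\psi$ value realizes the claimed rate without losing constants. This amounts to importing the fixed-point bisection analysis of \citet{yair2022parameter} and verifying that the monotonicity of $\psi$ required by their framework follows in our setting from convexity plus $L$-smoothness of $f$. Once that correspondence is established, the remainder is a direct combination of their search guarantee with \Cref{prop:convex-any-stepsize}.
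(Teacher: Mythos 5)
Your high-level architecture matches the paper's proof quite closely: you correctly instantiate \Cref{prop:convex-any-stepsize} at a constant step-size, identify that the correction term flips sign according to whether \(\eta \lessgtr \psi(\eta)\), handle Case 1 as a ``lucky strike'' at \(\eta_0\), use \(L\)-smoothness to show the search bracket is bounded (\(\eta \leq 1/(2L)\) is always feasible), and invoke exponential bisection to get the \(\log\log\) dependence. These are all correct, and they are the paper's steps.

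However, the mechanism you describe in Case 2 has the roles of \(\eta\) and \(\eta'\) inverted, along with the key inequality. You claim that one plugs the iterates generated at \(\eta'\) into \Cref{prop:convex-any-stepsize} and invokes \(\eta' \leq \psi(\eta')\) to kill the correction term. But the output point \(\hat{x}_T\) is averaged over the iterates at \(\eta\) (the returned step-size, which is the lower bracket endpoint \(\eta_{\mathrm{lo}}\)), and it is \(\eta\), not \(\eta'\), that satisfies \(\phi(\eta)=\eta-\psi(\eta)\le 0\). The companion step-size \(\eta'\) is the upper bracket endpoint \(\eta_{\mathrm{hi}}\), and it satisfies the \emph{opposite} inequality \(\eta' > \psi(\eta')\) — this is exactly the invariant the bisection maintains. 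The actual chain is: (i) apply \Cref{prop:convex-any-stepsize} at \(\eta\) and drop the non-positive correction to get \(f(\hat{x}_T)-f^*\le \norm{x_0-x^*}^2/(2\eta T)\); (ii) use the termination condition \(\eta = \eta_{\mathrm{lo}}\ge \eta_{\mathrm{hi}}/2 = \eta'/2\) together with \(\eta' > \psi(\eta')\) to obtain \(\eta > \psi(\eta')/2\); (iii) substitute. So the trajectory at \(\eta'\) never enters \Cref{prop:convex-any-stepsize} at all — it only appears through the definition of \(\psi(\eta')\), which lower bounds the returned step-size. As you wrote it, the argument would bound sub-optimality at the wrong iterate and would rely on an inequality that goes the wrong way. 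Also, you do not need (and cannot easily establish) monotonicity of \(\psi\): the bisection only needs the sign invariant \(\phi(\eta_{\mathrm{lo}})\le 0 < \phi(\eta_{\mathrm{hi}})\), which is preserved by checking \(\phi\) at the geometric midpoint regardless of whether \(\phi\) is monotone, and the final bound only uses the two endpoint inequalities, not any global shape of \(\psi\).
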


\Cref{thm:exponential-search-gd} requires \( f \) to be $L$-smooth, but has
only a $\log \log$ dependence on the global smoothness constant.
Moreover, the rate scales with the weighted average of
smoothness constants along a very close trajectory
$\cbr{x_1^{\prime}, x_2^{\prime}, \ldots}$.
In the next section, we give convergence bounds that depend on the unweighted
average of the directional smoothness constants along the \emph{actual}
optimization trajectory.

\subsubsection{Polyak's Step-Size Rule}

Our theory so-far suggests using strongly adapted step-sizes, but
neither root-finding nor exponential search are practical methods
for large-scale optimization.
Thus, we now consider other step-size selection rules which may leverage
directional smoothness.
In particular, the Polyak step-size sets,
\begin{equation}
    \label{eq:polyak-step-size}
    \etak = \gamma \rbr{f(\xk) - f(\xopt)} / \sqn{\nabla f(\xk)}_2,
\end{equation}
for some \( \gamma > 0 \), which is optimal for smooth and non-smooth
optimization \citep{hazan2019revisiting} given knowledge of \( f(\wopt) \).
Surprisingly, we show that GD with the Polyak step-size also achieves
the same guarantee as strongly adapted step-sizes without
knowledge of the directional smoothness.

\begin{restatable}{theorem}{polyak}\label{thm:polyak}
    Suppose that $f$ is convex and differentiable and let $M$ be any
    directional smoothness function for $f$. Let $\Delta_0 := \|x_0 -\xopt\|_2^2$.
    Then GD with the Polyak step-size and \( \gamma \in \rbr{1, 2} \) satisfies
    \begin{align}
        \label{eq:polyak-guarantee-1}
        f(\bar x_k) - f(\xopt) \leq
        \frac{c(\gamma) \Delta_0}
        {2\sum_{i=0}^{k-1} M(x_i, x_{i+1})^{-1}},
    \end{align}
    where \( c(\gamma) = \gamma / (2 - \gamma)(\gamma - 1) \) and
    \(
    \bar x_k = \sum_{i=0}^{k-1} \sbr{M(x_i, x_{i+1})^{-1} x_i} /
    \rbr{\sum_{i=0}^{k-1} M(x_i, x_{i+1})^{-1}} \).
\end{restatable}
\cref{thm:polyak} measures sub-optimality at an average iterate
obtained using the directional smoothness.
However, it also holds for the best iterate,
\( \hat x_k = \argmin_{i \in [k]} f(\x_i) \),
meaning no knowledge of the directional smoothness is required to obtain the
guarantee.
We prove an alternative guarantee for the Polyak step-size in
\cref{thm:polyak-alternate}, where the progress depends on the sum of
step-sizes rather than on the average directional smoothness.
This shows that the step-size in \cref{eq:polyak-step-size} can itself be
viewed as a measure of local smoothness, albeit without formal justification.

Compared with the standard guarantee for the Polyak
step-size under $L$-smoothness,
\( f(\bar{x}_k) - f(\xopt) \leq 2 L \Delta_0 / k \) \citep{hazan2019revisiting},
our analysis in \Cref{thm:polyak} with the choice $\gamma = 1.5$ yields
\begin{equation*}
    f(\bar{x}_k) - f(\xopt)
    \leq \frac{3 \Delta_0} {\sum_{i=0}^{k-1} M(x_i, x_{i+1})^{-1}}
    \leq \frac{3 \Delta_0}{k} \frac{\sum_{k=0}^{k-1} M(x_i, x_{i+1})}{k},
\end{equation*}
where the second bound follows from Jensen's inequality
and shows that the convergence depends on the average
directional smoothness along the trajectory, rather than on $L$.
If \( f \) is \( L \)-smooth, then $M(\xk, \xkk) \leq L$ immediately
recovers the classic rate for Polyak's method up to a \( 3/2 \) constant factor.
If \( f \) is not \( L \)-smooth, but \( M(\xk, \xkk) \) is bounded, then
\cref{eq:polyak-guarantee-1} generalizes the \( O(1/k) \) rate proved
concurrently by \citet{takezawa2024polyak}, but for any choice of directional
smoothness (of which \( (L_0, L_1) \)-smoothness \citep{zhang2020clipping} is
but one).


\textbf{Comparison with strongly adapted step-sizes.} As we saw for
quadratics, strongly adapted step-sizes for any directional smoothness function
allow us to obtain the following convergence rate,
\begin{equation*}
    f(\bar{x}_k) - f(\xopt)
    \leq \frac{\sqn{x_0 - \xopt}_2}{2 \sum_{i=0}^{k-1} M (x_i, x_{i+1})^{-1}}.
\end{equation*}
This is matches the guarantee given by \cref{eq:polyak-guarantee-1} up to
constant factors.
As a result, we give a positive answer to the question posed earlier in this
section: GD with the Polyak step-size achieves the same convergence for any
smoothness function $M$ as GD with step-sizes strongly adapted to $M$.

\textbf{Application to the optimal directional smoothness.}
\Cref{thm:polyak} holds for \emph{every} directional smoothness
function $M$.
Therefore we can specialize \cref{eq:polyak-guarantee-1} with the optimal
point-wise directional smoothness $H$ (as defined in
\cref{eq:directional-smoothness}) and $\gamma = 1.5$ to get the guarantee,
\begin{equation}
    \label{eq:polyak-optimal}
    \min_{i \in [k-1]} \left[ f(x_i) - f(\xopt) \right]
    \leq \frac{3 \sqn{x_0 - \xopt}_2}{\sum_{i=0}^{k-1} H(x_i, x_{i+1})^{-1}}.
\end{equation}
This rate requires computing the iterate with the minimum function value, but
that is easy to track during optimization.
Unlike our previous results, \cref{eq:polyak-optimal} requires no access to
the optimal point-wise smoothness, yet obtains a dependence on the tightest
constant possible.

\subsection{Normalized Gradient Descent}

Now we change directions slightly and study normalized GD, whose
convergence also depends on the directional smoothness.
Normalized GD uses step-sizes which are divided by the gradient
magnitude,
\begin{equation}
    \label{eq:normalized-gd}
    \xkk = \xk - \frac{\etak}{\norm{\grad(\xk)}_2}\grad(\xk).
\end{equation}
Our next theorem shows that normalized GD obtains a guarantee which depends
solely on the average of the point-wise directional smoothness
$D_k := D(x_k, x_{k+1})$ despite no explicit knowledge of \( D_k \).

\begin{restatable}{theorem}{ngd}\label{thm:ngd}
    Suppose that $f$ is convex and differentiable.
    Let $D$ be the point-wise directional smoothness defined by
    \cref{eq:directional-smoothness} and $\Delta_0 := \|x_0 -\xopt\|_2^2$.
    Then normalized GD with a sequence of non-increasing step-sizes \( \etak \)
    satisfies
    \begin{equation}
        \label{eq:ngd-guarantee}
        f(\hat x_k)-f(\xopt)
        \le \frac{\Delta_0 + \sum_{i=0}^{k-1}\eta_i^2}{2k^2}\left( \frac{f(x_0)}{\eta_0^2} - \frac{f(\xopt)}{\eta_{k-1}^2} \right)
        + \frac{\Delta_0 + \sum_{i=0}^{k-1}\eta_i^2}{2k}\sum_{i=0}^{k-1}   \frac{M(x_i, x_{i+1})}{k}   ,
    \end{equation}
    where $\hat x_k = \arg\min_{i \in [k-1]} f(x_i)$.
    If \( \max_{i \in [k-1]} M(x_i, x_{i+1}) \) is bounded for all \( k \)
    (i.e.~\( f \) is \( L \)-smooth), then for $\eta_i = 1/\sqrt{i}$ we have
    $f(\hat x_k)-f(\xopt) \in \mathcal{O}(1/k)$ and for $\eta_i = 1/\sqrt{i}$
    we get the anytime result
    $f(\hat x_k)-f(\xopt) \in \mathcal{O}(\log(k)/k)$.
\end{restatable}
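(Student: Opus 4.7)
The plan is to combine three ingredients: (i) the standard convex distance recursion for normalized gradient descent; (ii) the directional-smoothness descent lemma \cref{eq:local-descent-lemma} applied to the normalized update; and (iii) a Cauchy--Schwarz step that links the two. Throughout, write $g_k = \|\nabla f(x_k)\|_2$, $R_k = \|x_k - x^*\|_2$, $\delta_k = f(x_k) - f^*$, $M_k = M(x_{k+1}, x_k)$, and $S = \Delta_0^2 + \sum_{t=0}^{T-1} \eta_t^2$.

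First I would derive the distance recursion. Because $\|x_{k+1} - x_k\|_2 = \eta_k$ exactly for the normalized update, expanding $\|x_{k+1} - x^*\|_2^2$ and using convexity in the form $\langle \nabla f(x_k), x_k - x^* \rangle \geq \delta_k$ gives
\begin{equation*}
R_{k+1}^2 \leq R_k^2 - \frac{2 \eta_k \delta_k}{g_k} + \eta_k^2,
\end{equation*}
which telescopes to $\sum_{k=0}^{T-1} \eta_k \delta_k / g_k \leq S/2$. Next, applying the directional-smoothness descent lemma with $\|x_{k+1}-x_k\|_2 = \eta_k$ and $\langle \nabla f(x_k), x_{k+1}-x_k\rangle = -\eta_k g_k$ yields $f(x_{k+1}) \leq f(x_k) - \eta_k g_k + M_k \eta_k^2/2$, which rearranges to $g_k/\eta_k \leq (f(x_k) - f(x_{k+1}))/\eta_k^2 + M_k/2$. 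I would then apply Cauchy--Schwarz via the factorization $\delta_k = \sqrt{\eta_k\delta_k/g_k} \cdot \sqrt{g_k\delta_k/\eta_k}$ and combine with the trivial inequality $T(f(\hat x) - f^*) \leq \sum_k \delta_k$ to conclude
\begin{equation*}
T^2 (f(\hat x) - f^*)^2 \leq \frac{S}{2}\sum_k \frac{g_k \delta_k}{\eta_k},
\end{equation*}
into which the descent-lemma bound on $g_k/\eta_k$ is substituted.

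The final step is to bound the two residual sums $\sum_k \delta_k (f(x_k) - f(x_{k+1}))/\eta_k^2$ and $\sum_k M_k \delta_k$ in terms of the quantities in the statement. For the first sum I would apply Abel summation (summation by parts) and exploit that $\eta_k$ is non-increasing, so $1/\eta_k^2$ is non-decreasing, together with $\delta_k \geq 0$ and $f(x_T) \geq f^*$, to reduce it to the telescoping form $f(x_0)/\eta_0^2 - f^*/\eta_{T-1}^2$ appearing in \cref{eq:ngd-guarantee}. For the second sum, the bound $\delta_k \leq \max_j \delta_j$ is absorbed on the left-hand side of the squared inequality before solving for $f(\hat x) - f^*$, producing the $\sum_k M_k/T$ term after dividing by $T^2$. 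The corollaries for $\eta_t = 1/\sqrt T$ and $\eta_t = 1/\sqrt t$ follow by direct substitution: in both cases $S = O(1)$ or $O(\log T)$ while the bracket scales like $O(T)$, giving the claimed $O(1/T)$ and $O(\log t/t)$ rates.

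The main obstacle is the Abel-summation step for $\sum_k \delta_k (f(x_k) - f(x_{k+1}))/\eta_k^2$. Because $f(x_k)$ is not guaranteed to be monotone for normalized GD when the step-sizes are not strongly adapted to $M$, one cannot naively telescope the function values; the precise form $f(x_0)/\eta_0^2 - f^*/\eta_{T-1}^2$ in the theorem relies on the non-increasing step-size hypothesis together with careful bookkeeping of the boundary and cross terms produced by summation by parts.
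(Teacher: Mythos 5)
Your ingredients are individually on target — the distance recursion $R_{k+1}^2 \leq R_k^2 - 2\eta_k\delta_k/g_k + \eta_k^2$ and the normalized-step descent inequality $g_k/\eta_k \leq (f(x_k)-f(x_{k+1}))/\eta_k^2 + M_k/2$ both match the paper's building blocks (the latter is proved in the paper as a separate lemma via convexity at $x_{k+1}$, but the conclusion is the same). The gap is in the combination step. You factor $\delta_k = \sqrt{\eta_k\delta_k/g_k}\cdot\sqrt{g_k\delta_k/\eta_k}$ and apply Cauchy--Schwarz, arriving at $T^2(f(\hat x)-f^*)^2 \leq \tfrac{S}{2}\sum_k g_k\delta_k/\eta_k$. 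This is where things go wrong: the sum on the right is now \emph{weighted by $\delta_k$}, and substituting the descent inequality produces $\sum_k \delta_k(f(x_k)-f(x_{k+1}))/\eta_k^2$ and $\sum_k M_k\delta_k$, neither of which you can control. The first does not telescope under Abel summation precisely because of the $\delta_k$ weight; the second cannot be ``absorbed on the left'' as you suggest, since the left-hand side carries $(\min_k\delta_k)^2$ while $\max_j\delta_j$ appears on the right and there is no bound relating the two (indeed $f(x_k)$ need not be monotone along normalized GD, as you yourself note). Moreover, even granting an $O(1)$ bound on the bracket, you would obtain $f(\hat x)-f^* \lesssim \sqrt{S}/T$, but the natural decay $\delta_t\sim 1/t$, $\eta_t\sim 1/\sqrt{t}$ gives $\sum_t g_t\delta_t/\eta_t \sim \sqrt{T}$, degrading the rate to $O(T^{-3/4})$.

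The paper avoids this trap by keeping the two uses of averaging separate. It writes $\min_t\delta_t \leq \big(\sum_t w_t\delta_t\big)/\big(\sum_t w_t\big)$ with weights $w_t = \eta_t/g_t$, bounds the numerator by $S/2$ via the distance recursion, and bounds the denominator from below via the AM--HM (Jensen on $a\mapsto 1/a$) inequality $\sum_t w_t \geq T^2/\sum_t(1/w_t)$. The crucial point is that $\sum_t(1/w_t) = \sum_t g_t/\eta_t$ is \emph{unweighted}, so the descent inequality can be summed directly and the Abel summation on $\sum_t(f(x_t)-f(x_{t+1}))/\eta_t^2$ telescopes to the boundary form $f(x_0)/\eta_0^2 - f^*/\eta_{T-1}^2$ without any $\delta_t$ contamination. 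Said differently: the correct Cauchy--Schwarz is $T^2 = \big(\sum_t\sqrt{w_t}\cdot\sqrt{1/w_t}\big)^2 \leq \big(\sum_t w_t\big)\big(\sum_t 1/w_t\big)$, applied to the \emph{unweighted} indicator, not to $\delta_k$ as you did. You should rework the argument along those lines; the distance recursion and descent inequality you already have will then slot in directly.
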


%

\begin{figure*}[t]
    \centering
    \includegraphics[width=0.98\textwidth]{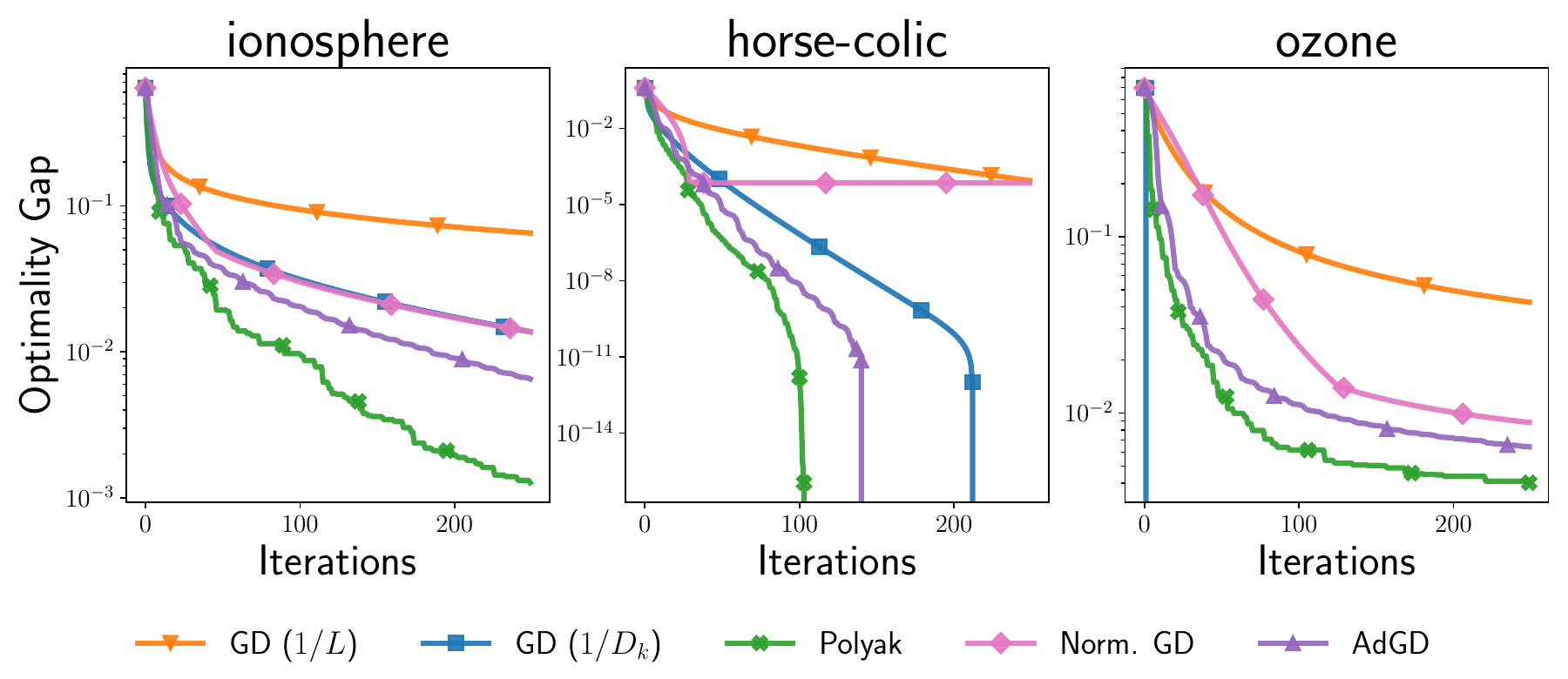}
    \caption{
        Comparison of GD with \( \etak = 1 / L \),
        step-sizes strongly adapted to the point-wise smoothness (\(
        \etak = 1/D( \xk, \xkk) \)), and the Polyak
        step-size against normalized GD (Norm.  GD) and the AdGD method on
        three logistic regression problems.
        AdGD uses a smoothed version of the point-wise directional smoothness from the
        previous iteration to set \( \etak \).
        We find that GD methods with adaptive step-sizes consistently outperform GD
        with \( \etak = 1 / L \) and even obtain a linear rate on \texttt{horse-colic}.
    }
    \label{fig:logistic-comparison}
\end{figure*}

\Cref{thm:ngd} gives a rate for normalized GD which is valid for any convex $f$
without any dependence on global smoothness.
However, does not adapt to any smoothness function like the Polyak step-size.

\section{Experiments}%
\label{sec:experiments}



We evaluate the practical improvement of our convergence rates over those
using \( L \)-smoothness on two logistic regression problems taken from
the UCI repository \citep{asuncion2007uci}.
\cref{fig:rate-comparison} compares GD with strongly adapted step-sizes
\( \eta = 1 / M_k \), where \( M_k \) is the point-wise
smoothness, against GD with the Polyak step-size.
We also plot the exact convergence rates for each method, \cref{eq:1} and
\cref{eq:polyak-guarantee-1}, respectively, and compare against the classical
guarantee for both methods.
Our convergence rates are an order of magnitude tighter on the
\texttt{ionosphere} dataset and display a remarkable ability to adapt to the
path of optimization on \texttt{mammographic}.

Figure~\ref{fig:quadratics} compares the performance of GD with strongly
adapted step-sizes and with the fixed step-size $\etak = 1/L$ for a synthetic
quadratic with Hessian skew \citep{pan21_eigen}.
Results are averaged over twenty random problems.
We find that strongly adapted step-sizes lead to significantly faster
convergence.
Since \( A_k, D_k \ll L \), the adapted step-sizes are larger than $2/L$,
especially at the start of training;
they eventually converge to $2/L$, indicating these methods
operate at the edge-of-stability
\citep{cohen2021stability,cohen22_adapt_gradien_method_at_edge_stabil}.
This is consistent with \citet{ahn2022unstable, pan2023toward}, who show local
smoothness is correlated with edge-of-stability behavior.

We conclude with a comparison of empirical convergence rates on three
additional logistic regression problems from the UCI repository.
We compare GD with \( \etak = 1 / L \), GD with step-sizes strongly adapted to
the point-wise smoothness (\( \etak = 1/D_k \)), GD with the Polyak step-size
(Polyak), and normalized GD (Norm.
GD) against the AdGD method \citep{malitsky2020descent}.
The Polyak step-size performs best on every dataset but \texttt{ozone}, where
GD with \( \etak = 1/D_k \) solves the problem to high accuracy in just a few
iterations.
Thus, although Polyak step-sizes have the optimal dependence on directional
smoothness, computing strongly adapted step-sizes can still be advantageous.

\section{Conclusion}
\label{sec:conclusion}


We present new sub-optimality bounds for GD under novel measures of local
gradient variation which we call directional smoothness functions.
Our results hold for any step-sizes, improve over standard
analyses when \( \etak \) is adapted to the choice of directional smoothness, 
and depend only on properties of \( f \) local to the optimization path.
For convex quadratics, we show that computing step-sizes strongly
adapted to directional smoothness functions is straightforward and recovers
two well-known step-size schemes, including the Cauchy step-size.
In the general case, we prove that an algorithm based on exponential search
gives a weighted-version of the path-dependent convergence rate with no need
for adapted step-sizes.
We also show that GD with the Polyak step-size and normalized GD both obtain
fast rates with no dependence on the global smoothness parameter.
Crucially, the Polyak step-size adapts to any choice of directional
smoothness, including the tightest possible parameter.


\newcommand{\acktext}{
    \section*{Acknowledgements}
    Aaron Mishkin was supported by NSF Grant DGE-1656518, by NSERC Grant
    PGSD3-547242-2020, and by an internship at the Center for Computational
    Mathematics, Flatiron Institute.
    We thank Si Yi Meng for insightful discussions during the preparation
    of this work and Fabian Schaipp for use of the
    \href{https://github.com/fabian-sp/step-back}{step-back} code.
    We also thank the anonymous reviewers for comments leading to improvements
    in \cref{prop:sc-iterates-analysis} and the addition of
    \cref{thm:polyak-alternate}.
}

\ifdefined\@neuripsfinaltrue
    \acktext
\fi

\printbibliography[]

\newpage
\appendix

\section{Proofs for Section~\ref{sec:local-direct-smoothn}}%
\label{app:directional-smoothness-proofs}


\directionalSmoothness*
\begin{proof}
    By the convexity of $f$ we have
    \begin{align*}
        f(x) + \ev{ \nabla f(x) , y-x } \leq f(y).
    \end{align*}
    Rearranging and then using Cauchy-Schwarz we get
    \begin{align*}
        f(x) & \leq f(y) + \ev{ \nabla f(x) , x-y }                                               \\
             & = f(y) + \ev{ \nabla f(y) , x-y } + \ev{ \nabla f(x) - \nabla f(y) , x-y }         \\
             & \leq f(y) + \ev{ \nabla f(y) , x-y } + \norm{\nabla f(x) - \nabla f(y)} \norm{x-y} \\
             & = f(y) + \ev{ \nabla f(y) , x-y } + \frac{D(x, y)}{2} \norm{x-y}^2.
        \qedhere
    \end{align*}
\end{proof}

\pathSmoothness*
\begin{proof}
    Starting from the fundamental theorem of calculus,
    \begin{align*}
        f(y) - f(x) - \abr{\grad(x), y - x}
         & = \int_{0}^1 \abr{\grad(x + t(y - x)) - \grad(x), y - x} dt \\
         & \leq \int_{0}^1 A(x, y) t \norm{x - y}_2^2 dt               \\
         & = \frac{A(x, y)}{2} \norm{y - x}^2_2.
    \end{align*}
    which completes the proof.
\end{proof}


\dirSmoothTight*

\begin{proof}
    Let $H_f$ denote the optimal pointwise directional smoothness associated with
    some convex and differentiable function $f: \mathbb{R}^d \rightarrow
        \mathbb{R}$ (as defined in \cref{eq:directional-smoothness}), and $D_f$ denote
    the pointwise directional smoothness associated with $f$ (as defined in
    \cref{eq:directional-smoothness}).
    For any $t$, the statement of \eqref{eq:dir-smoothness-tight-prop} is equivalent
    to saying $H_f > t \frac{\norm{\nabla f(x) - \nabla f(y)}}{\norm{x-y}}$ for
    all $x,y \in \mathbb{R}^d$ and convex, differentiable $f$.
    Observe that
    \Cref{lemma:directional-smoothness} already shows that for all convex and
    differentiable functions $f: \mathbb{R}^d \rightarrow \mathbb{R}$
    \begin{align*}
        H_f(x, y) \leq D_f(x, y) = 2 \frac{\norm{\nabla f (x) - \nabla f
                (y)}}{\norm{x-y}}
    \end{align*}
    for all $x, y \in \mathbb{R}^d$.
    In order to show that this is tight, we suppose
    by the way of contradiction that there exists some $2> t \geq 0$ such that
    for all convex and differentiable functions $f: \mathbb{R}^d \rightarrow \mathbb{R}$
    \begin{align}
        \label{eq:n0}
        H_f(x, y) \leq t \cdot \frac{\norm{\nabla f (x) - \nabla f (y)}}{\norm{x-y}}
    \end{align}
    for all $x, y \in \mathbb{R}$.
    We shall show that no such $t$ exists by showing for each such $t$ there
    exists a function $f_t$ such that \cref{eq:n0} does not hold.

    Consider $f_{\epsilon}(x) = \sqrt{x^2 + \epsilon^2}$ for $\epsilon \leq 1$.
    The function $f$ is differentiable.
    Moreover
    \begin{align*}
        f_{\epsilon}^{\prime} (x) = \frac{x}{\sqrt{x^2 + \epsilon^2}}, &  & f_{\epsilon}^{\prime\prime} (x) = \frac{\epsilon^2}{(\epsilon^2 + x^2)^{\frac{3}{2}}} \geq 0.
    \end{align*}
    Therefore $f$ is convex.
    Let $g(x) = \abs{x}$.
    Fix $x=1$ and $y = 0$, we have
    \begin{align*}
        \abs{g(x) - g(y) - {\sign(y) \cdot (x-y)}}
         & \leq \abs{f_{\epsilon}(x) - f_{\epsilon}(y) - {f_{\epsilon}^{\prime}(y) \cdot (x-y)}}
        + \abs{g(x)-f_{\epsilon}(x)}                                                                                        \\
         & \hspace{3em} + \abs{g(y)-f_{\epsilon}(y)} + \abs{{(f_{\epsilon}^{\prime}(y)-\sign(y)) \cdot (x-y)}}              \\
         & = \abs{f_{\epsilon}(x) - f_{\epsilon}(y) - {f_{\epsilon}^{\prime}(y) \cdot (x-y)}} + \abs{1-\sqrt{1+\epsilon^2}} \\
         & \hspace{3em} + \abs{0-\sqrt{\epsilon^2}} + \abs{(0 - 0) \cdot (1-0)}                                             \\
         & \leq \abs{f_{\epsilon}(x) - f_{\epsilon}(y) - {f_{\epsilon}^{\prime}(y) \cdot (x-y)}} + 2\epsilon.
    \end{align*}
    Now observe that
    \begin{align*}
        g(x) - g(y) - {\sign(y) \cdot (x-y)} & = \abs{1} - \abs{0} - 0 \cdot (1-0) = 1.
    \end{align*}
    Therefore
    \begin{align}
        \label{eq:n5}
        \abs{f_{\epsilon}(x) - f_{\epsilon}(y) - {f_{\epsilon}^{\prime}(y) \cdot (x-y)}} \geq 1 - 2\epsilon.
    \end{align}
    By definition we have $\frac{1}{2} \sqn{x-y} = \frac{1}{2}$, therefore
    \begin{align}
        \label{eq:n6}
        H_f(x, y) \; = \; \frac{\abs{f_{\epsilon}(x) - f_{\epsilon}(y) -
        {f_{\epsilon}^{\prime}(y) \cdot (x-y)}}}{\frac{1}{2} \sqn{x-y}} \geq 2 -
        4\epsilon.
    \end{align}
    But by our starting assumption we have that there exists some $t < 2$ such
    that $H_f(x, y) \leq t \frac{\norm{f'(x)- f^{\prime}(y)}}{\abs{x-y}}$ for all
    differentiable and convex functions $f$.
    Applying this to $f=f_{\epsilon}$ we get
    \begin{align}
        \label{eq:n7}
        H_{f_{\epsilon}}(1, 0) & \leq t \frac{\abs{f_{\epsilon}^{\prime}(1) -
                f_{\epsilon}^{\prime}(0)}}{\abs{1}} = t \cdot \frac{1}{\sqrt{1 + \epsilon^2}}
        \leq t.
    \end{align}
    Combining \Cref{eq:n6,eq:n7} we have
    \begin{align*}
        2 - 4\epsilon \leq H_{f_{\epsilon}}(1, 0) \leq t
    \end{align*}
    Rearranging we get
    \begin{align*}
        2-t \leq 4\epsilon
    \end{align*}
    Choosing $\epsilon = \frac{2-t}{8} > 0$ we get a contradiction.
    It follows that the minimal $t$ such that $H(x, y) \leq t
        \frac{\abs{f^{\prime} (x) - f^{\prime}(y)}}{\abs{x-y}}$ for all convex and
    differentiable $f$ is $t=2$.
\end{proof}

\begin{restatable}{lemma}{localDescentLemma}\label{lemma:local-descent-lemma}
    One step of gradient descent with step-size \( \etak > 0 \) makes progress
    as
    \begin{equation*}
        f(\xkk) \leq f(\xk) - \etak \rbr{1 - \frac{\etak M(\xk, \xkk)}{2}} \norm{\grad(\xk)}_2^2.
    \end{equation*}
\end{restatable}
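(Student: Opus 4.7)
The plan is to apply the definition of the directional smoothness function directly to the two successive iterates of gradient descent and then substitute the update rule. First I would invoke \Cref{def:directional-smoothness-function} with $y = x_{k+1}$ and $x = x_k$, which gives
\begin{equation*}
    f(x_{k+1}) \leq f(x_k) + \abr{\nabla f(x_k), x_{k+1} - x_k} + \frac{M(x_{k+1}, x_k)}{2} \norm{x_{k+1} - x_k}_2^2.
\end{equation*}
Note that this step uses $M$ only as an upper bound on $f(x_{k+1})$, so no assumption on the sign of $\eta_k - 2/M(x_{k+1}, x_k)$ is required; the inequality holds for any step-size.

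Next I would substitute the gradient descent update $x_{k+1} - x_k = -\eta_k \nabla f(x_k)$ from \cref{eq:gradient-descent}. The inner product becomes $-\eta_k \norm{\nabla f(x_k)}_2^2$ and the squared-norm term becomes $\eta_k^2 \norm{\nabla f(x_k)}_2^2$. Collecting these gives
\begin{equation*}
    f(x_{k+1}) \leq f(x_k) - \eta_k \norm{\nabla f(x_k)}_2^2 + \frac{\eta_k^2 M(x_{k+1}, x_k)}{2} \norm{\nabla f(x_k)}_2^2,
\end{equation*}
and factoring out $\eta_k \norm{\nabla f(x_k)}_2^2$ yields the claimed bound.

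There is essentially no obstacle here; the lemma is an immediate corollary of \Cref{def:directional-smoothness-function} combined with the explicit form of the GD step. The only conceptual point worth flagging is that $M(x_{k+1}, x_k)$ is evaluated at the \emph{post}-update iterate, so it implicitly depends on $\eta_k$ --- but since the directional-smoothness inequality is a pointwise statement that holds for every pair $(x, y)$, this self-referential dependence causes no issue at this stage. Turning this descent inequality into a convergence rate (which would require $\eta_k < 2/M(x_{k+1}, x_k)$) is addressed separately in \cref{sec:adapt-learn-rates}.
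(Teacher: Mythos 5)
Your proof is correct and is essentially identical to the paper's: both invoke \Cref{def:directional-smoothness-function} at $(y,x) = (\xkk, \xk)$, substitute the GD update $\xkk - \xk = -\etak\grad(\xk)$, and factor. Your additional remark that the inequality holds pointwise for every pair (so the implicit dependence of $M(\xkk,\xk)$ on $\etak$ is harmless at this stage) is a sensible clarification consistent with how the paper later treats adapted step-sizes.
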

\begin{proof}
    Starting from \cref{eq:directional-smoothness}, we have
    \begin{align*}
        f(\xkk)
         & \leq f(\xk) + \abr{\grad(\xk), \xkk - \xk} + \frac{M(\xk, \xkk)}{2} \norm{\xkk - \xk}_2^2     \\
         & = f(\xk) - \etak \norm{\grad(\xk)}_2^2 + \frac{\etak^2 M(\xk, \xkk)}{2} \norm{\grad(\xk)}_2^2 \\
         & = f(\xk) - \etak\rbr{\frac{1 - \etak M(\xk, \xkk)}{2}}\norm{\grad(\xk)}_2^2.
    \end{align*}
\end{proof}

\section{Proofs for Section~\ref{sec:path-dependent-rates}}%
\label{app:path-dependent-rates-proofs}


\begin{restatable}{lemma}{directionalSC}\label{lemma:directional-sc}
    If \( f \) is convex, then for any \( x, y \in \R^d \),
    \begin{equation}
        f(y) \geq f(x) + \abr{\grad(x), y - x} + \frac{\mu(x, y)}{2}\norm{y - x}_2^2.
    \end{equation}
    If \( f \) is \( \mu \) strongly convex, then \( \mu(x, y) \geq \mu \).
\end{restatable}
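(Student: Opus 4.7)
The plan is to mimic the proof of the path-wise smoothness upper bound (\Cref{path-smoothness}) but in the opposite direction, using the infimum structure in the definition of $\mu(y,x)$ rather than the supremum structure in $A(y,x)$. Specifically, I would start from the fundamental theorem of calculus,
\begin{equation*}
f(y) - f(x) = \int_0^1 \abr{\nabla f(x + t(y-x)), y-x} \, dt,
\end{equation*}
and subtract off $\abr{\nabla f(x), y-x} = \int_0^1 \abr{\nabla f(x), y-x} \, dt$ from both sides to obtain
\begin{equation*}
f(y) - f(x) - \abr{\nabla f(x), y-x} = \int_0^1 \abr{\nabla f(x + t(y-x)) - \nabla f(x), y-x} \, dt.
\end{equation*}

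Next, I would lower-bound the integrand using the definition of $\mu(y,x)$ as an infimum over $t \in [0,1]$: since
\begin{equation*}
\mu(y,x) \leq \frac{\abr{\nabla f(x + t(y-x)) - \nabla f(x), y-x}}{t \norm{y-x}_2^2}
\end{equation*}
for every $t \in (0,1]$, multiplying by $t \norm{y-x}_2^2 \geq 0$ gives $\abr{\nabla f(x + t(y-x)) - \nabla f(x), y-x} \geq \mu(y,x)\, t \norm{y-x}_2^2$. Integrating this over $t \in [0,1]$ yields exactly $\frac{\mu(y,x)}{2}\norm{y-x}_2^2$, which establishes the claimed lower bound. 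One subtle point: convexity of $f$ is used to ensure $\mu(y,x) \geq 0$, so that the infimum is finite and the manipulation is well-defined; indeed, convexity implies the monotonicity of the gradient along the segment, i.e.\ $\abr{\nabla f(x + t(y-x)) - \nabla f(x), t(y-x)} \geq 0$.

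For the second assertion, if $f$ is $\mu$-strongly convex, then for all $u, v \in \R^d$, $\abr{\nabla f(u) - \nabla f(v), u-v} \geq \mu \norm{u-v}_2^2$. Setting $u = x + t(y-x)$ and $v = x$, so $u-v = t(y-x)$, gives
\begin{equation*}
\abr{\nabla f(x + t(y-x)) - \nabla f(x), t(y-x)} \geq \mu t^2 \norm{y-x}_2^2.
\end{equation*}
Dividing both sides by $t^2 \norm{y-x}_2^2 > 0$ (assuming $y \neq x$ and $t > 0$) yields $\frac{\abr{\nabla f(x + t(y-x)) - \nabla f(x), y-x}}{t \norm{y-x}_2^2} \geq \mu$, and taking the infimum over $t \in [0,1]$ gives $\mu(y,x) \geq \mu$.

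I do not expect a real obstacle here: the argument is essentially a mirror image of the path-wise smoothness proof, and the only minor care needed is handling $t = 0$ in the infimum (where the ratio is defined by continuity as $\abr{\nabla^2 f(x)(y-x), y-x}/\norm{y-x}_2^2$ if $f$ is $C^2$, or more generally via the limit) and the degenerate case $y = x$, which is trivial.
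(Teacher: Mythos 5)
Your proof is correct and takes essentially the same approach as the paper: both use the fundamental theorem of calculus to write the gap $f(y) - f(x) - \abr{\nabla f(x), y-x}$ as an integral of $\abr{\nabla f(x+t(y-x)) - \nabla f(x), y-x}$, lower-bound the integrand via the infimum definition of $\mu(y,x)$, and integrate; and for the second claim both rewrite the ratio to match the strong-monotonicity inequality $\abr{\nabla f(u) - \nabla f(v), u - v} \geq \mu\norm{u-v}^2$ with $u = x + t(y-x)$, $v = x$. The paper's version also carries a parenthetical remark that convexity is implicitly used to handle the edge case $\mu(y,x) = 0$, which matches the caveat you raise about the infimum being well-behaved.
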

\begin{proof}
    The fundamental theorem of calculus implies
    \begin{align*}
        f(x) - \abr{\grad(x), y - x}
         & = \int_{0}^1 \abr{\grad(x + t(y - x)) - \grad(x), y - x} dt \\
         & \geq \int_{0}^1 \mu(x, y) t \norm{x - y}_2^2 dt             \\
         & = \frac{\mu(x, y)}{2} \norm{y - x}^2_2.
    \end{align*}
    Note that we have implicitly used convexity to verify the inequality in the
    second line in the case where \( \mu(x, y) = 0 \).
    Now assume that \( f \) is \( \mu \) strongly convex.
    As a standard consequence of strong-convexity, we obtain:
    \begin{align*}
        \frac{\abr{\grad(x + t(y - x)) - \grad(x), y - x} }
        {t\norm{x-y}_2^2}
         & =\frac{\abr{\grad(x + t(y - x)) - \grad(x), x + t (y - x) - x} }
        {t^2 \norm{x-y}_2^2}                                                 \\
         & \geq \mu \frac{\norm{x - t(y - x) - x}_2^2}{t^2 \norm{y - x}_2^2} \\
         & = \mu.
    \end{align*}
\end{proof}

\scSplitAnalysis*
\begin{proof}
    First note that \( \lambda_i < 0 \) for \( i \in \calG \) and \( \lambda_i
    \geq 0 \) for \( i \in \calB \).
    We start from \cref{eq:local-descent-lemma},
    \begin{align*}
        f(\xkk)
         & \leq f(\xk) + \etak\rbr{\frac{\etak M(\xk, \xkk)}{2} - 1}\norm{\grad(\xk)}_2^2                   \\
         & = f(\xk)
        + \mathbbm{1}_{k \in \calG} \cdot \sbr{\frac{\etak\lambda_k}{2} \norm{\grad(\xk)}_2^2 }
        + \mathbbm{1}_{k \in \calB} \cdot \sbr{\frac{\etak\lambda_k}{2} \norm{\grad(\xk)}_2^2}              \\
         & \leq f(\xk) + \mathbbm{1}_{k \in \calG} \cdot \sbr{\etak\lambda_k \mu_k \rbr{f(\xk) - f(\xopt)}}
        + \mathbbm{1}_{k \in \calB} \cdot \sbr{\frac{\etak\lambda_k}{2} \norm{\grad(\xk)}_2^2},
    \end{align*}
    where we used that directional strong convexity gives
    \[ \norm{\grad(\xk)}_2^2 \geq  2\mu_k \rbr{f(\xk) - f(\xopt)}. \]
    Subtracting \( f(\xopt) \) from both sides and then recursively applying the
    inequality gives the result.
\end{proof}

\scIteratesAnalysis*
\begin{proof}
    Let \( \Delta_k = \norm{\xk - \xopt}_2^2 \) and observe \[ \Delta_k =
        \norm{\xk - \xkk + \xkk - \xopt}_2^2 = \Delta_{k+1} + \norm{\xk - \xkk}_2^2 +
        2\abr{\xk - \xkk, \xkk - \xopt}.
    \]
    Using this expansion in \( \Delta_{k+1} - \Delta_k \), we obtain
    \begin{align*}
        \Delta_{k+1} - \Delta_k
         & = -\norm{\xk - \xkk}_2^2
        -2 \abr{\xk - \xkk, \xkk - \xopt}                                                     \\
         & = - \etak^2 \norm{\grad(\xk)}_2^2
        - 2 \etak \abr{\grad(\xk), \xkk - \xopt}                                              \\
         & = - \etak^2 \norm{\grad(\xk)}_2^2
        - 2 \etak \abr{\grad(\xk), \xkk - \xk}
        - 2 \etak \abr{\grad(\xk), \xk - \xopt}.
        \intertext{%
            Now we control the inner-products with directional strong convexity and
            directional smoothness.
        }
         & \leq - \etak^2 \norm{\grad(\xk)}_2^2
        - 2 \etak \abr{\grad(\xk), \xkk - \xk}
        + 2 \etak \sbr{f(\xopt) - f(\xk) - \frac{\mu_k}{2}\Delta_k}                           \\
         & \leq - \etak^2 \norm{\grad(\xk)}_2^2
        + 2 \etak \sbr{f(\xk) - f(\xkk) + \frac{M(\xk, \xkk)\etak^2}{2}\norm{\grad(\xk)}_2^2} \\
         & \qquad + 2 \etak \sbr{f(\xopt) - f(\xk) - \frac{\mu_k}{2}\Delta_k}                 \\
         & = \etak^2\rbr{M(\xk, \xkk) \etak - 1} \norm{\grad(\xk)}_2^2
        + 2 \etak \sbr{f(\xopt) - f(\xkk)} - \mu_k \etak \Delta_k                             \\
         & \leq \etak^2\rbr{M(\xk, \xkk) \etak - 1} \norm{\grad(\xk)}_2^2
        - \etak \mu_{k+1} \Delta_{k+1} - \mu_k \etak \Delta_k,                                \\
    \end{align*}
    where the last inequality follows from \( \mu_{k+1} \) strong convexity
    between \( \xkk \) and \( \xopt \).
    Re-arranging this expression allows us to deduce a rate with
    error terms depending on the local smoothness,
    \begin{align*}
        \implies (1 + \mu_{k+1}\etak)\Delta_{k+1}
         & \leq \rbr{1 - \mu_k \etak} \Delta_k
        + \etak^2\rbr{M(\xk, \xkk) \eta - 1} \norm{\grad(\xk)}_2^2                              \\
         & \leq \abs{1 - \mu_k \etak} \Delta_k
        + \etak^2\rbr{M(\xk, \xkk) \eta - 1} \norm{\grad(\xk)}_2^2                              \\
        \implies \Delta_{k+1}
         & \leq \frac{\abs{1 - \mu_k \etak}}{1 + \mu_{k+1}\etak} \Delta_k
        + \frac{\etak^2\rbr{M(\xk, \xkk) \eta - 1}}{1 + \mu_{k+1}\etak} \norm{\grad(\xk)}_2^2   \\
         & \leq \sbr{\prod_{i=0}^k \frac{\abs{1 - \mu_i \eta_i}}{1 + \mu_{i+1}\eta_i}} \Delta_0 \\
         & \hspace{2em} + \sum_{i=0}^k \sbr{\prod_{j = i+1}^k
            \frac{\abs{1 - \mu_j \eta_j}}{1 + \mu_{j+1}\eta_j}}
        \frac{\eta_i^2 \rbr{M(x_i, x_{i+1}) \eta_i - 1}}{1 + \mu_{i+1}\eta_i} \norm{\grad(\x_i)}_2^2.
    \end{align*}
\end{proof}

\convexAnyStepsize*
\begin{proof}
    Let \( \Delta_k = \norm{\xk - \xopt}_2^2 \) and observe \[ \Delta_k =
        \norm{\xk - \xkk + \xkk - \xopt}_2^2 = \Delta_{k+1} + \norm{\xk - \xkk}_2^2 +
        2\abr{\xk - \xkk, \xkk - \xopt}.
    \]
    Using this expansion in \( \Delta_{k+1} - \Delta_k \), we obtain
    \begin{align*}
        \Delta_{k+1} - \Delta_k
         & = -\norm{\xk - \xkk}_2^2
        -2 \abr{\xk - \xkk, \xkk - \xopt}                                                             \\
         & = - \etak^2 \norm{\grad(\xk)}_2^2
        - 2 \etak \abr{\grad(\xk), \xkk - \xopt}                                                      \\
         & = - \etak^2 \norm{\grad(\xk)}_2^2
        - 2 \etak \abr{\grad(\xk), \xkk - \xk}
        - 2 \etak \abr{\grad(\xk), \xk - \xopt}.
        \intertext{
            Now we use convexity and directional smoothness to control
            the two inner-products as follows:
        }
        \Delta_{k+1} - \Delta_k
         & \leq - \etak^2 \norm{\grad(\xk)}_2^2
        - 2 \etak \rbr{f(\xk) - f(\xopt)}
        - 2 \etak \abr{\grad(\xk), \xkk - \xk}                                                        \\
         & \leq - \etak^2 \norm{\grad(\xk)}_2^2
        - 2 \etak \rbr{f(\xk) - f(\xopt)}
        + 2 \etak (f(\xk) - f(\xkk))                                                                  \\
         & \hspace{3em} + \etak^3 M(\xk, \xkk) \norm{\grad(\xk)}_2^2                                  \\
         & = \etak^2 (\etak M(\xk, \xkk) - 1)\norm{\grad(\xk)}_2^2 - 2\etak \rbr{f(\xkk) - f(\xopt)}.
    \end{align*}
    Re-arranging this equation and summing over iterations implies the following
    sub-optimality bound:
    \begin{align*}
        \sum_{i=0}^k \frac{\eta_{i}}{\sum_{i=0}^k \eta_i} \rbr{f(x_{i+1}) - f(\xopt)}
         & \leq \frac{\Delta_{0} + \sum_{i=0}^{k} \eta_i^2 (\eta_i M(x_i, x_{i+1}) - 1)\norm{\grad(\x_i)}_2^2}{2\sum_{i=0}^k \eta_i}.
        \intertext{Convexity of \( f \) and Jensen's inequality now imply the final result,}
        \implies
        f(\bar x_k) - f(\xopt)
         & \leq \frac{\Delta_{0} + \sum_{i=0}^{k} \eta_i^2 (\eta_i M(x_i, x_{i+1}) - 1)\norm{\grad(\x_i)}_2^2}{2\sum_{i=0}^k \eta_i}.
    \end{align*}
\end{proof}

\subsection{Path-Dependent Acceleration: Proofs}\label{app:acceleration}

This section proves \cref{thm:adapted-acceleration} using estimating sequences.
Throughout this section, we assume \( \mu \geq 0 \) is the global strong convexity
parameter, where \( \mu = 0 \) covers the non-strongly convex case.
We start from the estimating sequences version of \cref{eq:agd}, which is
given as follows:
\begin{equation}\label{eq:estimating-seq-acceleration}
    \begin{aligned}
        \alpha_k^2
         & = \etak (1 - \alpha_k) \gamma_k + \etak \alpha_k \mu                               \\
        \gamma_{k+1}
         & = (1 - \alpha_k) \gamma_k + \alpha_k \mu                                           \\
        \yk
         & = \frac{1}{\gamma_k + \alpha_k \mu} \sbr{\alpha_k \gamma_k \vk + \gamma_{k+1} \xk} \\
        \xkk
         & = \yk - \etak \grad(\yk)                                                           \\
        \vkk
         & = \frac{1}{\gamma_{k+1}}
        \sbr{(1 - \alpha_k) \gamma_k \vk + \alpha_k \mu \yk - \alpha_k \grad(\yk)}.
    \end{aligned}
\end{equation}
The algorithm is initialized with \( \x_0 = v_0 \) and some \( \gamma_0 > 0 \).
Note that \( y_0 = x_0 = v_0 \) since \( \yk \) is a convex combination
of \( \xk \) and \( \vk \).
First we prove that this scheme is equivalent to the one given in \cref{eq:agd}.

\begin{lemma}\label{lemma:agd-equivalence}
    \cref{eq:estimating-seq-acceleration} and \cref{eq:agd} lead to equivalent
    updates for the \( \yk \), \( \xk \), and \( \alpha_k \) sequences.
    Moreover, given initialization \( \gamma_0 > 0 \), the corresponding
    initialization for \( \alpha_0 \) is,
    \begin{equation}\label{eq:eta-init}
        \alpha_0 = \frac{\eta_0}{2} \rbr{(\mu - \gamma_0)
            + \sqrt{(\gamma_0 - \mu)^2 + 4 \gamma_0 / \eta_0}}.
    \end{equation}
\end{lemma}
\begin{proof}
    The proof follows \citet[Theorem 2.2.3]{nesterov2018lectures}.
    Expanding the definition of \( \vkk \), we obtain
    \begin{align*}
        \vkk
         & = \frac{1}{\gamma_{k+1}}\sbr{\frac{(1 - \alpha_k)}{\alpha_k}
        \sbr{(\gamma_k + \alpha_k \mu) y_k - \gamma_{k+1} x_k} + \alpha_k \mu y_k - \alpha_k \grad(\yk)} \\
         & = \frac{1}{\gamma_{k+1}}\sbr{\frac{(1 - \alpha_k)\gamma_k}{\alpha_k} \yk + \mu \yk}
        - \frac{(1 - \alpha_k)}{\alpha_k} \xk - \frac{\alpha_k}{\gamma_{k+1}} \grad(\yk)                 \\
         & = \xk - \frac{\etak}{\alpha_k} \grad(\yk) + \frac{1}{\alpha_k}(\yk - \xk)                     \\
         & = \xk + \frac{1}{\alpha_k}\rbr{\xkk - \xk}.
    \end{align*}
    Plugging this back into the expression for \( \ykk \),
    \begin{align*}
        \ykk
         & = \frac{1}{\gamma_{k+1} + \alpha_{k+1} \mu}
        \sbr{\alpha_{k+1} \gamma_{k+1} \vkk + \gamma_{k+2} \xkk}                       \\
         & = \frac{1}{\gamma_{k+1} + \alpha_{k+1} \mu} \sbr{\alpha_{k+1} \gamma_{k+1}
        (\xk + \frac{1}{\alpha_k}\rbr{\xkk - \xk}) + \gamma_{k+2} \xkk}                \\
         & =  \frac{\alpha_{k+1}\gamma_{k_1} + \alpha_k(1 - \alpha_{k+1}) \gamma_{k+1}
            + \alpha_k \alpha_{k+1} \mu}{\alpha_k(\gamma_{k+1} + \alpha_{k+1} \mu)} \xkk
        - \frac{\alpha_{k+1} \gamma_{k+1}(1 - \alpha_k)}
        {\alpha_k(\gamma_{k+1} + \alpha_{k+1} \mu)} \xk                                \\
         & = \xkk +
        \frac{\alpha_{k+1} \gamma_{k+1}(1 - \alpha_k)}
        {\alpha_k(\gamma_{k+1} + \alpha_{k+1} \mu)} (\xkk - \xk)                       \\
         & = \xkk + \frac{\alpha_{k+1} \gamma_{k+1}(1 - \alpha_k)}
        {\alpha_k \rbr{\gamma_{k+1} + \alpha_{k+1}^2 / \etak - (1 - \alpha_{k+1})\gamma_{k+1}}}
        (\xkk - \xk)                                                                   \\
         & = \xkk + \frac{\alpha_k(1 - \alpha_k)}
        {\rbr{\alpha_{k+1} + \alpha_k^2}}(\xkk - \xk).
    \end{align*}
    Note that this update is consistent with \cref{eq:agd}.
    Since \( \gamma_k = \alpha_k^2 / \etak \), we can write,
    \begin{align*}
        \alpha_{k+1}^{2}
         & = \etakk (1 - \alpha_{k+1}) \gamma_{k} + \etakk \alpha_{k+1} \mu                \\
         & = \frac{\etakk}{\etak} (1 - \alpha_{k+1}) \alpha_k^2 + \etakk \alpha_{k+1} \mu,
    \end{align*}
    which is also consistent with \cref{eq:agd}.
    Finally, the initialization for \( \alpha_0 \) is determined by
    \( \gamma_0 \) in \cref{eq:estimating-seq-acceleration} as,
    \[
        \alpha_0^2 = \eta_0 (1 - \alpha_0) \gamma_0 + \eta_0 \alpha_0 \mu.
    \]
    The quadratic formula now implies,
    \[
        \alpha_0 = \frac{\eta_0}{2} \rbr{(\mu - \gamma_0) + \sqrt{(\gamma_0 - \mu)^2 + 4 \gamma_0 / \eta_0}}.
    \]
    This completes the proof.
\end{proof}

As mentioned, our proof uses the concept of estimating sequences.

\begin{definition}
    Two sequences \( \lambda_k \), \( \phi_k \) are estimating sequences for \( f \)
    if \( \lambda_l \geq 0 \) for all \( k \in \bbN \), \( \lim_{k\rightarrow \infty} \lambda_k = 0 \),
    and,
    \begin{equation}\label{eq:est-upper-bound}
        \phi_k(x) \leq (1 - \lambda_k) f(x) + \lambda_k \phi_0(x),
    \end{equation}
    for all \( x \in \R^d \).
\end{definition}

We use the same estimating sequences as developed by \citet{nesterov2018lectures}.
Let \( \lambda_0 = 1 \), \( \phi_0(x) = f(x_0) + \frac{\gamma_0}{2} \norm{x - x_0}_2^2 \),
and define the updates,
\begin{equation}\label{eq:estimating-sequences}
    \begin{aligned}
        \lambda_{k+1}
         & = (1 - \alpha_k) \lambda_k \\
        \phi_{k+1}(x)
         & = (1 - \alpha_k) \phi_k(x)
        + \alpha_k \rbr{f(\yk) + \abr{\grad(\yk), x - \yk} + \frac{\mu}{2}\norm{x - \yk}_2^2},
    \end{aligned}
\end{equation}
where \( \mu \geq 0 \) is the strong convexity parameter, with \( \mu = 0 \)
when \( f \) is merely convex.
It is straightforward to differentiate \( \phi_{k+1} \) to see that
\( v_{k+1} \) of \cref{eq:estimating-seq-acceleration} is the minimizer.
Indeed, \citet[Lemma 2.2.3]{nesterov2018lectures} shows that this choice for the
estimating sequences obeys the following canonical form:
\begin{equation}
    \phi_{k+1}(x) = \min_{z} \phi_{k+1}(z) + \frac{\gamma_{k+1}}{2}\norm{x - \vkk}_2^2,
\end{equation}
where \( \gamma_{k+1} \) and \( \vkk \) are given by~\cref{eq:estimating-seq-acceleration}
and the minimum value is,
\begin{equation}\label{eq:canonical-minimizer}
    \begin{aligned}
        \min_{z} \phi_{k+1}(z)
         & = (1 - \alpha_k) \min_{z} \phi_k(z)
        + \alpha_k f(\yk) - \frac{\alpha_k^2}{2 \gamma_{k+1}} \norm{\grad(\yk)}_2^2 \\
         & \hspace{2em} + \frac{\alpha_k (1 - \alpha_k) \gamma_k}{\gamma_{k+1}}
        \rbr{\frac{\mu}{2} \norm{\yk - \vk}_2^2 + \abr{\grad(\yk), \vk - \yk}}.
    \end{aligned}
\end{equation}

Before we can prove our main theorem, we must show that these choices
for \( \lambda_k \) and \( \phi_k \) yield a valid estimating sequence.
The following proofs build on \citep{nesterov2018lectures}
and \citep{mishkin2024faster}.

\begin{lemma}\label{lemma:lambda-convergence}
    Assume \( \alpha_k \in (0, 1] \) for all \( k \in \bbN \).
    If \( \mu > 0 \) and \( \gamma_0 = \mu \), then
    \begin{equation}
        \label{eq:lambda-sc}
        \lambda_k = \prod_{i=0}^{k-1}(1 - \sqrt{\etak \mu}).
    \end{equation}
    If \( \mu \geq 0 \) and \( \gamma_0 \in (\mu, \mu + 3 / \etamin) \), then,
    \begin{equation}
        \label{eq:lambda-convex}
        \lambda_k \leq \frac{4}{\etamin (\gamma_0 - \mu)(k+1)^2}.
    \end{equation}
\end{lemma}
\begin{proof}
    Assume \( \gamma_0 = \mu > 0 \).
    Then \( \gamma_k = \mu \) for all \( k \) and,
    \begin{align*}
        \alpha_{k}^2
         & = (1 - \alpha_{k}) \etak \mu + \alpha_k \etak \mu \\
         & = \etak \mu.
    \end{align*}
    As a consequence,
    \[
        \lambda_k = \prod_{i=0}^{k-1} (1 - \sqrt{\etak \mu}),
    \]
    as claimed.

    Now assume \( \gamma_0 \in \rbr{\mu, 3 L + \mu} \).
    Modifying the proof by \citet[Lemma 2.2.4]{nesterov2018lectures}, we
    compute as follows:
    \begin{align*}
        \gamma_{k+1} - \mu
         & = (1 - \alpha_k) \gamma_k + (\alpha_k - 1) \mu
        = (1 - \alpha_k) (\gamma_k - \mu)                  \\
        \intertext{
            Recursing on this equality implies
        }
        \gamma_{k+1}
         & = (\gamma_0 - \mu) \prod_{i=0}^k (1 - \alpha_k)
        = \lambda_{k+1} (\gamma_0 - \mu).
    \end{align*}
    If \( \alpha_k = 1 \) or \( \lambda_k = 0 \), then
    using \( \lambda_{k+1} = (1 - \alpha_k) \lambda_k \)
    implies \( \lambda_{k+1} = 0 \) and the result trivially holds.
    Otherwise, recall \( \alpha_k^2 / \gamma_{k+1} = \etak \) to obtain,
    \begin{align*}
        1 - \frac{\lambda_{k+1}}{\lambda_k}
         & = \alpha_k
        = \rbr{\gamma_{k+1} \etak}^{1/2}                                     \\
         & = \rbr{\etak \mu + \etak \lambda_{k+1} (\gamma_0 - \mu)}^{1/2}    \\
        \implies
        \frac{1}{\lambda_{k+1}} - \frac{1}{\lambda_k}
         & = \frac{1}{\lambda_{k+1}^{1/2}}
        \sbr{\frac{\etak \mu}{\lambda_{k+1}} + \etak (\gamma_0 - \mu)}^{1/2} \\
         & \geq \frac{1}{\lambda_{k+1}^{1/2}}
        \sbr{\frac{\etamin \mu}{\lambda_{k+1}} + \etamin (\gamma_0 - \mu)}^{1/2}.
    \end{align*}
    Finally, this implies
    \begin{align*}
        \frac{2}{\lambda_{k+1}^{1/2}}
        \rbr{\frac{1}{\lambda_{k+1}^{1/2}} - \frac{1}{\lambda_k^{1/2}}}
         & \geq \rbr{\frac{1}{\lambda_{k+1}^{1/2}} - \frac{1}{\lambda_k^{1/2}}}
        \rbr{\frac{1}{\lambda_{k+1}^{1/2}} + \frac{1}{\lambda_k^{1/2}}}         \\
         & \geq
        \frac{1}{\lambda_{k+1}^{1/2}}\sbr{\frac{\etamin \mu}{\lambda_{k+1}} + \etamin (\gamma_0 - \mu)}^{1/2}.
    \end{align*}
    Moreover, this bound holds uniformly for all \( k \in \bbN \).
    We have now exactly reached Eq. 2.2.11 of
    \citet[Lemma 2.2.4]{nesterov2018lectures} with \( L \) replaced by \( 1 / \etamin \).
    Applying that Lemma with this modification, we obtain
    \[
        \lambda_k \leq
        \frac{4}{\etamin (\gamma_0 - \mu)(k+1)^2},
    \]
    which completes the proof.
\end{proof}

\begin{lemma}\label{lemma:valid-sequences}
    If \( f \) is strongly convex with parameter \( \mu \geq 0 \) and
    \( \etak \leq 1 / \mu \) for all \( k \in \bbN \),
    then \( \lambda_k \) and \( \phi_k \) are estimating sequences.
\end{lemma}
\begin{proof}
    Using the quadratic formula, we find
    \[
        \alpha_k
        = \frac{\mu - \gamma_k \pm
            \sqrt{(\mu - \gamma_k)^2 + 4 \hat \gamma_k / \etak}}{2 / \etak}.
    \]
    Thus,
    \begin{align*}
        (\mu - \gamma_k) + \rbr{(\mu - \gamma_k)^2 + 4 \hat \gamma_k / \etak}^{1/2}
         & > 0.
        \intertext{is sufficient for \( \alpha_k > 0 \).
            This holds if \( \mu \geq \gamma_k \).
            Otherwise, we require,
        }
        (\mu - \gamma_k)^2 + 4 \hat \gamma_k / \etak
         & > (\mu - \gamma_k)^2,
    \end{align*}
    which holds if and only if \( \etak, \gamma_k > 0 \).
    On the other hand, we also need \( \alpha_k \leq 1 \),
    which is satisfied when,
    \begin{align*}
        4 + 4\etak (\gamma_k - \mu) + \etak^2 (\mu - \gamma_k)^2
         & \leq \etak^2 (\mu - \gamma_k)^2 + 4 \etak \gamma_k
        \iff
        \etak \leq \frac{1}{\mu},
    \end{align*}
    as claimed.

    Recall \( \lambda_0 = 1 \) and \( \lambda_{k+1} = (1 - \alpha_k) \lambda_k \).
    Since \( \alpha_{k} \in (0, 1] \), \( \lambda_k \geq 0 \) holds by induction.
    It remains to show that \( \lambda_k \) tends to zero, which holds
    by \cref{lemma:lambda-convergence} since we have shown
    \( \alpha_k \in (0, 1] \) for all \( k \).

    Now we establish the last piece,
    \[
        \phi_k(x) \leq (1 - \lambda_k) f(x) + \lambda_k \phi_0(x).
    \]
    But this follows immediately by \citet[Lemma 2.2.2]{nesterov2018lectures}.
\end{proof}

Now we can prove the last major lemma before our convergence result.

\begin{lemma}\label{lemma:local-upper-bound}
    Suppose \( f \) is strongly convex with
    parameter \( \mu \geq 0 \) and \( \etak \) is a sequence
    of adapted step-sizes, meaning \( \etak \leq 1 / M(\xk, \xkk) \).
    Then for every \( k \in \bbN \),
    \begin{equation*}
        \min_z \phi_k(z) \geq f(\xk).
    \end{equation*}
\end{lemma}
\begin{proof}
    We use an inductive proof again.
    The inductive assumption is
    \[
        \min_{z} \phi_k(z) \geq f(\xk),
    \]
    It is easy to see this holds at \( k = 0 \) since,
    \[
        \phi_0(x) = f(x_0) + \frac{\gamma_0}{2}\norm{x - v_0}_2^2,
    \]
    implies \( \min_z \phi_0(z) = f(x_0) \).
    Using \cref{eq:canonical-minimizer}, we obtain
    \begin{align*}
        \min_z \phi_{k+1}(z)
         & = (1- \alpha_k) \min_z \phi_k(z) + \alpha_k f(\yk)
        - \frac{\alpha_k^2}{2\gamma_{k+1}}\norm{\grad(\yk)}^2                      \\
         & \hspace{5em} + \frac{\alpha_k(1-\alpha_k)\gamma_k}{\gamma_{k+1}}
        \rbr{\frac{\mu}{2} \norm{\yk - \vk}^2 + \abr{\grad(\yk, \zk), \vk - \yk}}  \\
         & \geq (1- \alpha_k) f(\xk) + \alpha_k f(\yk)
        - \frac{\alpha_k^2}{2\gamma_{k+1}}\norm{\grad(\yk)}^2                      \\
         & \hspace{5em} + \frac{\alpha_k(1-\alpha_k)\gamma_k}{\gamma_{k+1}}
        \rbr{\frac{\mu}{2} \norm{\yk - \vk}^2 + \abr{\grad(\yk), \vk - \yk}},
        \intertext{
            where the inequality holds by the inductive assumption.
            Using convexity of \( f \) and
            recalling \( \frac{\alpha_k^2}{\gamma_{k+1}} = \etak \) from
            the definition of the update (\cref{eq:estimating-seq-acceleration}), }
        \min_{z} \phi_{k+1}(z)
         & \geq (1- \alpha_k)\rbr{f(\yk) + \abr{\grad(\yk), \xk - \yk}} + \alpha_k
        f(\yk) - \frac{\etak}{2}\norm{\grad(\yk)}^2                                \\
         & \hspace{5em} +
        \frac{\alpha_k(1-\alpha_k)\gamma_k}{\gamma_{k+1}} \rbr{\frac{\mu}{2}
        \norm{\yk - \vk}^2 + \abr{\grad(\yk), \vk - \yk}}                          \\
         & = f(\yk) +
        (1-\alpha_k) \abr{\grad(\yk), \xk - \yk} -
        \frac{\etak}{2}\norm{\grad(\yk)}^2                                         \\
         & \hspace{5em} +
        \frac{\alpha_k(1-\alpha_k)\gamma_k}{\gamma_{k+1}} \rbr{\frac{\mu}{2}
        \norm{\yk - \vk}^2 + \abr{\grad(\yk), \vk - \yk}}                          \\
        \intertext{Using the fact that the step-sizes are adapted
            and invoking the directional descent lemma (i.e. \cref{eq:agd-descent-condition})
            now implies }
        \min_{z} \phi_{k+1}(z)
         & \geq
        f(\xkk) + (1-\alpha_k) \bigg(\abr{\grad(\yk), \xk - \yk}                   \\
         &
        \hspace{5em} + \frac{\alpha_k\gamma_k}{\gamma_{k+1}}\rbr{\frac{\mu}{2}
            \norm{\yk - \vk}^2 + \abr{\grad(\yk), \vk - \yk}}\bigg).
        \\
        \intertext{
            The remainder of the proof is largely unchanged from the analysis
            in \citet{nesterov2018lectures}.
            The definition of \( \yk \) gives \( \xk - \yk = \frac{\alpha_k
                \gamma_k}{\gamma_{k+1}} (\yk - \vk) \), which we use to obtain }
        \min_{z} \phi_{k+1}(z)
         & \geq f(\xkk) + (1-\alpha_k)
        \bigg(\frac{\alpha_k \gamma_k}{\gamma_{k+1}} \abr{\grad(\yk), \yk -
        \vk}                                                                       \\
         & \hspace{5em} +
        \frac{\alpha_k\gamma_k}{\gamma_{k+1}}\Big(\frac{\mu}{2} \norm{\yk - \vk}^2 +
        \abr{\grad(\yk), \vk - \yk}\Big)\bigg)                                     \\
         & = f(\xkk) + \frac{\mu
        \alpha_k(1-\alpha_k)\gamma_k}{2 \gamma_{k+1}}\norm{\yk - \vk}^2            \\
         & \geq
        f(\xkk),
    \end{align*}
    since \( \frac{\mu \alpha_k(1-\alpha_k)\gamma_k}{2 \gamma_{k+1}} \geq 0 \).
    We conclude the desired result by induction.
\end{proof}

The main accelerated result now follows almost immediately.

\adaptedAcceleration*
\begin{proof}
    We analyze the equivalent formulation given in \cref{eq:estimating-seq-acceleration}.
    See \cref{lemma:agd-equivalence} for a formal proof that these two
    schemes produce the same \( \xk \), \( \yk \), and \( \alpha_k \) iterates.
    Note that our proof follows \citet{nesterov2018lectures} and \citet{mishkin2024faster}
    closely; while their results are very similar, we are not aware of
    pre-existing works which adapt them to our specific setting.

    First, observe that \( M(\xk, \xkk) \geq \mu \) for all \( k \in \bbN \).
    Since the step-sizes \( \etak \) are assumed to satisfy
    \( \etak \leq 1 / M(\xk, \xkk) \), we also have that
    \( \etak \leq 1 / \mu \) for every \( k \).

    Thus, \cref{lemma:lambda-convergence} and \cref{lemma:valid-sequences}
    apply.
    Using the definition of an estimating sequence and \cref{lemma:local-upper-bound},
    we obtain,
    \begin{align*}
        f(\xk)
         & \leq \min_{z} \phi_k(z)                                                          \\
         & \leq \min_{z} (1 - \lambda_k) f(z) + \lambda_k \phi_0(z)                         \\
         & \leq (1 - \lambda_k) f(\xopt) + \lambda_k \phi_0(\xopt).
        \intertext{
            Re-arranging this equation and expanding the definition \( \phi_0 \) (\cref{eq:estimating-sequences}), we deduce the following:}
        f(\xk) - f(\xopt)
         & \leq \lambda_k \rbr{\phi_0(\xopt) - f(\xopt)}                                    \\
         & = \lambda_k \rbr{f(x_0) - f(\xopt) + \frac{\gamma_0}{2} \norm{x_0 - \xopt}_2^2}.
    \end{align*}
    We see that the rate of convergence of AGD is entirely controlled by the
    convergence of the sequence \( \lambda_k \).
    If \( \mu > 0 \) and \( \gamma_0 = \mu \), then \cref{lemma:lambda-convergence}
    implies
    \[
        f(\xk) - f(\xopt) \leq \prod_{i=0}^{k-1} \rbr{1 - \sqrt{\mu \eta_i}}
        \sbr{f(\x_0) - f(\xopt) + \frac{\mu}{2}\norm{\x_0 - \xopt}_2^2}.
    \]
    By \cref{lemma:agd-equivalence}, this initialization is equivalent to
    choosing \( \alpha_0 = \sqrt{\eta_0 \mu} \), which is the setting
    claimed in the theorem.

    Alternatively, if \( \mu \geq 0 \) and
    \( \gamma_0 \in (\mu, \mu + 3 / \etamin) \),
    then,
    \[
        f(\xk) - f(\xopt) \leq \frac{4}{\etamin (\gamma_0 - \mu) k^2}
        \sbr{f(\x_0) - f(\xopt) + \frac{\gamma_0}{2}\norm{\x_0 - \xopt}_2^2},
    \]
    where the equality,
    \[
        \gamma_0 = \frac{\alpha_0^2 - \eta_0 \alpha_0 \mu}{\eta_0 (1 - \alpha_0)},
    \]
    holds by \cref{lemma:agd-equivalence}.
    Since \( \alpha_0 \leq 1 \) for \( \eta_0 \leq 1 / \mu \), 
    \( \eta_0 \) is an increasing function of \( \gamma_0 \).
    Thus, an upper-bound \( c \) on \( \alpha_0 \) can be deduced from that on \( \gamma_0 \)
    using the quadratic formula:
    \begin{align*}
        c =
         & - \frac{3\eta_0}{2 \etamin}
        + \frac{\eta_0}{2}\rbr{\frac{9}{(\etamin)^2} + 4\frac{3 \etamin + \mu}{\eta_0}}^{1/2} \\
         & = \frac{3\eta_0}{2 \etamin}
        \sbr{\rbr{1 + 4(\etamin)^2 \frac{3 \etamin + \mu}{9 \eta_0}}^{1/2} - 1}.
    \end{align*}
\end{proof}

\section{Proofs for Section~\ref{sec:quadratic-case}}%
\label{app:quadratic-proofs}


\begin{lemma}\label{lemma:quadratic-pointwise-direction-smoothness}
    Let \( B \) be a positive semi-definite matrix and suppose
    that
    \[
        f(x) = \frac{1}{2} x^{\top} B x - c^{\top} x.
    \]
    Let $x_{i+1} = x_i - \eta \nabla f(x_i)$. Then for any $\eta > 0$, the pointwise directional smoothness between the gradient descent iterates $x_i, x_{i+1}$  is given by
    \[
        \frac{1}{2} D(x_i, x_{i+1})
        = \frac{\norm{B \nabla f(x_i)}_2}{\norm{\nabla f(x_i)}_2}.
    \]
\end{lemma}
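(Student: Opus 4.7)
The plan is to compute $D(x_{i+1}, x_i)$ directly from its definition by exploiting the affine structure of $\nabla f$ for a quadratic. Recall that
$$\frac{1}{2}D(x_{i+1}, x_i) = \frac{\norm{\nabla f(x_{i+1}) - \nabla f(x_i)}}{\norm{x_{i+1} - x_i}},$$
so the statement reduces to showing this ratio equals $\norm{B\nabla f(x_i)}/\norm{\nabla f(x_i)}$.

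First I would note that, since $f$ is quadratic, $\nabla f(x) = Bx - c$, and therefore the gradient difference is a linear function of the iterate difference:
$$\nabla f(x_{i+1}) - \nabla f(x_i) = B(x_{i+1} - x_i).$$
Next I would substitute the gradient descent update $x_{i+1} - x_i = -\eta \nabla f(x_i)$ into both sides of the ratio. The numerator becomes $\norm{-\eta B\nabla f(x_i)} = \eta \norm{B\nabla f(x_i)}$ and the denominator becomes $\eta \norm{\nabla f(x_i)}$. The step-size $\eta$ cancels, leaving exactly the claimed expression. In particular, the answer is independent of $\eta$, which is what makes the strongly adapted step-size $\eta_i = 1/D(x_{i+1}, x_i)$ explicitly computable in the quadratic case.

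There is no substantive obstacle here; the entire content of the lemma is the observation that for a quadratic, the chord-based smoothness quotient collapses to a Rayleigh-like quotient of $B$ evaluated at $\nabla f(x_i)$, because linearity of $\nabla f$ pulls the update direction through $B$ and the step-size cancels from numerator and denominator. The only thing worth flagging is that one implicitly assumes $\nabla f(x_i) \neq 0$ so that $x_{i+1} \neq x_i$ and the quotient is defined; otherwise $x_i$ is already a minimizer and the claim is vacuous.
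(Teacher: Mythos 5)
Your proof is correct and follows essentially the same route as the paper's: write $\nabla f(x) = Bx - c$, use linearity to get $\nabla f(x_{i+1}) - \nabla f(x_i) = B(x_{i+1}-x_i)$, substitute the update $x_{i+1}-x_i = -\eta\nabla f(x_i)$, and cancel $\eta$. Your observation that the statement is vacuous (and the quotient undefined) when $\nabla f(x_i)=0$ is a reasonable caveat that the paper leaves implicit.
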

\begin{proof}
    We have by straightforward algebra,
    \begin{align*}
        \frac{1}{2} D(x_i, x_{i+1})
         & = \frac{\norm{\nabla f(x_{i+1}) - \nabla f(x_i)}_2}{\norm{x_{i+1} - x_i}_2}                       \\
         & = \frac{\norm{\left[ B x_{i+1} - c \right] - \left[ B x_i - c \right]}_2}{\norm{x_{i+1} - x_i}_2} \\
         & = \frac{\norm{B \left[ x_{i+1} - x_i \right]}_2}{\norm{x_{i+1} - x_i}_2}                          \\
         & = \frac{\norm{B \left[ - \eta \nabla f(x_i) \right] }_2}{\norm{-\eta \nabla f(x_i)}_2}            \\
         & = \frac{\norm{B \nabla f(x_i)}_2}{\norm{\nabla f(x_i)}_2}.
    \end{align*}
\end{proof}

\begin{lemma}\label{lemma:quadratic-pathwise-direction-smoothness}
    Let \( B \) be a positive semi-definite matrix and suppose
    that
    \[
        f(x) = \frac{1}{2} x^{\top} B x - c^{\top} x.
    \]
    Let $x_{i+1} = x_i - \eta \nabla f(x_i)$. Then for any $\eta > 0$, the path-wise directional smoothness between the gradient descent iterates $x_i, x_{i+1}$  is given by
    by
    \[
        A(x_i, x_{i+1})
        = \frac{{\nabla f(x_i)^{\top} B \nabla f(x_i)}}{\nabla f(x_i)^{\top} \nabla f(x_i)}.
    \]
\end{lemma}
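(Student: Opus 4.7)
The plan is to compute $A(x_{i+1}, x_i)$ directly from its definition in \cref{eq:path-smoothness} by exploiting the fact that $\nabla f$ is affine for a quadratic $f$. Since $\nabla f(x) = Bx - c$, for any $x, y$ and $t \in [0,1]$ we have
\begin{equation*}
  \nabla f(x + t(y-x)) - \nabla f(x) = t\, B(y - x).
\end{equation*}
Substituting into the expression inside the supremum in \cref{eq:path-smoothness} cancels the factor of $t$, giving
\begin{equation*}
  \frac{\langle \nabla f(x + t(y-x)) - \nabla f(x),\, y-x\rangle}{t\norm{y-x}^2}
  = \frac{(y-x)^\top B (y-x)}{\norm{y-x}^2},
\end{equation*}
which is independent of $t$. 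Hence the supremum is trivially attained and $A(y, x) = (y-x)^\top B (y-x) / \norm{y-x}^2$ for any $x, y \in \mathbb{R}^d$.

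Next, I would specialize $y = x_{i+1}$ and $x = x_i$ using the gradient-descent update $x_{i+1} - x_i = -\eta\, \nabla f(x_i)$. The $\eta^2$ factors in numerator and denominator cancel, yielding
\begin{equation*}
  A(x_{i+1}, x_i) = \frac{\nabla f(x_i)^\top B \,\nabla f(x_i)}{\nabla f(x_i)^\top \nabla f(x_i)},
\end{equation*}
which is the claimed Rayleigh quotient. This also confirms the Cauchy-step interpretation mentioned in the main text, since the strongly adapted step-size $\eta_i = 1/A(x_{i+1}, x_i)$ reduces to the classical expression $\nabla f(x_i)^\top \nabla f(x_i) / [\nabla f(x_i)^\top B\, \nabla f(x_i)]$.

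There is essentially no obstacle here: the only observation required is that affine gradients make the integrand in the path-wise smoothness constant in $t$, so the supremum collapses to a single Rayleigh quotient. The calculation is direct and parallel in structure to the proof of \Cref{lemma:quadratic-pointwise-direction-smoothness}, but uses the inner product $\langle B(y-x), y-x\rangle$ instead of the operator-norm estimate $\norm{B(y-x)}$; this is exactly the source of the tightening $A(x_{i+1}, x_i) \leq D(x_{i+1}, x_i)/2$ in the quadratic case (with equality when $\nabla f(x_i)$ is an eigenvector of $B$).
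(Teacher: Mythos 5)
Your proof is correct and matches the paper's argument step for step: exploit the affine gradient to show the quotient $\langle \nabla f(x+t(y-x)) - \nabla f(x), y-x\rangle/(t\|y-x\|^2)$ is constant in $t$, so the supremum collapses to the Rayleigh quotient $(y-x)^\top B (y-x)/\|y-x\|^2$, and then substitute the GD update to cancel the $\eta^2$ factors. The closing remarks on the Cauchy step-size and the Cauchy--Schwarz comparison $A \le D/2$ (with equality at eigenvectors) are accurate and a nice addition, though not part of the paper's proof.
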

\begin{proof}
    Let $A_t (x, y) = \frac{\ev{ \nabla f(x+t(y-x)) - \nabla f(x) , y-x }}{t \sqn{x-y}_2}$. We
    have
    \begin{align*}
        A_t(x, y)
         & = \frac{\ev{ \nabla f(x+t(y-x)) - \nabla f(x) , y-x }}{t \sqn{x-y}_2}       \\
         & = \frac{\ev{ (B(x+t(y-x)))-c - \left[ Bx -c \right] , y-x }}{t \sqn{x-y}_2} \\
         & = \frac{\ev{ t \cdot B(y-x) , y-x }}{t \sqn{x-y}_2}                         \\
         & = \frac{(y-x)^{\top} B (y-x)}{\sqn{x-y}_2}.
    \end{align*}
    The path-wise directional smoothness $A$ is therefore
    \begin{align*}
        A (x, y)
         & = \sup_{t \in [0, 1]} A_t (x, y)                             \\
         & = \sup_{t \in [0, 1]} \frac{(y-x)^{\top} B (y-x)}{\sqn{x-y}_2} \\
         & = \frac{(y-x)^{\top} B (y-x)}{\sqn{x-y}_2}.
    \end{align*}
    Plugging in $y = x - \eta \nabla f(x) = x - \eta [Bx-c]$ in the above gives
    \begin{align*}
        A (x, x-\eta\nabla f(x))
         & = \frac{\left( -\eta \left[ Bx-c \right] \right) B (- \eta) \left[ Bx-c \right]}{\sqn{\eta [Bx-c]}_2} \\
         & = \frac{(Bx-c)^{\top} B (Bx-c)}{\sqn{Bx-c}_2}                                                         \\
         & = \frac{(Bx-c)^{\top} B (Bx-c)}{\sqn{Bx-c}_2}                                                         \\
         & = \frac{\nabla f(x)^{\top} B \nabla f(x)}{\nabla f(x)^{\top} \nabla f(x)}.
    \end{align*}
\end{proof}

\section{Proofs for Section~\ref{sec:adaptivity-general-case}}%
\label{app:general-case-proofs}

\propExistenceD*
\begin{proof}
    Let \( \calI = \cbr{ \eta : \grad(x - \eta \grad(x)) = \grad(x) }\).
    For every \( \eta \in \calI \), it holds that
    \[
        -\abr{\grad(x - \eta \grad(x)), \grad(x)} = -\norm{\grad(x)}_2^2.
    \]
    However, since \( f \) is convex, the directional derivative
    \[
        -\abr{\grad(x - \eta' \grad(x)), \grad(x)},
    \]
    is monotone non-decreasing in \( \eta' \).
    We deduce that \( \calI \) must be an interval of form \( [0, \bar \eta] \).
    If \( \bar \eta \) is not bounded, then \( f \) is linear along
    \( -\grad(x) \) and is minimized by taking \( \eta \rightarrow \infty \).
    Therefore, we may assume \( \bar \eta \) is finite.

    Let \( \eta > \bar \eta \). Then we have the following:
    \begin{align*}
        x - \eta \grad(x)
         & = x
        - \frac{2\norm{x - \eta \grad(x) - x}_2}{\norm{\grad(x - \eta \grad(x)) - \grad(x)}_2}
        \grad(x) \\
        \iff \grad(x)
         & =
        \frac{2\norm{\grad(x)}_2}{\norm{\grad(x - \eta \grad(x)) - \grad(x)}_2}
        \grad(x),
    \end{align*}
    from which we deduce
    \[
        \norm{\grad(x - \eta \grad(x)) - \grad(x)}_2 = 2\norm{\grad(x)}_2,
    \]
    is sufficient for the implicit equation to hold.
    Squaring both sides and multiplying by \( 1/2 \), we obtain the following
    alternative root-finding problem:
    \begin{equation}\label{eq:zero-finding}
        h(\eta) := \half \norm{\grad(x - \eta \grad(x))}_2^2
        - \abr{\grad(x - \eta \grad(x)), \grad(x)} - \half \norm{\grad(x)}_2^2 = 0.
    \end{equation}
    Because \( f \) is \( C^1 \), \( h \) is a continuous function and
    it suffices to show that there exists an interval in which \( h \)
    crosses \( 0 \).
    From the display above, we see
    \[
        h(\bar \eta) = - \norm{\grad(x)}_2^2 < 0.
    \]
    Continuity now implies \( \exists \eta' > \bar \eta \) such that
    \( h(\eta') < 0 \).
    Now, suppose \( h(\eta) \leq 0 \) for all \( \eta \geq \eta' \).
    Working backwards, we see that this can only occur when
    \[
        \eta
        \leq
        \frac{2\norm{x - \eta \grad(x) - x}_2}{\norm{\grad(x - \eta \grad(x)) - \grad(x)}_2}
        = \frac{1}{D(x(\eta), x - \eta \grad(x))}
    \]
    for all \( \eta \geq \eta' \).
    The directional descent lemma (\cref{eq:local-descent-lemma}) now
    implies
    \[
        f(x - \eta \grad(x)) \leq
        f(x) - \eta\rbr{1 - \frac{\eta D(x, x - \eta \grad(x))}{2}}
        \norm{\grad(x)}_2^2
        \leq f(x) - \frac{\eta}{2}\norm{\grad(\x)}_2^2,
    \]
    Taking limits on both sides as \( \eta \rightarrow \infty \)
    implies \( f(x - \eta \grad(\x)) \) is minimized along the
    ray \( x(\eta) = x -  \eta \grad(x) \).
    Thus, we deduce that either there exists \( \eta'' > \eta' \) such that \(
    h(\eta'') > 0 \) exists, or \( f \) is minimized along the gradient
    direction as claimed.
\end{proof}

\propExistenceA*
\begin{proof}
    Let
    \[
        \calJ = \cbr{ \eta : \abr{\grad(x - \eta \grad(x)), \grad(x)}
            = \norm{\grad(x)}_2^2 }.
    \]
    Since \( f \) is convex, the directional derivative
    \[
        -\abr{\grad(x - \eta' \grad(x)), \grad(x)},
    \]
    is monotone non-decreasing in \( \eta' \).
    We deduce that \( \calJ \) must be an interval of form \( [0, \bar \eta] \).
    If \( \bar \eta \) is not bounded, then convexity implies
    \begin{align*}
        \lim_{\eta \rightarrow \infty} f(x - \eta \grad(x))
         & \leq
        \lim_{\eta \rightarrow \infty}
        f(x) - \eta \abr{\grad(x - \eta \grad(x)), \grad(x)} \\
         & = -\infty,
    \end{align*}
    meaning \( f \)	 is minimized along \( - \grad(x) \).
    Therefore, we may assume \( \bar \eta \) is finite.

    We have
    \begin{align*}
        x - \eta \grad(x)
         & = x - \frac{1}{A(x, x-\eta \grad(x))} \grad(x) \\
        \iff \eta
         & = \inf_{t \in [0, 1]}
        \frac{t \eta \norm{\grad(x)}_2^2}
        {\abr{\grad(x) - \grad(x - t \eta \grad(x)), \grad(x)}}.
    \end{align*}
    Thus, for \( \eta > \bar \eta \), the equation we must solve reduces to
    \[
        h(\eta) := \eta - \inf_{t \in [0, 1]}
        \frac{t \eta \norm{\grad(x)}_2^2}
        {\abr{\grad(x) - \grad(x - t \eta \grad(x)), \grad(x)}}
        = 0.
    \]
    Since \( f \) is \( C^2 \), \( h \) is continuous
    (see, e.g. \citet[Theorem 7]{hogan1973point})
    and it suffices to show that there exists an interval over which \( h \)
    crosses \( 0 \).

    Using Taylor's theorem, we can re-write this expression as
    \begin{align*}
        h(\eta) & = \eta - \inf_{t \in [0, 1]}
        \frac{\norm{\grad(x)}_2^2}
        {\abr{\grad(x), \nabla^2 f(x - \alpha(t \eta) \grad(x)) \grad(x)}},
    \end{align*}
    where for some \( \alpha(t \eta) \in [0, t \eta] \).
    Examining the denominator, we find that,
    \[
        \int_{0}^t \grad(x)^\top \nabla^2 f(x - t \bar \eta \grad(x)) \grad(x) dt
        = \abr{\grad(x - \bar \eta \grad(x)) - \grad(x), \grad(x)} = 0,
    \]
    which, since \( f \) is convex, implies
    \[
        \grad(x)^\top \nabla^2 f(x - \alpha \grad(x)) \grad(x) = 0,
    \]
    for every \( \alpha \in [0, \bar \eta] \).
    By continuity of the Hessian,
    for every \( \epsilon > 0 \), there exists \( \delta > 0 \)
    such that
    \( \eta' \in [\bar \eta, \bar \eta + \delta] \)
    guarantees,
    \[
        \grad(x)^\top \nabla^2 f(x - \eta' \grad(x)) \grad(x) < \epsilon.
    \]
    Substituting this into our expression for \( h \),
    \begin{align*}
        h(\eta') & = \eta' - \inf_{t \in [0, 1]}
        \frac{\norm{\grad(x)}_2^2}
        {\abr{\grad(x), \nabla^2 f(x - \alpha(t \eta') \grad(x)) \grad(x)}}    \\
                 & < \bar \eta + \delta - \frac{\norm{\grad(x)}_2^2}{\epsilon} \\
                 & < 0,
    \end{align*}
    for \( \epsilon, \delta \) sufficiently small.
    Thus, there exists \( \eta' > \bar \eta \) for which \( h(\eta') < 0 \).

    Now let us show that \( h(\eta'') > 0 \) for some \( \eta'' \).
    For convenience, define
    \[
        g(\eta) = \inf_{t \in [0, 1]}
        \frac{t \norm{\grad(x)}_2^2}
        {\abr{\grad(x) - \grad(x - t \eta \grad(x)), \grad(x)}},
    \]
    which is a continuous and monotone non-increasing function.
    Take \( \eta \rightarrow \infty \) and let
    \[
        \lim_{\eta \rightarrow \infty} g(\eta) = c,
    \]
    where the limit exists, but may be \( - \infty \).
    Indeed, it must hold that \( c < \infty \) since,
    \begin{align*}
        \lim_{\eta \rightarrow \infty} g(\eta) < g(\eta') < \infty.
    \end{align*}
    If \( c < 0 \), then taking \( \eta'' \)
    large enough that \( g(\eta'') \leq 0 \) suffices.
    Alternatively, if \( c \geq 0 \), then there exists \( \tilde \eta \)
    such that \( g(\eta) \leq c + \epsilon \) for every
    \( \eta \geq \tilde \eta \).
    Choosing \( \eta'' > \max\cbr{\tilde \eta, c} + \epsilon \) yields
    \[
        h(\eta'') = \eta'' - g(\eta'') > c + \epsilon - c - \epsilon = 0.
    \]
    This completes the proof.
\end{proof}

\begin{algorithm}[t]
    \caption{Gradient Descent with Exponential Search}
    \label{alg:exponential-search-gd}
    \begin{algorithmic}[1]
        \STATE \textbf{Procedure} ExponentialSearch($x, \eta_0$)
        \FOR{$k = 1, 2, 3, \ldots$}
        \STATE $\eta_{\mathrm{out}} \gets \text{RootFindingBisection}\rbr{x, 2^{-2^{k}} \eta_0, \eta_0}$.
        \IF {$\eta_{\mathrm{out}} < \infty$}
        \STATE \textbf{Return} $\eta_{\mathrm{out}}$
        \ENDIF
        \ENDFOR
        \STATE \textbf{End Procedure}

        \STATE \textbf{Procedure} RootFindingBisection($x, \eta_{\mathrm{lo}}, \eta_{\mathrm{hi}}$)
        \STATE Define $\phi(\eta) = \eta - \psi(\eta)$ where $\psi(\eta)$ is given in~\eqref{eq:psi}
        {\color{gray} $\backslash\backslash$ One access to $\phi$ requires $T$ descent steps.}

        \IF {$\phi(\eta_{\mathrm{hi}}) \leq 0$}
        \STATE \textbf{Return} $\eta_{\mathrm{hi}}$
        \ENDIF

        \IF {$\phi(\eta_{\mathrm{lo}}) > 0$}
        \STATE \textbf{Return} $\infty$
        \ENDIF

        \WHILE{$\eta_{\mathrm{hi}} > 2 \eta_{\mathrm{lo}}$}
        \STATE $\eta_{\mathrm{mid}} = \sqrt{\eta_{\mathrm{lo}} \eta_{\mathrm{hi}}}$

        \IF {$\phi(\eta_{\mathrm{mid}}) > 0$}
        \STATE $\eta_{\mathrm{hi}} = \eta_{\mathrm{mid}}$
        \ELSE
        \STATE $\eta_{\mathrm{lo}} = \eta_{\mathrm{mid}}$
        \ENDIF

        {\color{gray}  $\backslash\backslash$ Invariant: $\phi(\eta_{\textrm{hi}}) > 0$, and $\phi(\eta_{\mathrm{lo}}) \leq 0$.}
        \ENDWHILE

        \STATE \textbf{Return} $\eta_{\mathrm{lo}}$
        \STATE \textbf{End Procedure}
    \end{algorithmic}
\end{algorithm}

\exponentialsearch*
\begin{proof}[Proof of Theorem~\ref{thm:exponential-search-gd}]
    This analysis follows \citep{yair2022parameter}.
    First, instantiate~\cref{eq:1} from
    \Cref{prop:convex-any-stepsize} with $\eta_i = \eta$ for all $i$ to obtain
    \begin{align}
        \label{eq:exp-search-1}
        f(\bar{x}_k) - f^{*} \leq \frac{\norm{x_0 - x^{*}}^2}{2 \eta k} + \frac{\eta \left[ \eta \sum_{i=0}^k M(x_i, x_{i+1}) \norm{\nabla f(x_i)}^2 - \sum_{i=0}^k \norm{\nabla f(x_i)}^2 \right]}{2 k}.
    \end{align}
    Now, observe that if we get a ``Lucky strike'' and $\phi(\eta_{\mathrm{hi}}) = \phi(\eta_0) \leq 0$, then specializing \cref{eq:exp-search-1} for $\eta = \eta_0$ we get
    \begin{align*}
        f(\bar{x}_k) - f(\xopt) & \leq \frac{\sqn{x_0 - \xopt}_2}{2\eta_0 k} + \frac{\eta_0}{2k} \left[ \eta_0 \sum_{i=0}^k M(x_i, x_{i+1}) \sqn{\nabla f(x_i)}_2 - \sum_{i=0}^k \sqn{\nabla f(x_i)}_2  \right] \\
                                & = \frac{\sqn{x_0 - \xopt}_2}{2\eta_0 k} + \frac{\eta_0 \sum_{i=0}^k M(x_i, x_{i+1}) \sqn{\nabla f(x_i)}_2}{2k} \cdot \phi(\eta_0)                                             \\
                                & \leq \frac{\sqn{x_0 - \xopt}_2}{2 \eta_0 k}.
    \end{align*}
    This covers the first case of Theorem~\ref{thm:exponential-search-gd}.

    With the first case out of the way, we may assume that
    $\phi(\eta_{\mathrm{hi}}) > 0$.
    This implies that $\eta_{\mathrm{hi}} > \frac{1}{L}$, since if $\eta \leq
        \frac{1}{L}$ we have $\phi(\eta) \leq 0$.
    Now observe that when $\eta_{\mathrm{lo}} = 2^{2^{-k}} \eta_0 \leq
        \frac{1}{L}$, we have that $\phi(\eta_{\mathrm{lo}}) \leq 0$, therefore it
    takes at most $k = \ceil{\log \log \frac{\eta_0}{L^{-1}}}$ to find such an
    $\eta_{\mathrm{lo}}$.
    From here on, we suppose that $\phi(\eta_{\mathrm{hi}}) > 0$ and
    $\phi(\eta_{\mathrm{lo}}) \leq 0$.
    Now observe that the algorithm's main loop always maintains the invariant
    $\phi(\eta_{\mathrm{hi}}) > 0$ and $\phi(\eta_{\mathrm{lo}}) \leq 0$, and
    every iteration of the loop halves $\log
        \frac{\eta_{\mathrm{hi}}}{\eta_{\mathrm{lo}}}$, therefore we make at most
    $\ceil{\log \log \eta_0 L}$ loop iterations.
    The output step-size $\eta_{\mathrm{lo}}$ satisfies
    $\frac{\eta_{\mathrm{hi}}}{2} \leq \eta_{\mathrm{lo}} \leq \eta_{hi}$ and
    $\phi(\eta_{\mathrm{lo}}) \leq 0$.
    Specializing \cref{eq:exp-search-1} for $\eta = \eta_0$ and using that
    $\phi(\eta_{\mathrm{lo}}) \leq 0$ we get
    \begin{align}
        f(\bar{x}_k) - f(\xopt) & \leq \frac{\sqn{x_0 - \xopt}_2}{2 \eta_{\mathrm{lo}} k} + \frac{\eta_{\mathrm{lo}} \sum_{i=0}^k M(x_i (\eta_{\mathrm{lo}}), x_{i+1} (\eta_{\mathrm{lo}})) \sqn{\nabla f(x_i (\eta_{\mathrm{lo}}))}_2}{2k} \cdot \phi(\eta_{\mathrm{lo}}) \nonumber \\
        \label{eq:exp-search-proof-1}
                                & \leq \frac{\sqn{x_0 - \xopt}_2}{2 \eta_{\mathrm{lo}} k}.
    \end{align}
    By the loop invariant $\phi(\eta_{\mathrm{hi}}) > 0$ we have
    \begin{align*}
        \phi(\eta_{\mathrm{hi}}) > 0 \Leftrightarrow \eta_{\mathrm{hi}} > \frac{\sum_{i=0}^K \sqn{\nabla f(x_i (\eta_{\mathrm{hi}}))}_2}{\sum_{i=0}^K \sqn{\nabla f(x_i (\eta_{\mathrm{hi}}))}_2 M(x_i (\eta_{\mathrm{hi}}), x_{i+1} (\eta_{\mathrm{hi}}))}
    \end{align*}
    By the loop termination condition we have $\eta_{\mathrm{lo}} \geq \frac{\eta_{\mathrm{hi}}}{2}$, combining this with the last equation we get
    \begin{align*}
        \eta_{\mathrm{lo}} \geq  \frac{\eta_{\mathrm{hi}}}{2} \geq \frac{1}{2} \frac{\sum_{i=0}^K \sqn{\nabla f(x_i (\eta_{\mathrm{hi}}))}_2}{\sum_{i=0}^K \sqn{\nabla f(x_i (\eta_{\mathrm{hi}}))}_2 M(x_i (\eta_{\mathrm{hi}}), x_{i+1} (\eta_{\mathrm{hi}}))}.
    \end{align*}
    Plugging this into \cref{eq:exp-search-proof-1} we obtain
    \begin{align*}
        f(\bar{x}_k) - f(\xopt) \leq \frac{\sqn{x_0 - \xopt}_2}{k} \cdot \frac{\sum_{i=0}^K \sqn{\nabla f(x_i (\eta_{\mathrm{hi}}))}_2 M(x_i (\eta_{\mathrm{hi}}), x_{i+1} (\eta_{\mathrm{hi}}))}{\sum_{i=0}^K \sqn{\nabla f(x_i (\eta_{\mathrm{hi}}))}_2}
    \end{align*}
    It remains to notice that $\eta_{\mathrm{hi}} \in [\eta_{\mathrm{lo}}, 2 \eta_{\mathrm{lo}}]$.
\end{proof}

\polyak*

For the proof of this theorem, we will need the following proposition:
\begin{proposition}\label{prop:polyak}
    Let $x \in \mathbb{R}^d$. Define $\eta_x = \gamma \frac{f(x) - f(\xopt)}{\norm{\nabla f(x)}^2}$ for some $\gamma \in (1, 2)$ and let $\tilde{x} = x - \eta_x \nabla f(x)$. Then,
    \begin{align*}
        f(x) - f(\xopt) & \geq \frac{\gamma-1}{\gamma^2} \frac{2}{M(x, \tilde{x})} \sqn{\nabla f(x)}_2.
    \end{align*}
\end{proposition}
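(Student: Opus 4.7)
The plan is to start from the directional smoothness upper bound applied at $x$ and $\tilde{x} = x - \eta_x \nabla f(x)$, namely
\begin{equation*}
f(\tilde{x}) \leq f(x) + \ev{\nabla f(x), \tilde{x}-x} + \frac{M(\tilde{x},x)}{2}\sqn{\tilde{x}-x},
\end{equation*}
and substitute $\tilde{x}-x = -\eta_x \nabla f(x)$ to obtain a one-variable quadratic inequality in $\eta_x$.

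Next I would use the trivial lower bound $f(\tilde{x}) \geq f_\ast$ on the left-hand side, giving
\begin{equation*}
\eta_x \sqn{\nabla f(x)} - \tfrac{1}{2} M(\tilde{x},x) \eta_x^2 \sqn{\nabla f(x)} \leq f(x) - f_\ast,
\end{equation*}
and then plug in the Polyak step-size $\eta_x = \gamma (f(x)-f_\ast)/\sqn{\nabla f(x)}$ so that the left-hand side becomes $\gamma(f(x)-f_\ast) - \tfrac{\gamma^2 M(\tilde{x},x)}{2}\cdot (f(x)-f_\ast)^2/\sqn{\nabla f(x)}$. Collecting the $(f(x)-f_\ast)$ terms and assuming without loss of generality that $f(x) > f_\ast$ (the proposition is trivial otherwise), I would then divide by $f(x)-f_\ast$ to get the scalar inequality
\begin{equation*}
\gamma - 1 \leq \frac{\gamma^2 M(\tilde{x},x)}{2} \cdot \frac{f(x) - f_\ast}{\sqn{\nabla f(x)}}.
\end{equation*}
Rearranging yields exactly the claimed bound $f(x) - f_\ast \geq \frac{\gamma-1}{\gamma^2}\cdot\frac{2}{M(\tilde{x},x)}\sqn{\nabla f(x)}$.

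There is no real obstacle here; the argument is essentially algebraic manipulation of the descent inequality with $\eta_x$ substituted in. The only point worth highlighting is why the assumption $\gamma \in (1,2)$ enters: the lower bound $\gamma > 1$ is used when dividing by $f(x)-f_\ast$ and requiring the resulting constant $(\gamma-1)/\gamma^2$ to be positive, so that the resulting bound is informative (non-vacuous). The upper bound $\gamma < 2$ plays no role in this particular proposition but will be used in the main Polyak theorem that invokes it, where the constant $c(\gamma)=\gamma/[(2-\gamma)(\gamma-1)]$ appears.
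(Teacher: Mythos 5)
Your proof is correct and follows essentially the same route as the paper: apply the directional-smoothness upper bound at $\tilde x$, drop to $f(\tilde x)\ge f_\ast$, substitute the Polyak step-size, cancel $f(x)-f_\ast$, and rearrange. The paper's version substitutes the definition of $\eta_x$ in two stages rather than all at once, but the algebra is the same, and your observation that $\gamma<2$ is not actually used in this proposition (only $\gamma>1$ for a non-vacuous constant) also matches.
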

\begin{proof}
    Observe
    \begin{align}
        f(x) - f(\xopt) & = f(x) - f(\tilde{x}) + f(\tilde{x}) - f(\xopt) \nonumber \\
        \label{eq:9}
                        & \geq f(x) - f(\tilde{x}).
    \end{align}
    By smoothness we have
    \begin{align*}
        f(\tilde{x}) & \leq f(x) + \abr{\nabla f(x), \tilde{x} - x} + \frac{M(x, \tilde{x})}{2} \norm{\tilde{x} - x}^2 \\
                     & = f(x) - \eta_x \sqn{\nabla f(x)}_2 + \frac{\eta_x^2 M(x, \tilde{x})}{2} \sqn{\nabla f(x)}_2.
    \end{align*}
    Plugging back into \cref{eq:9} we get
    \begin{align*}
        f(x) - f(\xopt) & \geq \eta_x \sqn{\nabla f(x)}_2 - \frac{\eta_x^2 M(x, \tilde{x})}{2} \sqn{\nabla f(x)}_2.
    \end{align*}
    Let us now use the definition of $\eta_x = \gamma \frac{f(x) - f(\xopt)}{\sqn{\nabla f(x)}_2}$ to get
    \begin{align*}
        f(x) - f(\xopt) & \geq \gamma (f(x) - f(\xopt)) - \frac{\gamma\eta_x M(x, \tilde{x})}{2} (f(x) - f(\xopt)).
    \end{align*}
    Assuming that $f(x) \neq f(\xopt)$ then we get by cancellation
    \begin{align*}
        1 & \geq \gamma - \frac{\gamma\eta_x M(x, \tilde{x})}{2}.
    \end{align*}
    Using the definition of $\eta_x$ again
    \begin{align*}
        1 - \gamma & \geq - \gamma^2 \frac{M(x, \tilde{x})}{2} \frac{f(x) - f(\xopt)}{\sqn{\nabla f(x)}_2}
    \end{align*}
    Rearranging we get
    \begin{align*}
        f(x) - f(\xopt) & \geq \frac{\gamma-1}{\gamma^2} \frac{2}{M(x, \tilde{x})} \sqn{\nabla f(x)}_2.
    \end{align*}
    If $f(x) = f(\xopt)$ then $\sqn{\nabla f(x)}_2 = 0$, both sides are identically zero and the statement holds trivially.
\end{proof}

Now we can prove our theorem on the convergence of GD with Polyak step-sizes:

\begin{proof}[Proof of Theorem~\ref{thm:polyak}]
    We start by considering the distance to the optimum and expanding the square
    \begin{align}
        \sqn{x_{k+1} - \xopt}_2 & = \sqn{x_k - \xopt}_2 + 2 \abr{x_{k+1}-x_k, x_k-\xopt} + \sqn{x_{k+1} - x_k}_2 \nonumber            \\
        \label{eq:11}
                                & = \sqn{x_k - \xopt}_2 - 2 \eta_k \abr{\nabla f(x_k), x_k - \xopt} + \eta_k^2 \sqn{\nabla f(x_k)}_2.
    \end{align}
    Let $\delta_k = f(x_k) - f(\xopt)$. By convexity we have $f(\xopt) \geq f(x_k) + \abr{\nabla f(x_k), \xopt - x_k}$. Therefore we can upper bound \cref{eq:11} as
    \begin{align}
        \sqn{x_{k+1} - \xopt}_2 & \leq \sqn{x_k - \xopt}_2 - 2 \eta_k \delta_k + \eta_k^2 \sqn{\nabla f(x_k)}_2 \nonumber                                                          \\
                                & = \sqn{x_k - \xopt}_2 - 2 \eta_k \delta_k + \eta_k \left( \gamma \frac{\delta_k}{\sqn{\nabla f(x_k)}_2}  \right) \sqn{\nabla f(x_k)}_2 \nonumber \\
        \label{eq:12}
                                & = \sqn{x_k - \xopt}_2- (2-\gamma) \eta_k \delta_k,
    \end{align}
    where in the second line we used the definition of $\eta_k$. By \Cref{prop:polyak} we have
    \begin{align}
        \label{eq:10}
        \delta_k \geq \frac{\gamma-1}{\gamma} \frac{2}{M(\xk, \xkk)}  \sqn{\nabla f(x_k)}_2.
    \end{align}
    Using this in \cref{eq:12} gives
    \begin{align*}
        \sqn{x_{k+1} - \xopt}_2 & \leq \sqn{x_k - \xopt}_2 - (2-\gamma) \eta_k \frac{\gamma-1}{\gamma^2} \frac{2}{M(\xk, \xkk)} \sqn{\nabla f(x_k)}_2                                                     \\
                                & = \sqn{x_k - \xopt}_2 - (2-\gamma) \frac{\gamma-1}{\gamma^2} \frac{2}{M(\xk, \xkk)} \left( \gamma \frac{\delta_k}{\sqn{\nabla f(x_k)}_2}  \right) \sqn{\nabla f(x_k)}_2 \\
                                & = \sqn{x_k - \xopt}_2 - \frac{2 (2-\gamma) (\gamma-1)}{\gamma M(\xk, \xkk)}  \delta_k.
    \end{align*}
    Rearranging we get
    \begin{align*}
        \frac{2 (2-\gamma) (\gamma-1)}{\gamma M(\xk, \xkk)} \delta_k \leq \sqn{x_k - \xopt}_2 - \sqn{x_{k+1} - \xopt}_2.
    \end{align*}
    Summing up and telescoping we get
    \begin{align*}
        \sum_{i=0}^{k-1} \frac{2 (2-\gamma) (\gamma-1)}{\gamma M(x_i, x_{i+1})} \delta_i \leq \sqn{x_0 - \xopt}_2.
    \end{align*}
    Let $\bar{x}_k = \frac{1}{\sum_{i=0}^{k-1} M(x_i, x_{i+1})^{-1}} \sum_{i=0}^{k-1} M(x_i, x_{i+1})^{-1} x_i$, then by the convexity of $f$ and Jensen's inequality we have
    \begin{align*}
        f(\bar{x}_k) - f(\xopt)
         & \leq \frac{1}{\sum_{i=0}^{k-1} M(x_i, x_{i+1})^{-1}} \sum_{i=0}^{k-1} M(x_i, x_{i+1})^{-1} \delta_i             \\
         & \leq \frac{\gamma }{2(2-\gamma)(\gamma-1)} \frac{1}{\sum_{i=0}^{k-1} M(x_i, x_{i+1})^{-1}} \sqn{x_0 - \xopt}_2.
    \end{align*}
\end{proof}

\begin{theorem}\label{thm:polyak-alternate}
    If $f$ is convex and differentiable,
    then GD with the Polyak step-size and \( \gamma < 2 \) satisfies,
    \begin{equation}
        f(\bar x_k) - f(\xopt)
         \leq \frac{1}{(2 - \gamma) \sum_{i=0}^k \eta_i}
        \norm{x_0 - \xopt}_2^2,
    \end{equation}
    where
    \( \bar x_k = \sum_{i=0}^{k-1} \eta_i  x_i / \rbr{\sum_{i=0}^{k-1} \eta_i} \).
\end{theorem}
\begin{proof}
    The proof begins in the same manner as that for \cref{thm:polyak},
    \begin{align*}
        \sqn{x_{k+1} - \xopt}_2
         & = \sqn{x_k - \xopt}_2 + 2 \abr{x_{k+1}-x_k, x_k-\xopt} + \sqn{x_{k+1} - x_k}_2 \\
         & = \sqn{x_k - \xopt}_2 - 2 \eta_k \abr{\nabla f(x_k), x_k - \xopt}
        + \eta_k^2 \sqn{\nabla f(x_k)}_2                                                  \\
         & \leq \sqn{x_k - \xopt}_2 - 2 \eta_k \delta_k + \eta_k^2 \sqn{\nabla f(x_k)}_2  \\
         & = \sqn{x_k - \xopt}_2 - 2 \eta_k \delta_k
        + \eta_k \left( \gamma \frac{\delta_k}{\sqn{\nabla f(x_k)}_2}  \right)
        \sqn{\nabla f(x_k)}_2                                                             \\
         & = \sqn{x_k - \xopt}_2- (2-\gamma) \eta_k \delta_k.
        \intertext{Re-arranging, summing from \( i = 0 \) to \( k-1 \), and
            dividing by \( \sum_{i=0}^{k-1} \eta_i \),}
        \implies
        \sum_{i=0}^{k-1} \frac{\eta_i}{\sum_{i=0}^k \eta_i}\rbr{f(x_i) - f(\wopt)}
         & \leq \frac{1}{(2 - \gamma) \sum_{i=0}^k \eta_i}
        \sbr{\norm{x_0 - \xopt}_2^2 - \norm{x_k - \xopt}_2^2}                             \\
        \implies
        f(\bar x_k) - f(\xopt)
         & \leq \frac{1}{(2 - \gamma) \sum_{i=0}^k \eta_i}
        \norm{x_0 - \xopt}_2^2,
    \end{align*}
    which completes the proof.
\end{proof}



\begin{lemma}\label{lemma:ngd-intermediate}
    Normalized GD with step-sizes $\eta_k$ satisfies
    \begin{equation}\label{eq:151}
        - \frac{\eta_k}{\norm{\nabla f(\xk)}_2} \ev{\nabla f(\xk), \nabla f(x_{k+1})}
        \leq  \etak^2 M(\xk,\xkk)- \etak \norm{\nabla f(\xk)}_2.
    \end{equation}
\end{lemma}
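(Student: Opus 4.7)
The plan is to combine two applications of the directional smoothness bound from \Cref{def:directional-smoothness-function}, one in each direction between $x_t$ and $x_{t+1}$, and exploit the structure of the normalized GD update. The update rule immediately gives the identities $\norm{x_{t+1}-x_t}=\eta_t$ and $\langle \nabla f(x_t),\, x_{t+1}-x_t\rangle = -\eta_t\norm{\nabla f(x_t)}$. Moreover, since $x_{t+1}-x_t = -\frac{\eta_t}{\norm{\nabla f(x_t)}}\nabla f(x_t)$, the left-hand side of \eqref{eq:151} is exactly $\langle \nabla f(x_{t+1}),\, x_{t+1}-x_t\rangle$, so the task reduces to upper bounding this inner product.

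First I would apply the directional smoothness bound in the ``reverse'' direction, i.e.\ with $(y,x) = (x_t, x_{t+1})$, which after rearrangement yields
\begin{equation*}
    \langle \nabla f(x_{t+1}),\, x_{t+1}-x_t\rangle \leq f(x_{t+1}) - f(x_t) + \tfrac{M(x_t, x_{t+1})}{2}\eta_t^2 .
\end{equation*}
This already rewrites the LHS of \eqref{eq:151}, but leaves the unwanted function-value gap $f(x_{t+1}) - f(x_t)$. Next I would eliminate that gap by applying directional smoothness in the ``forward'' direction with $(y,x) = (x_{t+1}, x_t)$ and plugging in the inner product identity noted above, producing
\begin{equation*}
    f(x_{t+1}) - f(x_t) \leq -\eta_t \norm{\nabla f(x_t)} + \tfrac{M(x_{t+1}, x_t)}{2}\eta_t^2 .
\end{equation*}
Substituting this into the previous display and collapsing the two smoothness terms into the single constant $M(x_t, x_{t+1})$ on the right gives the claimed bound $\eta_t^2 M(x_t,x_{t+1}) - \eta_t \norm{\nabla f(x_t)}$.

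The main subtlety is the symmetrization step at the end: chaining two applications of directional smoothness naturally produces $\tfrac12[M(x_t,x_{t+1}) + M(x_{t+1},x_t)]\eta_t^2$, so the clean form in the lemma requires $M$ to be symmetric in its arguments. This is exactly the case for the point-wise smoothness $D$ from \eqref{eq:directional-smoothness} used in the downstream \Cref{thm:ngd}, so the lemma serves its intended purpose there. Beyond this notational matter, every step is direct algebra from the normalized GD update and the definition of directional smoothness.
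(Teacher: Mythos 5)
Your proof is correct, but it takes a genuinely different route from the paper and yields a quantitatively weaker bound. The paper never invokes the quadratic upper bound of \Cref{def:directional-smoothness-function} at all: it writes the left-hand side as
\[
-\frac{\eta_t}{\norm{\nabla f(x_t)}}\ev{\nabla f(x_t),\nabla f(x_{t+1})}
= \frac{\eta_t}{\norm{\nabla f(x_t)}}\ev{\nabla f(x_t),\nabla f(x_t)-\nabla f(x_{t+1})} - \eta_t\norm{\nabla f(x_t)},
\]
applies Cauchy--Schwarz to the first term, and then substitutes $\norm{\nabla f(x_t)-\nabla f(x_{t+1})} = M(x_t,x_{t+1})\,\eta_t$, where (as the paper's proof makes explicit) $M(x_t,x_{t+1}) := \norm{\nabla f(x_t)-\nabla f(x_{t+1})}/\norm{x_t-x_{t+1}}$. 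Crucially this local $M$ is $D(x_t,x_{t+1})/2$, i.e.\ \emph{half} the point-wise directional smoothness, and $D/2$ is \emph{not} a valid directional-smoothness function in the sense of \Cref{def:directional-smoothness-function} (by \Cref{prop:dirSmoothTight} the factor $2$ is unavoidable in general). Your argument, by contrast, chains two applications of the definitional upper bound, which produces $\tfrac12\bigl[M(x_t,x_{t+1})+M(x_{t+1},x_t)\bigr]\eta_t^2$. Instantiated with $M=D$ — the smallest symmetric directional-smoothness function you have access to — this gives $D(x_t,x_{t+1})\,\eta_t^2$, exactly twice the paper's $\tfrac{D}{2}\eta_t^2$. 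So your proof establishes \eqref{eq:151} for any symmetric directional-smoothness function $M$, which is clean and general, but it cannot reach the specific $D/2$ constant that the paper's Cauchy--Schwarz argument attains by exploiting that $\norm{x_{t+1}-x_t}=\eta_t$ exactly under normalized GD. A secondary difference: your route needs $f$ convex (to invoke \Cref{lemma:directional-smoothness} for $D$), whereas the paper's Cauchy--Schwarz step for the lemma itself does not. Both proofs feed correctly into \Cref{thm:ngd}; yours just propagates an extra factor of $2$ into the final rate.
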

\begin{proof}
    By convexity we have
    \begin{align}
        f(\xkk)
         & \leq f(\xk) + \ev{ \xkk-\xk, \nabla f(\xkk)}                                                                                \nonumber \\
         & = f(\xk) - \frac{\etak}{\norm{\nabla f(\xk)}_2} \ev{\nabla f(\xk), \nabla f(\xkk)}      \label{eq:step1NGD}
    \end{align}
    Now note that
    \begin{align}
        - \frac{\etak}{\norm{\nabla f(\xk)}_2} \ev{\nabla f(\xk), \nabla f(\xkk)}
         & =   \frac{\etak}{\norm{\nabla f(\xk)}_2} \ev{\nabla f(\xk), \nabla f(\xk) - \nabla f(\xkk)}        \\
         & \hspace{4em} - \etak \norm{\nabla f(\xk)}_2 \nonumber                                              \\
         & \leq  \etak \|  \nabla f(\xk) - \nabla f(\xkk)\| - \etak \norm{\nabla f(\xk)}_2 	,	  \label{eq:15}
    \end{align}
    where we used Cauchy-Schwarz.
    Recalling the definition of directional smoothness
    \begin{align*}
        M(\xk,\xkk) \eqdef \frac{\norm{\nabla f(\xk) - \nabla f(\xkk)}}{\norm{\xk - \xkk}} = \frac{\norm{\nabla f(\xk) - \nabla f(\xkk)}}{\etak}
    \end{align*}
    in \cref{eq:15} gives
    \begin{align*}
        - \frac{\etak}{\norm{\nabla f(\xk)}_2} \ev{\nabla f(\xk), \nabla f(\xkk)}
         & \leq  \etak^2 M(\xk,\xkk)- \etak \norm{\nabla f(\xk)}_2.
    \end{align*}
\end{proof}

\ngd*
\begin{proof}
    Here we will first establish that for any non-increasing sequence of step-sizes $\etak>0$ we have that
    \begin{equation}\label{eq:norm-gd-master-eq}
        \min_{i\in[k-1]} f(x_i)-f(\xopt)
        \le
        \frac{1}{2}\frac{\Delta_0+ \sum_{i=0}^{k-1}\eta_i^2}{k}
        \rbr{\frac{f(x_0)}{k\eta_0^2} - \frac{f(\xopt)}{k\eta_{k-1}^2}
            + \sum_{i=0}^{k-1}\frac{M(\x_i, \x_{i+1})}{k}}.
    \end{equation}
    The specialized results follow by assuming that
    $\sum_{i=0}^{k-1} \frac{M(\x_i, \x_{i+1})}{k}$ is bounded, which it is
    the case of $L$--Lipschitz gradients.
    In particular the
    $\min_{i\in[k-1]} f(\x_i)-f(\xopt) \in \mathcal{O}(1/T)$ result follows by
    plugging in $\eta_i =1/\sqrt{k}$ and using that
    \begin{align*}
        \sum_{i=0}^{k-1}\eta_i^2
         & =  \sum_{i=0}^{k-1}\frac{1}{k} = 1 \\
        \frac{f(x_0)}{k\eta_0^2} - \frac{f(\xopt)}{k\eta_{k-1}^2}
         & = f(x_0) - f(\xopt).
    \end{align*}
    Alternatively we get
    $\min_{i\in[k-1]} f(\x_i)-f(\xopt) \in \mathcal{O}(\log(T)/T)$
    by plugging in $\eta_i = 1/\sqrt{i+1}$ and using that
    \begin{align*}
        \sum_{i=0}^{k-1}\eta_i^2
         & =  \sum_{i=0}^{k-1}\frac{1}{i+1} \leq \log(k) \\
        \frac{f(x_0)}{k\eta_0^2} - \frac{f(\xopt)}{k\eta_{k-1}^2}
         & =  \frac{f(x_0)}{k}- f(\xopt).
    \end{align*}
    With this in mind, let us prove \cref{eq:norm-gd-master-eq}.

    By convexity,
    \begin{align*}
        f(\xkk)
         & \le f(\xk) - \frac{\etak}{\norm{\nabla f(\xk)}_2} \nabla f(\xk)^\top \nabla f(\xkk)                             \\
         & \le f(\xk) -\etak \norm{\nabla f(\xk)}_2 + \etak^2 M(\xk,\xkk).                     \tag{Using~\eqref{eq:151} }
    \end{align*}
    Re-arranging, dividing through by $\etak^2$, and then summing over $i=0,\cdots,k-1$ gives
    \begin{align}
        \sum_{i=0}^{k-1}  \frac{\norm{\nabla f(\x_i)}_2}{\eta_i}
         & \le \frac{f(x_0)}{\eta_0^2}
        + \sum_{i=1}^{k-2} f(\x_i)\left(\frac{1}{\eta_i^2}
        - \frac{1}{\eta_{i-1}^2}\right)- \frac{f(\xopt)}{\eta_{k-1}^2}
        + \sum_{i=0}^{k-1} M(\x_i, \x_{i+1}) \nonumber                  \\
         & \leq \frac{f(x_0)}{\eta_0^2} - \frac{f(\xopt)}{\eta_{k-1}^2}
        + \sum_{i=0}^{k-1}   M(\x_i, \x_{i+1}), \label{eq:slienz8o4z23}
    \end{align}
    where we used that
    $\eta_{i-1} \leq \eta_i \implies \frac{1}{\eta_i^2} -\frac{1}{\eta_{i-1}^2} \leq 0$.
    Using Jensen's inequality over the map $a \mapsto 1/a$, which is convex
    for $a$ positive, gives
    \begin{align}
        \sum_{i=0}^{k-1} \frac{\eta_i}{\norm{\nabla f(\xk)}_2}
        \ge \frac{k^2}{\sum_{i=0}^{k-1} \norm{\nabla f(\xk)}_2/\eta_i}
         & \overset{\eqref{eq:slienz8o4z23}}{\ge}
        \frac{k^2}{\frac{f(x_0)}{\eta_0^2} - \frac{f(\xopt)}{\eta_{k-1}^2}
            + \sum_{i=0}^{k-1} M(\x_i, \x_{i+1})}. \label{eq:tnreslinisere}
    \end{align}
    Meanwhile, recall our notation $\Delta_{i} = \norm{\x_i - \xopt}_2^2$.
    Expanding the squares and using that $f(x)$ is convex, we have that
    \begin{align*}
        \Delta_{i+1}
         & = \Delta_{i} - 2\frac{\eta_i}{\norm{\nabla f(\x_i)}}_2 \nabla f(\x_i)^\top (\x_i-\xopt) + \eta_i^2 \\
         & \le \Delta_i -2\eta_i \frac{f(\x_i)-f(\xopt)}{\norm{\nabla f(\xk)}_2} + \eta_i^2.
    \end{align*}
    As before, we use $\delta_i := f(\x_i)-f(\xopt)$.
    Re-arranging, summing both sides of the above over
    $i=0, \ldots, k-1$ and using telescopic cancellation gives
    \begin{align*}
        \sum_{i=0}^{k-1}\eta_i \frac{\delta_i}{\|\nabla f(\x_i)\|}
        \le \frac{\Delta_0 + \sum_{i=0}^{t-1}\eta_i^2}{2}.
    \end{align*}
    Using the above along with~\eqref{eq:tnreslinisere} gives,
    \begin{align*}
        \min_{i \in [k-1]} \delta_i
         & \le \frac{1}{\sum_{i=0}^{k-1} \frac{\eta_i}{\norm{\nabla f(\x_i)}_2}}
        \sum_{i=0}^{k-1} \eta_i \frac{\delta_i}{\norm{\nabla f(\x_i)}_2}         \\
         & \leq \frac{1}{2}\frac{\Delta_0 + \sum_{i=0}^{k-1}\eta_i^2}
        {\sum_{i=0}^{k-1} \frac{\eta_i}{\|\nabla f(\x_i)\| }}                    \\
         & \leq \frac{1}{2}\frac{\Delta_0 + \sum_{i=0}^{k-1}\eta_i^2}{k}
        \rbr{\frac{f(x_0)}{k\eta_0^2} - \frac{f(\xopt)}{k\eta_{k-1}^2} + \sum_{i=0}^{k-1} \frac{M(\x_i, \x_{i+1})}{k}}
    \end{align*}

\end{proof}

\section{Experimental Details}%
\label{app:experimental-details}


In this section we provide additional details necessary to reproduce our experiments.
We run our logistic regression experiments using PyTorch \citep{paszke2019pytorch}.
For the UCI datasets, we use the pre-processed version of the data
provided by \citet{fernandez2014we}, although we do not use their evaluation
procedure as it is known have test-set leakage.
Instead, we randomly perform an 80--20 train-test split and use the test
set for validation.
Unless otherwise stated, all methods are initialized using the
Kaiming initialization \citep{he2015delving}, which is standard in PyTorch.

In order to compute the strongly adapted step-sizes, we run the SciPy
\citep{2020SciPy-NMeth} implementation of Newton method on
\cref{eq:zero-finding}.
In general, we find this procedure is surprisingly robust, although it can
be slow.

\textbf{\cref{fig:rate-comparison}}:
We pick two datasets from the UCI repository to showcase different
behaviors of the upper-bounds.
We compute a tight-upper bound on \( L \) as follows.
Recall that for logistic regression problems the Hessian is given by
\[
	\nabla^2 f(x) = A^\top \text{Diag}\rbr{\frac{1}{\sigma(-y \cdot Ax) + 2 + \sigma(y \cdot Ax)}} A,
\]
where \( A \) is the data matrix and \( \sigma(z) = \frac{1}{1 + \exp(z)} \)
is the sigmoid function.
A short calculation shows that the diagonal matrix
\[
	\text{Diag}\rbr{\frac{1}{\sigma(-y \cdot Ax) + 2 + \sigma(y \cdot Ax)}} \preceq \frac{1}{4} \bfI,
\]
which is tight when \( x = 0 \).
As a result, \( L = \lambda_{\text{max}}(A^\top A) / 4 \).
We compute this manually.
We also compute the optimal value for the logistic regression problem using the
SciPy implementation of BFGS \citep{liu1989limited}.
We use this value for \( f(\xopt) \) to compute the Polyak step-size and
when plotting sub-optimality.
It turns out that the upper-bound based on \( L \)-smoothness for both
GD with the Polyak step-size \citep{hazan2019revisiting} and standard GD
\citep{bubeck2015convex} is
\[
	f(\xk) - f(\xopt) \leq \frac{2L \norm{x_0 - \xopt}_2^2}{k}.
\]

\textbf{\cref{fig:quadratics}}:
We run these experiments using vanilla NumPy.
As mentioned in the text, we generate a quadratic optimization problem
\[
	\min_{x} \half x^\top A x  - b^\top x,
\]
where the eigenvalues of \( A \) were generated to follow power law
distribution with parameter \( \alpha = 3 \).
We scaled the eigenvalues to ensure \( L = 1000 \).
The dimension of the problem we create is \( d = 300 \).
We repeat the experiment for \( 20 \) random trials and plot the mean and
standard deviations.

\textbf{\cref{fig:logistic-comparison}}:
We pick three different datasets from the UCI repository to showcase the
possible convergence behavior of the optimization methods.
We compute \( L \) and \( f(\wopt) \) as described above for
\cref{fig:rate-comparison}.
For normalized GD, we use the step-size schedule \( \etak = \eta_0 / \sqrt{k} \)
as suggested by our theory.
To pick \( \eta_0 \), we run a grid search on the grid generated by
\texttt{np.logspace(-8, 1, 20)}.
We implement AdGD from scratch and use a starting step-size of 
\( \eta_0 = 10^{-3} \).
We use the same procedure to compute the strongly adapted step-sizes as
described above.

\section{Computational Details}%
\label{app:computational-details}


The experiments in \cref{fig:quadratics} were run on a MacBook Pro
(16 inch, 2019) with a 2.6 GHz 6-Core Intel i7 CPU and 16GB of memory.
All other experiments were run on a Slurm cluster with several different node
configurations.
Our experiments on the cluster were run with nodes using
(i) Nvidia A100 GPUs (80GB or 40GB memory) or Nvidia H100-80GB GPUs
with Icelake CPUs, or (ii)
Nvidia V100-32GB or V100-16GB GPUs with Skylake CPUs.
All jobs were allocated a single GPU and 24GB of RAM.




\end{document}